%% file: main_mlr.tex
\documentclass{applemlr}
\usepackage{amsmath}
\usepackage{enumerate} 
\usepackage{amsfonts}
\usepackage{amsthm}
\usepackage{newtxtt}
\usepackage{cleveref}
\usepackage{diagbox} 
\usepackage{colortbl}
\usepackage{amssymb}
\usepackage{xspace}
\usepackage{wrapfig}
\usepackage{adjustbox}
\usepackage{tabularx}
\usepackage{booktabs}
\usepackage{mathtools}
\usepackage{tikz}
\usepackage{enumitem}
\usepackage{silence}
\usepackage{dsfont}
\usepackage[table]{xcolor}
\usepackage[dvipsnames]{xcolor}
\usepackage{multirow}
\usepackage{makecell}
\usepackage{xfakebold}
\input{math_commands}

\input{packages_and_commands_mlr.tex}
\usepackage{algorithm}
\usepackage{algorithmic}

\usepackage{tikz-cd}
\usetikzlibrary{calc}
\usepackage{pgfplots}
\usetikzlibrary{math}
\usepgfplotslibrary{groupplots}
\pgfplotsset{compat=1.3}
\usepackage{csvsimple}
\usepackage[disable]{todonotes}
\newcommand{\vnote}[1]{\todo[color=red!40, inline]{VF: #1}}
\definecolor{textgray}{HTML}{6E6E73}
\usetikzlibrary{positioning, calc}
\usetikzlibrary{decorations.pathmorphing}

\makeatletter
\patchcmd{\wrong@fontshape}{\@gobbletwo}{}{}{}
\makeatother
\numberwithin{equation}{section} 
\tcbuselibrary{minted}
\setminted[python]{
  linenos,
  breaklines,
  fontsize=\footnotesize,
  xleftmargin=2em
}

\makeatletter
\AtBeginDocument{
  \urlstyle{sf}
  
}
\makeatother

\definecolor{light}{RGB}{125, 125, 125}
\crefname{tcb@cnt@pbox}{code}{code}
\Crefname{tcb@cnt@pbox}{Code}{Code}
\crefname{assumption}{assumption}{assumption}
\Crefname{assumption}{Assumption}{Assumptions}
\Crefname{definition}{Definition}{Definitions}
\crefname{definition}{definition}{definitions}
\crefname{corollary}{corollary}{corollaries}
\Crefname{corollary}{Corollary}{Corollaries}
\crefname{proposition}{proposition}{propositions}
\Crefname{proposition}{Proposition}{Propositions}

\newtcolorbox[auto counter]{pbox}[2][]{
  colback=white,
  title=Code~\thetcbcounter: #2,
  #1,fonttitle=\sffamily,
  fontupper=\sffamily,
  arc=2pt,
  colframe=bgcolor,
  coltitle=fgcolor,
  colbacktitle=bgcolor,
  toptitle=0.25cm,
  bottomtitle=0.125cm
}

\makeatletter
\newcommand\applefootnote[1]{%
  \begingroup
  \renewcommand\thefootnote{}%
  \renewcommand\@makefntext[1]{\noindent##1}%
  \footnote{#1}%
  \addtocounter{footnote}{-1}%
  \endgroup
}
\makeatother

\definecolor{cverbbg}{gray}{0.90}

\usepackage{listings}
\title{Cram Less to Fit More: Training Data Pruning Improves Memorization of Facts}

\author{Jiayuan Ye$^\dagger$$^\ddagger$$^*$ \qquad Vitaly Feldman$^\ddagger$ \qquad Kunal Talwar$^\ddagger$}

\affiliation{$^\dagger$ National University of Singapore\qquad $^\ddagger$Apple}

\date{\sffamily\today}
\abstract{Large language models (LLMs) can struggle to memorize factual knowledge in their parameters, often leading to hallucinations and poor performance on knowledge-intensive tasks. In this paper, we formalize fact memorization from an information-theoretic perspective and study how training data distributions affect fact accuracy. We show that fact accuracy is suboptimal (below the capacity limit) whenever the amount of information contained in the training data facts exceeds model capacity. This is further exacerbated when the fact frequency distribution is skewed (e.g. a power law). 
    We propose data selection schemes based on the training loss alone that aim to limit the number of facts in the training data and flatten their frequency distribution. On semi-synthetic datasets containing high-entropy facts, our selection method effectively boosts fact accuracy to the capacity limit. When pretraining language models from scratch on an annotated Wikipedia corpus, our selection method enables a GPT2-Small model (110m parameters) to memorize 1.3X more entity facts compared to standard training, matching the performance of a 10X larger model (1.3B parameters) pretrained on the full dataset.  }

\begin{document}

\maketitle
\begingroup
\renewcommand\thefootnote{}\footnotetext{*~Research done while at Apple.}
\endgroup

\input{intro}

\input{related_works}

\input{formulations}

\input{conclusion}

\input{acks}

\bibliography{reference}
\bibliographystyle{plainnat}

\newpage

\appendix
\tableofcontents
\input{app_main}

\applefootnote{ \textcolor{textgray}{\sffamily Apple and the Apple logo are trademarks of Apple Inc., registered in the U.S. and other countries and regions.}}

\end{document}

%% file: math_commands.tex
\usepackage{amsmath,amsfonts,bm}

\def\eqref#1{equation~\ref{#1}}

\def\1{\bm{1}}

\DeclareMathAlphabet{\mathsfit}{\encodingdefault}{\sfdefault}{m}{sl}
\SetMathAlphabet{\mathsfit}{bold}{\encodingdefault}{\sfdefault}{bx}{n}

%% file: packages_and_commands_mlr.tex
\usepackage{graphicx}
\usepackage{tabularx}
\usepackage{multirow} 
\usepackage{xcolor}
\usepackage{geometry}
\usepackage{booktabs}
\usepackage{fullpage}

\usepackage[T1]{fontenc}
\usepackage[utf8]{inputenc}
\usepackage{url}

\usepackage{multirow}
\usepackage{booktabs}

\usepackage{xcolor}
\usepackage{adjustbox}
\usepackage{tabularx}
\usepackage{subcaption}

\usepackage{graphicx}

\usepackage{amsmath}
\usepackage{amsmath}
\usepackage{amssymb}
\usepackage{mathtools}
\usepackage{amsthm}
\usepackage{amsfonts}
\usepackage{bm}
\usepackage{enumitem}

\theoremstyle{plain}
\newtheorem{theorem}{Theorem}[section]
\newtheorem{proposition}[theorem]{Proposition}

\newtheorem{corollary}[theorem]{Corollary}

\theoremstyle{definition}
\newtheorem{definition}[theorem]{Definition}

\theoremstyle{remark}

\def\compileFigures{0}

\usepackage{tikz}
\if\compileFigures1
\usetikzlibrary{external}
\fi

\usepackage{tikz-cd}
\usetikzlibrary{calc}
\usepackage{pgfplots}
\usepackage{pgfplotstable}
\usepackage{subcaption}
\pgfplotsset{compat=1.17}
\usetikzlibrary{math}
\usepgfplotslibrary{groupplots}
\usepackage{csvsimple}
\usepgfplotslibrary{fillbetween}

\usepackage{bm}

\usepackage{wrapfig}

\newdimen\figrasterwd
\figrasterwd\textwidth

\usepackage{tikz}

\usepackage{caption}

%% file: intro.tex
\section{Introduction}

Machine learning models memorize training data by encoding information into their parameters. Such memorization, in many scenarios, is desirable behavior, e.g., for learning factual knowledge in the training data~\citep{roberts2020much,kadavath2022language,pagnoni2021understanding} or for generative retrieval~\citep{de2020autoregressive,tay2022transformer,pradeep2023does}.~\footnote{Memorization is also an undesirable thing in certain situations, due to overfitting or privacy concerns.} Moreover, when the training data distribution is long-tailed, which is often the case in practice~\citep{kandpal2023large,mallen2023not,zhu2014capturing,van2017devil}, the memorization of data that contain rare knowledge is known to be theoretically necessary for accurate learning and generalization~\citep{feldman2020does,brown2021memorization,feldman2025trade}.
However, it has been observed that current language models do not succeed in memorizing data in the tail of the distribution~\citep{kandpal2023large,mallen2023not}. Notably, even state-of-the-art frontier models achieve less than 50\% accuracy on challenging closed-book Q\&A~\footnote{Closed-book Q\&A refers to directly using a model to answer questions without external context or knowledge base.} benchmarks like SimpleQA~\citep{wei2024measuring} (at the time of its release). Scaling up model size~\citep{roberts2020much,kandpal2023large} is observed to boost fact accuracy after training, but only at a slow log-linear rate --- it was predicted in \citet[Figure 6]{kandpal2022deduplicating} that an exceedingly large model ($10^{20}$ parameters) would be needed for memorizing all the facts in Wikipedia to the level of human accuracy. Such scaling, if accurate, would indicate that high fact accuracy is practically impossible under any realistic model size. This raises two fundamental questions: (1) Is limited fact accuracy a theoretical inevitability, or does it arise from suboptimal training data distributions? (2) If the latter, can we design data selection schemes that approach the theoretical capacity limit? In this paper, we aim to understand and address these two questions, by defining, measuring, and ultimately boosting the memorization of facts in language models.

\input{test_fact_accuracy_vs_model_size}
\textbf{Theoretical Capacity Limits} We first investigate the theoretical capacity limit of fact memorization and fact accuracy in language models. To formulate this problem, motivated by \citet{feldman2020does,brown2021memorization,allen2023physics,morris2025much}, we define fact memorization from an information theoretic perspective, and prove a new connection (\Cref{thm:acc_based_mem_lower}) between fact accuracy  and the 2 bits/parameter memorization capacity limit established in prior works~\citep{allen2024physics,morris2025much,gu2025data}. Through this connection, we prove capacity limit for fact accuracy,  in terms of an upper bound for the maximal number of independent facts that a language model can answer accurately (\Cref{cor:fact_mem_upper}).  We then experimentally investigate whether language model reaches this fact accuracy capacity limit under different training data distributions, through a series of synthetic power law phonebook memorization benchmarks~\citep{jelassi2024mixture} with varying numbers of facts and different fact frequency distributions.~\footnote{Note that the synthetic setup is necessary for precise measurements of fact entropy in deriving the fact accuracy capacity limit, as the amount of fact information in real-world datasets (such as wikipedia articles) is challenging to define, let alone to measure. We refer to~\citet[Section 1]{allen2024physics} for more discussions.} We observe that fact accuracy is suboptimal (below the capacity limit) whenever the amount of information contained in the training data facts exceeds model capacity (\Cref{fig:capacity}). This is further exacerbated when the fact frequency distribution is skewed (e.g. a power law) as we show in Figures \ref{fig:power_gap} and \ref{fig:detailed_gap_power}. To rule out insufficient training steps (i.e., rare facts not being seen frequently enough), we also perform ablation experiments to validate that (1) a 10x larger model undergoing the same training procedure can perfectly answer all the facts; and (2) training with 8x more steps still yields similarly low fact accuracy. This shows that there is a fact accuracy gap between small and large models that gets exacerbated under non-uniform fact frequency distribution in the training dataset. In summary, the only setting where we observe language model to reach the fact accuracy capacity limit is for training on \textit{proportionally} many \textit{uniformly} distributed facts.

\textbf{Data Selection for Improved Memorization of Facts} We then propose training data selection schemes to boost fact accuracy: limiting the number of facts in the training data to avoid exceeding model capacity, and removing excessive repetitions of frequent facts to get closer to uniform fact distribution. A key challenge in real-world datasets is that fact boundaries and frequencies are unknown. To circumvent this, we utilize training loss as a proxy for fact frequency in data selection. We show in \Cref{sec:optimal_selection} that our loss-based data selection methods (1) effectively boost fact accuracy to the capacity limit for training on semi-synthetic facts (including pretraining on synthetic phonebooks and LoRA finetuning on high-entropy title-author mapping facts in arXiv papers); (2) increase the accuracy for entity-facts by 1.3X for pretraining GPT2-Small model (110m parameters) from scratch on the Wikipedia corpus, matching the fact accuracy of a 10X larger model (1.3B parameters) pretrained on the unselected full dataset. Our results show that there is significant room for improving fact accuracy, especially for small models, via training data selection.

\textbf{Summary of Results} Our main contributions are (1) we define fact memorization and use it to establish fact accuracy capacity limit in \Cref{sec:maximal_fact_memorization}; (2) we experimentally show in  \Cref{sec:cap_speed_mem_standard_training} that fact accuracy of standard training is suboptimal when the full training dataset contains too many or power-law distributed facts; (3) we propose loss-based data selection methods in \Cref{sec:optimal_selection} and  show how they boost fact accuracy under various semi-synthetic and real-world settings; (4) we additionally perform ablation experiments about the design choices and computation cost of our selection algorithms (deferred to \Cref{sec:ablation_and_discussions}).

%% file: test_fact_accuracy_vs_model_size.tex
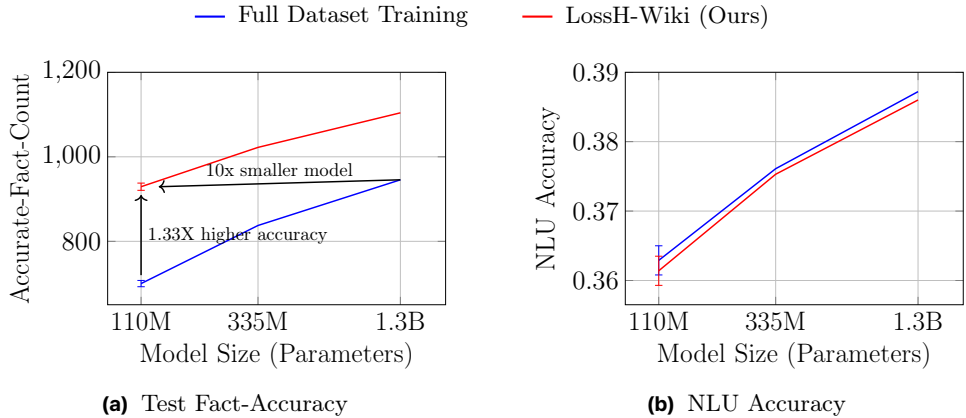
\begin{figure}
\centering
\begin{tikzpicture}[baseline=-0.5ex]
\draw[blue, thick] (0,0) -- (0.4,0);
\node[right] at (0.45,0) {\small Full Dataset Training};
\end{tikzpicture}
\hspace{1em}
\begin{tikzpicture}[baseline=-0.5ex]
\draw[red, thick] (0,0) -- (0.4,0);
\node[right] at (0.45,0) {\small LossH-Wiki (Ours)};
\end{tikzpicture}

\vspace{0.2cm}

\begin{subfigure}[b]{0.42\textwidth}
\centering
\scalebox{0.7}{\begin{tikzpicture}
\begin{axis}[
    width=8cm,
    height=6cm,
    xlabel={Model Size (Parameters)},
    ylabel={Accurate-Fact-Count},
    xmode=log,
    log basis x={10},
    xmin=80,
    xmax=2000,
    ymin=650,
    ymax=1200,
    xtick={110, 335, 1300},
    xticklabels={110M, 335M, 1.3B},
    grid=major,
    label style={font=\Large},
    tick label style={font=\Large},
    mark size=2pt,
]

\addplot[
    color=blue,
    no markers,
    thick,
] coordinates {
    (110, 700.05)
    (335, 837.53)
    (1300, 945.58)
};

\addplot[
    color=blue,
    no markers,
    error bars/.cd, y dir=both, y explicit,
] coordinates {
    (110, 700.05) +- (0, 7.42)
};

\addplot[
    color=red,
    no markers,
    thick,
] coordinates {
    (110, 929.26)
    (335, 1022.29)
    (1300, 1104.08)
};

\addplot[
    color=red,
    no markers,
    error bars/.cd, y dir=both, y explicit,
] coordinates {
    (110, 929.26) +- (0, 8.55)
};

\draw[->, thick, black] (axis cs:110, 720) -- (axis cs:110, 910)
    node[midway, right, font=\normalsize] {1.33X higher accuracy};

\draw[->, thick, black] (axis cs:1300, 945.6) -- (axis cs:130, 929.2)
    node[midway, above, font=\normalsize] {10x smaller model};

\end{axis}
\end{tikzpicture}}
\caption{Test Fact-Accuracy}
\label{fig:fact_accuracy}
\end{subfigure}%
\begin{subfigure}[b]{0.42\textwidth}
\centering
\scalebox{0.7}{\begin{tikzpicture}
\begin{axis}[
    width=8cm,
    height=6cm,
    xlabel={Model Size (Parameters)},
    ylabel={NLU Accuracy},
    xmode=log,
    log basis x={10},
    xmin=80,
    xmax=2000,
    xtick={110, 335, 1300},
    xticklabels={110M, 335M, 1.3B},
    grid=major,
    label style={font=\Large},
    tick label style={font=\Large},
    mark size=2pt,
]

\addplot[
    color=blue,
    no markers,
    thick,
] coordinates {
    (110, 0.3629)
    (335, 0.3761)
    (1300, 0.3872)
};

\addplot[
    color=blue,
    no markers,
    error bars/.cd, y dir=both, y explicit,
] coordinates {
    (110, 0.3629) +- (0, 0.0021)
};

\addplot[
    color=red,
    no markers,
    thick,
] coordinates {
    (110, 0.3614)
    (335, 0.3753)
    (1300, 0.3860)
};

\addplot[
    color=red,
    no markers,
    error bars/.cd, y dir=both, y explicit,
] coordinates {
    (110, 0.3614) +- (0, 0.0021)
};

\end{axis}
\end{tikzpicture}}
\caption{NLU Accuracy}
\label{fig:downstream_accuracy}
\end{subfigure}
\caption{Test Fact-Accuracy (in Accurate-Fact-Count) and NLU-Accuracy (natural language understanding) vs. Model Size for pretraining on annotated Wikipedia Corpus (around 3B tokens) for 66k steps with batch-size 320 and context length 1024 (around 8 epochs). Our LossH-Wiki selection \Cref{alg:cramless_wiki} significantly improves fact-accuracy without harming general natural language understanding. We show results for selection ratio $\alpha=0.2$ (110m) and $\alpha=0.3$ (335m, 1.3B) tuned to maximize the train fact-accuracy. For the small model with 110m parameters, we repeat 3 runs and report mean and standard deviation (shown in error bars). See performance under other selection ratios in \Cref{fig:lmlmwiki_loss}.}
\label{fig:accuracy_vs_model_size}
\end{figure}

%% file: related_works.tex
\section{Related work}
\label{sec:related_works}
\paragraph{Memorization Definition} There is a long line of literature on how to meaningfully define memorization for general machine learning algorithms, i.e., the information that a trained model contains about its training dataset. In the natural language modeling literature, such memorization is often intuitively defined by the probability of generating the complete training data given its prefix tokens~\citep{carlini2019secret,carlini2022quantifying,tirumala2022memorization,biderman2023emergent,biderman2023pythia}. However, such memorization definitions are often criticized for their overly small coverage, i.e., only capturing the model's prediction behavior on the exact prefix tokens of training data. To address this limitation, recent works~\citep{schwarzschild2024rethinking,morris2025much} relax the definition and define memorization by the shortest context required to generate a training data example, relating it to data compression and Kolmogorov complexity~\citep{kolmogorov1965three}. However, such memorization definitions still suffer from a lack of interpretability due to the fact that a language model can essentially predict any target string, given sufficient manipulations to the context~\citep{geiping2024coercing}. Consequently, such measurements are typically only considered meaningful when performed on training data that contains unique information, e.g., personally identifiable information~\citep{carlini2019secret,carlini2023extracting,lukas2023analyzing,biderman2023pythia}, or when they are compared to measurements on leave-one-out models~\citep{feldman2020does,feldman2020neural,zhang2023counterfactual,ye2023leave}.

Our work focuses on measuring the relevant information that the model contains via mutual information between the model and facts about the data distribution. Using mutual information to measure ``useful" information that a model contains about the data distribution is standard in ML and statistics~\citep{lindley1956measure,blumer1987occam,tishby2000information}. Recent work of \citet{brown2021memorization} also proposes using mutual information between the dataset and the model that is conditioned on the data distribution to measure memorization of irrelevant information. In particular, this information excludes any information about the data distribution. This notion has been investigated theoretically in \citep{attias2024information,feldman2025trade} and empirically in \citep{morris2025much}.

A key benefit of memorization definitions based on mutual information is their intuitive interpretations in terms of bits of information. \citet{allen2023physics,allen2024physics} derive information-theoretical connections between a loss-based memorization definition and  the bit precision and number of parameters of language models. \citet{morris2025much,gu2025data} prove similar connections between the mutual-information-based memorization definition and the bit precision and number of parameters of language models. Our work builds on these works, but focuses on fact memorization and its relationship to the memorization capacity limit.

\paragraph{Memorization Capacity Limit Analysis} Our work theoretically analyzes and experimentally validates the fact accuracy capacity limit of language models via controlled experiments on synthetic fact memorization benchmarks. We directly build upon a recent line of works that analyzes and measures memorization capacity for language model training on randomly constructed synthetic datasets~\citep{allen2024physics,zucchet2025language,gu2025data,morris2025much,jelassi2024mixture}.  To the best of our knowledge, all of the aforementioned works focus on relating memorization to training loss, while we prove new connections between  (fact) memorization and the stricter metric of fact accuracy that is more suitable for measuring performance of fact-learning. Additionally, prior works focus on random facts with uniform frequency distribution~\citep{allen2024physics,morris2025much,jelassi2024mixture}, and only perform passive analysis of memorization speed under power-law distributed facts~\citep{gu2025data,zucchet2025language}.  By contrast, we propose to \textit{actively} manipulate the training data distribution via loss-based data selection (that limits the number of facts in the training data and flattens their frequency distribution). Notably, we observe that data selection can boost fact accuracy to the (newly derived) capacity limit for training language models across a range of synthetic data distributions. Moreover, our data selection schemes apply to pretraining on general Wikipedia corpus, and effectively boosts fact accuracy and knowledge-intensive downstream task performances without harming natural language understanding.  We also remark that although \citet{gu2025data,jelassi2024mixture} propose other data paraphrasing/augmentation/mixing techniques to improve fact accuracy, their techniques are heavily dependent on the restricted set of fact formats in their synthetic setups. By contrast, our proposed data selection technique is solely based on loss and annotated fact tokens, thus more generally applicable to real-world datasets such as arXiv papers and Wikipedia datasets, as shown in our experiments.

\paragraph{Parametric Memory of Language Models} Parametric memory~\citep{roberts2020much,fevry2020entities,de2020autoregressive,yu2022generate,kandpal2023large,meng2022locating} refers to the practice of using a language model to answer questions in a closed-book manner, without access to any external information. This requires a language model to encode knowledge in its parameters, and has been a long-standing topic in understanding the knowledge capabilities of language models~\citep{petroni2019language,jiang2020can,chowdhery2023palm,shah2025beyond}. Such parametric memory, is often evaluated by closed-book  Q\&A performance on knowledge-based tasks for short-form generations~\citep{wei2024measuring,kwiatkowski2019natural,berant2013semantic,joshi2017triviaqa,mallen2023not,srivastava2023beyond}, or by factuality performance for long-form generations~\citep{min2023factscore,li2023halueval}. It has been observed~\citep{kandpal2023large,mallen2023not} that the parametric memory of a language model is limited when trained on long-tailed data distributions, especially in its  knowledge of rare facts. However, to the best of our knowledge, most of the prior investigations are experimentally conducted under fixed training algorithms and real-world datasets, where there are many confounding factors that hinder understanding whether the limited parametric memory is theoretically inevitable, or is only a consequence of suboptimal training algorithms or suboptimal training dataset. By contrast, we theoretically and experimentally analyze the capacity limit of fact accuracy under the more controlled settings of synthetic long-tailed fact datasets. Our controlled investigations show that limited fact accuracy is largely due to suboptimal training data selection. Importantly, we show that simple loss-based data selection schemes can boost fact accuracy to the capacity limit even when standard training on the full data distribution is highly suboptimal and only achieves close-to-zero fact accuracy.

\paragraph{Contextual Memory of Language Models} Although our paper focuses on boosting the parametric memory of a language model, we remark that an orthogonal line of effort addressing the limited parametric memory uses contextual memory~\citep{berant2013semantic,chen2017reading,radford2019language}, where the information is fed to the model as input context. To this end, contextual memory predominantly requires a language model's reading comprehension capabilities, thus largely alleviating the burden of memorizing all knowledge in model parameters. The most prevalent examples of contextual memory include retrieval augmented generation (RAG)~\citep{khandelwal2019generalization,karpukhin2020dense,izacard2022few,min2023nonparametric} and tool-calling~\citep{schick2023toolformer,jin2025search,parisi2022talm}. The performance and efficiency of using contextual memory crucially relies on the quality and quantity of external information sources, besides the language model's knowledge as well as reading comprehension capabilities. Due to the numerous confounding factors, it is often tricky to separate out different factors that affect contextual memory. In this paper, we solely focus on boosting language model's parametric memory, which is an orthogonal direction to improving usability of contextual memory -- it is often observed~\citep{jin2025search,zhang2025reinforcement} that the performance and efficiency of knowledge-intensive Q\&A often improve as the parametric memory of the base model becomes stronger. We show through experiments that our insights generalize to real-world dataset, e.g., boosting the fact accuracy and accuracy of knowledge downstream tasks for pretraining from scratch on Wikipedia corpus.

\paragraph{Data Selection for Language Model Training}  A key ingredient of our data selection technique is selecting and flattening the head of the distribution. Intuitively, our head selection step has a similar effect to a long line of works in the literature for selecting data that are the most aligned to end tasks~\citep{xie2023data,fan2023doge,xia2024less,grangier2024task,wang2024greats}, and our flattening operation is intuitively similar to data de-duplication~\citep{lee2022deduplicating,kandpal2022deduplicating} and diversification~\citep{tirumala2023d4,jung2025prismatic,wang2024diversity,sachdeva2024train,liu2023makes}. However, the aforementioned works largely neglect the question of the right amount of data to subselect by treating it as a hyperparameter tuning problem in experiments. By contrast, one key contribution of our paper is to provide an understanding on the right amount of data to select from the model capacity limit perspective. In other words, the optimal selection for a small model is intrinsically different from that of a large model, which is a perspective missing in the prior data selection literature.

In terms of selection score, we use loss as a proxy to approximate the weights and bits of information in training data. The  signal of loss itself is also widely used in the data selection literature to approximate sample alignment, learnability, difficulty, and diversity~\citep{lin2024not,mindermann2022prioritized,yu2024mates,engstrom2024dsdm,li2024quantity,sanyal2025upweighting}. Different from the prior data selection objectives, our data selection algorithms maximizes fact accuracy within the fact memorization capacity limit. Importantly, for training on synthetic power-law distributed random phonebook facts, we are able to experimentally show that our data selection scheme is near-optimal for the fact accuracy objective, effectively boosting fact accuracy to close to the (2 bits/parameter) capacity limit.

Lastly, we remark that our selection strategy of prioritizing low loss facts is effectively the opposite of the seminal Rho-1~\citep{lin2024not} and Rho-Loss~\citep{mindermann2022prioritized}  algorithms (that select data with high excess loss relative to a reference model). As our selection is experimentally near-optimal for memorizing independent facts, this suggests that the benefits of Rho-1~\citep{lin2024not} and Rho-Loss~\citep{mindermann2022prioritized} are likely not due to better fact memorization. Indeed, these works typically evaluate more complex learning tasks, which not only require fact memorization but also require generalization and even reasoning, making selecting low-loss samples not necessarily ideal.
We leave it as an interesting open question as to which data selection schemes achieve the ideal trade-offs between fact accuracy and other model capabilities.

%% file: formulations.tex
\section{What Fact Accuracy is Possible?}
\label{sec:maximal_fact_memorization}

In this section, we study what level of fact accuracy is theoretically possible and experimentally reachable for a language model.  We start from the necessary definitions. 

\paragraph{Facts as Deterministic Mappings} Let $\theta \in \Theta$ be the parameter that determines the data distribution, i.e., the description of the world from which we obtain the data. Intuitively, the description of a world $\theta$ includes numerous facts about the world. Each specific fact can be thought of as the answer to some question $Q$, such as ``When was Albert Einstein born?'' with the answer being determined by the data distribution. Therefore we define a fact about a world using a pair $(Q,A)$, where $Q$ is a fixed question (or prompt) that does not depend on the distribution and $A \colon \Theta \to Y$ is the mapping from world $\theta$ to the value of the answer to $Q$ in $\theta$. Here $Y$ is the range of possible values of the answer and could, for example, be all dates in some range. 

\begin{definition}[Facts about Data Distribution]
Let the training data distribution be parameterized by $\theta \in \Theta$. We say that question-answer pairs $(Q_i, A_i)_{i=1}^N$ are facts about the data distribution, if each $Q_i$ for $i\in[N]$ is a question string and each $A_i\colon \Theta \to Y_i$ is a deterministic answer function that provides the answer to $Q_i$ in $\theta$. \label[definition]{def:facts}
\end{definition}
Essentially, we treat `facts' as deterministic mappings inherent to the world state $\theta$. A learning algorithm has uncertainty about the value of $\theta$ and thus also about the values of facts in $\theta$.
We will model uncertainty about the world using a prior (meta-)distribution $\Psi$ over $\Theta$.
\begin{definition}[Dataset distribution] A dataset consists of $n$ i.i.d. samples from a data distribution $P_\theta$ parameterized by $\theta$, where the data distribution parameter $\theta$ is drawn from a meta prior distribution denoted by $\Psi$. \label[definition]{def:dataset}
\end{definition}

\begin{definition}[Learning Algorithm]
    A learning algorithm $\mathcal{A}$ takes as input a dataset $D$ and outputs a trained model  $\mathcal{A}(D)\in \mathcal{W}$ in a discrete model space $\mathcal{W}$.
    \label[definition]{def:learning_alg}
\end{definition}

We treat the task of memorizing a fact as binary: the fact is either answered correctly or incorrectly.
In this paper, we aim to maximize fact accuracy defined as follows.

\begin{definition}[Fact accuracy] \label[definition]{def:fact_acc}
    Let the training data distribution be  $\mathcal{P}_\theta$ parameterized by $\theta$.  Let $(Q_i, A_i)_{i=1}^N$ be facts underlying the training data distribution as defined per \Cref{def:facts}. We define fact accuracy of a learning algorithm given $n$ data samples on facts $(Q_i, A_i)_{i=1}^N$, as the expected fraction of correctly predicted fact answers by the trained model as follows.
    \begin{align}
        \underset{(Q_i, A_i)_{i=1}^N}{\text{Acc}} \left(\mathcal{A}; \theta, n\right) = &   \frac{\sum\limits_{i=1}\limits^N\underset{{D\sim\mathcal{P}_\theta^n, \mathcal{A}}}{\Pr}\left[f(\mathcal{A}(D); Q_i)=A_i(\theta)\right]}{N} \label{eqn:fact_acc}
    \end{align}
    where $f(\mathcal{A}(D);Q_i)$ denotes the prediction of trained model $\mathcal{A}(D)$ on fact question $Q_i$. We also refer to the numerator of \eqref{eqn:fact_acc} as accurate fact count, denoted by $\underset{(Q_i, A_i)_{i=1}^N}{\text{Acc-Cnt}} \left(\mathcal{A}; \theta, n\right)$.
\end{definition}

The prediction function $f$ may be task-specific, e.g., for multiple-choice questions, the answer is typically selected from a finite set so as to maximize its log-probability-score on the model given context $Q_i$; for free-form question answering, the prediction function is often top-k decoding of the language model given context $Q_i$ combined with text filtering (e.g., removing trailing white space). For simplicity, unless otherwise stated, we consider $f$ to be vanilla stochastic-decoding given context $Q_i$, that is, $\Pr\left[f(\mathcal{A}(D); Q_i)=a\right] = e^{-\ell(a; \mathcal{A}(D), Q_i)}$ where $\ell(a; \mathcal{A}(D), Q_i)$ is the sum of per-token cross-entropy loss of the trained model $\mathcal{A}(D)$ on answer $a$ given context $Q_i$.

\paragraph{Information-Theoretic Fact Memorization}
To study what fact accuracy is (im)possible, we now translate the well-established information capacity limit (i.e., the number of bits that a language model can memorize) to fact accuracy. The information that a learning algorithm extracts from the input dataset can be partitioned into two parts: the relevant information about the data distribution, and the remaining (or excess) memorization that is specific to the input dataset \citep{brown2021memorization,feldman2025trade,morris2025much}. We are only interested in the memorization of relevant information about the facts underlying the training data, thus we focus on a new notion of fact memorization defined as follows. (See \Cref{app:additional_mem_def_discussion} for a more detailed comparison to prior memorization definitions.) 

\begin{definition}[Fact Memorization] \label{def:fact_mem}
    Let  $\theta\sim\Psi$ be drawn from a meta distribution $\Psi$. Let $(Q_i, A_i)_{i=1}^N$ be facts about the training data distribution as defined in \Cref{def:facts}. We define fact memorization of a learning algorithm $\mathcal{A}$ about facts $(Q_i, A_i)_{i=1}^N$ as the mutual information between the fact values in $\theta$ and the trained model $\mathcal{A}(D)$, as follows.
    \begin{align}
        \underset{(Q_i, A_i)_{i=1}^N}{\text{Mem}}\left(\mathcal{A}; \Psi, n\right) = &I\left((A_1(\theta),\cdots,A_N(\theta)); \mathcal{A}(D)\right)  \text{ where }\theta\sim\Psi, D\sim\mathcal{P}_\theta^n. \label{eqn:def_mem}
    \end{align}
\end{definition}
It is straightforward to prove (\Cref{prop:axiom_mem}) that fact memorization is upper bounded by the size of the output space for the learning algorithm in the form of $ \underset{(Q_i, A_i)_{i=1}^N}{\text{Mem}}\left(\mathcal{A}; \Psi, n\right)\leq \ln|\mathcal{W}|$, analogous to similar bounds in prior works \citep[Theorem 3.2]{allen2024physics}, \citep{gu2025data} for other notions of memorization. For most practical learning algorithms, the output space $\mathcal{W}$ is indeed discrete due to the finite precision of model parameters, i.e.,   $\ln |\mathcal{W}| \propto \text{bit precision} \times \text{number of parameters}$, where bit precision typically equals $16$ or $32$ under using 16-bit or 32-bit floats. Essentially, we view  \Cref{prop:axiom_mem} as an axiom that any reasonable information-theoretic memorization definition should satisfy.

\subsection{Theoretical Capacity Limit for Fact Accuracy }  
\label{ssec:relate_fact_acc_to_mem}

We now use the $\ln|\mathcal{W}|$ capacity limit of fact memorization to understand the capacity limit of fact accuracy. The key challenge is to tightly relate fact memorization to fact accuracy. To address this challenge, we use Fano's inequality to prove a lower bound for fact memorization in terms of per-fact accuracy as follows. (Proofs are deferred to \Cref{app:proof_acc_based_mem_lower}.)

\begin{theorem}[Lower Bounding Fact Memorization by Per-fact Accuracy]\label{thm:acc_based_mem_lower}
    For any facts $(Q_i, A_i)_{i=1}^N$, any meta prior~$\Psi$ and any dataset size $n$,  we have 
    \begin{align}
    \underset{(Q_i, A_i)_{i=1}^N}{\text{Mem}}\left(\mathcal{A}; \Psi, n\right)
    \geq    
    {H}\left[(A_1(\theta), \cdots, A_N(\theta))\right]   
    - \sum_{i=1}^N\left(
     \Pr\left[I_i=0\right] 
    \cdot
    {H}\left[A_i(\theta)\mid I_i=0\right] + {H}\left[I_i\right]\right)
    \label{eqn:def_mem_pred}
    \end{align}
    where $I_i = \mathbf{1}_{f(\mathcal{A}(D);Q_i)= A_i(\theta)}$ is the accuracy indicator of the trained model on fact $i$, $f(\mathcal{A}(D);Q_i)$ denotes the prediction by the trained model  $\mathcal{A}(D)$ on question $Q_i$, and probability is over the randomness of $\mathcal{A}$, $\theta\sim\Psi$, and $D\sim\mathcal{P}_\theta^n$.  
\end{theorem}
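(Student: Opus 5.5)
We write $W=\mathcal{A}(D)$ and $\mathbf{A}=(A_1(\theta),\dots,A_N(\theta))$. The first step uses the definition of mutual information:
\[
\underset{(Q_i, A_i)_{i=1}^N}{\text{Mem}}\left(\mathcal{A}; \Psi, n\right)=H[\mathbf{A}]-H[\mathbf{A}\mid W].
\]
It therefore suffices to show that $H[\mathbf{A}\mid W]\le \sum_{i=1}^N\big(\Pr[I_i=0]\,H[A_i(\theta)\mid I_i=0]+H[I_i]\big)$. For this we apply subadditivity of conditional entropy, $H[\mathbf{A}\mid W]\le\sum_i H[A_i(\theta)\mid W]$, which reduces the claim to a per-fact Fano-type bound
\[
H[A_i(\theta)\mid W]\le H[I_i]+\Pr[I_i=0]\,H[A_i(\theta)\mid I_i=0].
\]

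To handle a possibly randomized decoding $f$, we let $R_i$ denote the decoding randomness and set $\hat a_i=f(W;Q_i)$, a function of $(W,R_i)$. We choose $R_i$ independent of $(\theta,D,W)$, so that $H[A_i\mid W]=H[A_i\mid W,R_i]\le H[A_i\mid \hat a_i]$ by data processing. We then introduce the indicator $I_i=\mathbf 1_{\hat a_i=A_i}$ via the chain rule:
\[
H[A_i\mid\hat a_i]\le H[A_i,I_i\mid \hat a_i]=H[I_i\mid\hat a_i]+H[A_i\mid \hat a_i,I_i]\le H[I_i]+H[A_i\mid\hat a_i,I_i].
\]
Next we split the last term by the value of $I_i$:
\[
H[A_i\mid\hat a_i,I_i]=\Pr[I_i=1]\,H[A_i\mid \hat a_i,I_i=1]+\Pr[I_i=0]\,H[A_i\mid\hat a_i,I_i=0].
\]
On $\{I_i=1\}$ we have $A_i=\hat a_i$, so the first conditional entropy vanishes. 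On $\{I_i=0\}$ we drop the conditioning on $\hat a_i$, which can only increase entropy, giving $H[A_i\mid \hat a_i,I_i=0]\le H[A_i\mid I_i=0]$.

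The only delicate point is this last step. Instead of the usual Fano bound $\log(|Y_i|-1)$, we keep the sharper term $H[A_i\mid I_i=0]$. Within the conditional law given $I_i=0$, "conditioning reduces entropy" applies directly, so no extra work is needed. We also need to check that the randomness of $f$ does not break the data-processing step, and this is exactly why $R_i$ is taken independent of everything else. Summing the per-fact bound over $i$ and substituting into the first identity gives \eqref{eqn:def_mem_pred}. If $\Pr[I_i=0]=0$, the corresponding term is simply omitted.
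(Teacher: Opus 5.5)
Your proof is correct and follows the same route as the paper: the identity $\text{Mem}=H[\mathbf{A}]-H[\mathbf{A}\mid W]$, then subadditivity $H[\mathbf{A}\mid W]\le\sum_i H[A_i(\theta)\mid W]$, then a per-fact Fano-type bound. The paper only cites ``Fano's inequality'' for that last step, while you derive it explicitly, keeping the sharper $\Pr[I_i=0]\,H[A_i(\theta)\mid I_i=0]$ term and using independent decoding randomness $R_i$ so the argument also covers stochastic $f$.
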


We note that in this definition, the entropy $H$ of a random variable refers to the entropy of the distribution of that random variable.
\input{figures/fig_power_gap}
\Cref{thm:acc_based_mem_lower} relates overall fact accuracy to capacity, and generally applies to facts with arbitrary entropy distributions. To the best of our knowledge,  prior works~\citep{allen2024physics,gu2025data,morris2025much,pan2025understanding} focus on relating memorization to loss,  and are thus insufficient for understanding the stricter fact accuracy metric. (We refer to \Cref{def:loss_mem_prior_works} for more details on what prior loss-based memorization lower bounds translate to under our fact memorization definition.) 

In the literature, for simplicity, fact-learning experiments typically consider synthetic independently distributed random facts with fixed entropy, such as learning phone numbers~\citep{jelassi2024mixture},  biographies~\citep{allen2023physics,allen2024physics,gu2025data,zucchet2025language}, and even random strings of fixed length~\citep{carlini2019secret,morris2025much}. In this case,  \Cref{thm:acc_based_mem_lower} yields a clean capacity upper bound on the number of accurately answerable facts, as follows. (Proofs are deferred to \Cref{appcor:fact_mem_upper}.)

\begin{corollary}[Accurate Fact Count Capacity Limit on Fixed-Entropy Random Facts] As a special case of \Cref{thm:acc_based_mem_lower}, if each answer follows uniform distribution over a discrete answer domain $\mathcal{M}_i$ with $\ln|\mathcal{M}_i|=b$, and if $A_1(\theta), A_2(\theta), \cdots, A_N(\theta)$ are independent over the meta prior $\theta\sim\Psi$,    then the accurate fact count of any learning algorithm $\mathcal{A}$ satisfies
    \begin{align}
        \underset{\theta\sim\Psi}{\mathbb{E}}\left[\underset{(Q_i, A_i)_{i=1}^N}{\text{Acc-Cnt}}\left(\mathcal{A}; \theta, n\right)\right] \leq  \frac{\ln|\mathcal{W}| + N\cdot \ln 2}{b} \label{eqn:fact_mem_upper}
    \end{align}
    \label[corollary]{cor:fact_mem_upper}
\end{corollary}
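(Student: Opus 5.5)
We combine \Cref{thm:acc_based_mem_lower} with the capacity axiom $\text{Mem}\le\ln|\mathcal{W}|$ from \Cref{prop:axiom_mem}. Under independence and uniformity, the joint entropy factorizes: $H[(A_1(\theta),\dots,A_N(\theta))]=\sum_i H[A_i(\theta)]=Nb$. For the subtracted terms, two facts suffice. First, $H[I_i]\le\ln 2$ since $I_i$ is binary. Second, $H[A_i(\theta)\mid I_i=0]\le\ln|\mathcal{M}_i|=b$, because $A_i(\theta)$ is supported on $\mathcal{M}_i$ under any conditioning. Substituting into \eqref{eqn:def_mem_pred} gives
\begin{align*}
\ln|\mathcal{W}| \;\ge\; \text{Mem} \;\ge\; Nb - \sum_{i=1}^N\bigl(\Pr[I_i=0]\, b + \ln 2\bigr) \;=\; b\sum_{i=1}^N \Pr[I_i=1] - N\ln 2 .
\end{align*}

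It remains to identify $\sum_i \Pr[I_i=1]$ with the left-hand side of \eqref{eqn:fact_mem_upper}. By \Cref{def:fact_acc}, $\text{Acc-Cnt}(\mathcal{A};\theta,n)=\sum_i\Pr_{D,\mathcal{A}}[f(\mathcal{A}(D);Q_i)=A_i(\theta)]$. Taking expectation over $\theta\sim\Psi$ and using the tower property gives exactly $\sum_i\Pr[I_i=1]$, where the probability is joint over $\theta$, $D$, $\mathcal{A}$ and the decoding randomness in $f$; this is the same joint probability appearing in \Cref{thm:acc_based_mem_lower}. Rearranging the display yields $\mathbb{E}_\theta[\text{Acc-Cnt}]\le(\ln|\mathcal{W}|+N\ln 2)/b$.

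The argument is short. The one point needing care is the conditional-entropy bound when $\Pr[I_i=0]=0$: the conditional entropy is then undefined, but it is multiplied by zero, so the term vanishes and the inequality still holds. A second check is that the randomness in the prediction $f$ is handled consistently. \Cref{thm:acc_based_mem_lower} already treats $I_i$ as a random variable over all sources of randomness, so applying it with the full joint distribution suffices.
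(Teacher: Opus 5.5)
Your proof is correct and follows the paper's argument essentially step for step. Like the paper, you factor the joint entropy as $Nb$ using independence and uniformity, bound $H[A_i(\theta)\mid I_i=0]\le b$ and $H[I_i]\le\ln 2$ inside \Cref{thm:acc_based_mem_lower}, identify $\sum_i\Pr[I_i=1]$ with $\mathbb{E}_\theta[\text{Acc-Cnt}]$, and combine with $\text{Mem}\le\ln|\mathcal{W}|$ from \Cref{prop:axiom_mem}; your added remarks on the $\Pr[I_i=0]=0$ case and on counting the decoding randomness in $I_i$ are sound clarifications that the paper leaves implicit.
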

\Cref{cor:fact_mem_upper} establishes that a language model in  discrete space $\mathcal{W}$ can accurately answer at most $ O\left(\frac{\ln|\mathcal{W}| }{b} \right)$ independent facts each with entropy $b$, if $\ln |W|\geq \Omega\left(N \cdot \ln 2\right)$ (i.e., if the model has sufficient capacity to store at least one bit of information per fact, preventing the binary entropy terms $H[I_i]$ from dominating the bound in \cref{thm:acc_based_mem_lower}). If a training algorithm precisely knows the entropy $b$ and the data records that correspond to each fact, then it can trivially achieve this $\frac{\ln|\mathcal{W}|}{b}$ capacity limit, by training on a subset of  $\frac{\ln|\mathcal{W}|}{b}$ facts until convergence, i.e., close-to-zero loss. By contrast, an algorithm that blindly trains on the full dataset intuitively only memorizes at most $\frac{\ln|\mathcal{W}|}{N}$ bits for each fact, which is insufficient to answer any single fact accurately if $\frac{\ln|\mathcal{W}|}{N}\ll b$. As we will see in our experiments, standard language model training indeed suffers from suboptimal fact accuracy when the amount of information in the training dataset exceeds model capacity.

\subsection{Experimental Evidence of Suboptimal Fact Accuracy in Standard Training}
\label{sec:cap_speed_mem_standard_training}

In this section, we experimentally investigate whether a language model reaches its fact accuracy capacity limit (\Cref{cor:fact_mem_upper}) under various training data distributions. To avoid the confounding prior knowledge in off-the-shelf language models, we focus on pretraining from scratch GPT2-style transformer models of different scales. (See \Cref{ssec:sufficient_training_setup} for the details of pretraining setups). Similar results hold for LoRA finetuning of pretrained Llama3.1-1B model, see  \Cref{app:suboptimal_fact_mem_LoRA} for more details.) To have full control of the number of facts and fact frequency distribution in the training dataset, we follow prior works~\citep{jelassi2024mixture,allen2024physics,gu2025data,zucchet2025language} and train on synthetically generated random phonebook facts, constructed as follows.

\paragraph{Simulating Long-Tail Facts via Synthetic Phonebooks} Each fact is a (name, phone number) tuple of the format <bos><6 alphabet tokens>|<22 digit tokens><eos>, where the name contains six randomly drawn alphabetical tokens from a to z, and the phone number contains 22 randomly sampled digits from 0 to 9. For tokenization,  our vocabulary is small, only containing a total of $39$ tokens including:  digit tokens 0-9, alphabet tokens a-z, separator token $|$, and special beginning of sentence <bos> and end of sentence <eos> tokens. 

To generate different number of phonebook facts $(Q_i,A_i(\theta))_{i=1}^N$ (where $Q_i$ and $A_i(\theta)$ denote the name and phone number for the $i$-th fact respectively), we sample $Q_1,\cdots,Q_N$ uniformly \textit{without} replacement from the name space, and sample the answers $A_1(\theta),\cdots,A_N(\theta)$ independently  from uniform distribution over all possible phone numbers. (This ensures that each name only maps to one phone number, and that $A_1(\theta),\cdots,A_N(\theta)$ are independent.) To vary the non-uniformity of fact frequency for $(Q_i,A_i)_{i=1}^N$ in the training data distribution,  we artificially sample each fact according to power law, i.e., $\Pr[x_j=(Q_i,A_i)]\propto 1/i^\beta, i\in[N]$, for $\beta=0, 0.5, 1.0$.

\paragraph{Fact Accuracy is Suboptimal When Data Exceeds Capacity}
 We first show in \Cref{fig:fact_acc_gap_uniform} (blue curve) that for training on uniformly random phonebook facts, the accurate fact count drops to zero whenever the number of facts in the training dataset is overly large. This is despite the training being sufficiently long for fact memorization to reach the 2 bits/parameter capacity limit, as shown in the ablation experiments in \Cref{ssec:additional_fact_memorization_phonebook} (\Cref{fig:capacity}). Intuitively, this means every fact is memorized to some extent, but no facts are fully memorized, i.e., the allocation of fact memorization capacity is suboptimal for maximizing fact accuracy.

\paragraph{Skewed Fact Distributions Widen the Capacity Gap} We further pretrain on power-law distributed phonebook facts, to understand the effect of non-uniformly distributed facts (which is often the case in real-world datasets) in the training dataset to fact memorization.  In \Cref{fig:power_gap} (blue curves), we observe that the highest achievable fact accuracy consistently drops under increasing power law exponent. This is despite the fact that the training is already sufficiently long such that an 8x longer training run yields negligible improvement, and that a 10X larger model trained on the same stream of data can answer all facts perfectly (as shown in ablation experiments in \Cref{ssec:additional_fact_memorization_phonebook}, \Cref{fig:detailed_gap_power}).

\section{Boosting Fact Accuracy via Data Selection}

\label{sec:optimal_selection}
How can we get a model to reach the maximal fact accuracy under model capacity constraints? In \Cref{sec:cap_speed_mem_standard_training}, we have observed that fact accuracy is suboptimal when the amount of information in the training data facts exceeds model capacity, especially under skewed fact frequency distributions. Notably, the only settings where fact accuracy reaches the theoretical capacity limit are for training on uniformly distributed facts whose number matches the model size. This motivates us to select training data to limit the number of facts and to flatten the fact frequency distribution (to be more uniform).

\paragraph{Our Loss-based Data Selection Algorithms} 
A key challenge is distinguishing rare from redundant facts without knowing fact frequencies, which is often the case for real-world datasets. To deal with this challenge, our key insight is that training loss serves as a proxy for fact frequency and entropy: low loss indicates a fact has been seen many times (high frequency) or is easy (low entropy), while high loss indicates rarity or high fact entropy. We propose two variants of loss-based selection in \Cref{alg:cramless}:
\begin{itemize}[noitemsep,topsep=0pt,leftmargin=*]
    \item \textbf{LossH (Head):} Select only low-loss samples, limiting training to memorizable facts.
    \item \textbf{LossHF (Head-Flattened):} Additionally downsample very low-loss samples to prevent excessive repetitions.
\end{itemize}
Both variants include the head selection step, which is \textit{necessary} for adapting the training data to the capacity of the underlying model. Meanwhile, the flattening step is \textit{optional}, and is only beneficial for further speeding up convergence when the training data is \textit{highly non-uniform} and \textit{already within model capacity}. (See \cref{sec:ablation_oracle} for ablation experiments on the isolated effects of head selection versus flattening.) Besides the central insights of capacity-aware head selection and flattening, our selection \cref{alg:cramless} also has three crucial design choices that jointly improve fact memorization.
\begin{enumerate}
    \item Fact-level selection: we compute per-fact loss and select at the unit of fact. This fact-level selection is necessary for boosting fact memorization, while standard (token-level) selection destroys fact boundaries and does not improve fact memorization (see ablation discussions in \cref{ssec:loss_selection_score}).
    \item Online loss threshold for fact selection: for each incoming batch of data, we recompute the threshold as the percentile of per-fact loss of the current model across the batch. As shown in \cref{fig:cali_speed}, this online threshold is necessary for improving the correlation between loss and the ground-truth fact frequency.
    \item To better calibrate loss to the underlying fact difficulty, we use \textit{sum of loss} over all tokens in each fact, rather than average of loss over tokens in the fact. This is because in ablation experiments (\cref{fig:cal_seq_len}), we observe that sum-of-loss yields stronger rank correlation to the fact weight-to-bits ratio. Intuitively, sum of loss indeed more closely corresponds to the semantic meaning of fact entropy.  
\end{enumerate}

We next experimentally validate that our data selection  \Cref{alg:cramless} effectively boosts fact accuracy across a broad range of synthetic and real-world settings. We defer the ablation experiments on design choices and computation cost of our selection algorithm in \cref{sec:ablation_and_discussions}.

\begin{algorithm}[t!]
	\caption{LossH / LossHF Selective Training (One Step)}
	\label{alg:cramless}
	\begin{algorithmic}
		\STATE {\bfseries Input:} data selection ratio $\alpha$, current model $\theta_t$ at iteration $t$, target batch size $b$
        \STATE {\bfseries Initialize Selected Batch:} $B_t \gets \emptyset$
        \STATE {\bfseries Data Sampling:} sample a fresh batch $B$ of data records
		\STATE {\bfseries Computing Percentile:}
        compute
        \[
        \tau = \text{lower-percentile}_\alpha
        \left(\left\{\ell(x;\theta_t): x \in B\right\}\right),
        \]
        where $\ell(x;\theta_t)$ is the \textit{sum} of per-token cross-entropy loss of $\theta_t$ on record $x$
        
        \STATE {\bfseries Selection:} for each $x \in B$, add it to $B_t$ with probability
        \STATE \hspace{1em} \textbf{LossH (Head):} 1 if $\ell(x;\theta_t)\leq \tau$ and 0 otherwise. 
        \STATE \hspace{1em} \textbf{LossHF (Head-Flattened):} $\frac{\ell(x;\theta_t)}{\tau}$ if $\ell(x;\theta_t)\leq \tau$ and 0 otherwise. 
        
        \STATE {\bfseries Batch Accumulation:} if $|B_t| < b$, repeat sampling, percentile computation, and selection
        \STATE {\bfseries Selective Training:} update $\theta_t$ on the first $b$ records in $B_t$ to obtain $\theta_{t+1}$
	\end{algorithmic}
\end{algorithm}

\input{figures/fig_data_usage}
\subsection{Semi-Synthetic Validation: Reaching the Capacity Limit}

\label{sec:selection_experiment}

As a sanity check, we first evaluate our data selection schemes for pretraining on synthetic power law phonebook dataset, across different training data distributions. All selective training experiments use the same settings as \Cref{sec:cap_speed_mem_standard_training}, except that we additionally tune the data selection ratio $\alpha$. To reduce computation cost, we fix the learning rate as 5e-5,
and only tune the optimal batch-size over $\{2560, 5120, 10240\}$ and tune  $\alpha$ via grid search over $\alpha\in\{0.1, 0.2, \cdots, 1.0\}$. We emphasize that this only makes our results stronger -- our selection algorithms are able to  outperform training on full dataset, despite less extensive learning rate and batch-size tuning.

Our results are summarized in \Cref{fig:power_gap} (red curves). Our selection method consistently improves the fact accuracy to close to capacity limit, significantly improving over  training on the full dataset especially when the training dataset contains a large number of facts. See \Cref{tab:phonebook_selection_unwei} for more detailed reports of accurate fact count in each setting, and see \Cref{app:additional_selection_results} for similar improvements in another weighted notion of fact accuracy. One caveat is that the improvement is smaller under larger power law exponent. This is largely due to the approximation error in using loss as a proxy for fact entropy and frequency, as we show in \Cref{sec:ablation_oracle} through ablation comparisons between our loss-based selection \Cref{alg:cramless} and a set of oracle-aided selection algorithms that precisely knows the underlying fact distribution. We also visualize the comparison between data selected by our method versus oracle-aided methods in \Cref{fig:hist_data_usage}.

To capture more realistic real-world high-entropy facts, we also evaluate our methods on natural author-title mapping facts in the arXiv-papers~\citep{Saga} dataset. Note that this dataset is too small for pretraining from scratch, thus we instead perform LoRA finetuning of the Llama-3.2-1B pretrained model~\citep{dubey2024llama} on  facts from a subset of $171104$ arXiv papers published in 2025 after the pretrained models' cut-off dates. For this semi-synthetic dataset, we observe that \Cref{alg:cramless} also boosts fact accuracy to the capacity limit of LoRA adapters, while not inducing any additional forgetting compared to full dataset training. (See detailed results in \Cref{app:finetune_additional_selection_results}.)
This is consistent with the recent work~\citep{sanyal2025upweighting} that proposes to upweight low-loss samples in the training objective to reduce catastrophic forgetting during finetuning.

\subsection{Boosting Wikipedia Entity Fact Accuracy in Pretraining}
\label{ssec:wiki_pretraining_main}

Finally, we validate the effect of our data selection schemes in a more general setting of pretraining on real-world knowledge-intensive Wikipedia corpus.

\paragraph{Fact-Annotated Wikipedia Corpus} We use a high-quality  Wikipedia corpus (3B tokens) from \citep{zhao2025pre} that annotates factual information on top of the OLMo2 Wikipedia mix~\citep{groeneveld2024olmo}. We post-processed their annotated corpus to remove the database calls, and only preserve the annotated boundaries of facts in the original Wikipedia articles. Each  of our training data records takes the following format.
\begin{lstlisting}
  Pterostylis stricta was first described in <|start_of_fact|>1972<|end_of_fact|> by <|start_of_fact|>Stephen Clemesha and Bruce Gray<|end_of_fact|> and the description...
\end{lstlisting}
We view each (context, fact tokens) pair in this dataset as one fact. E.g., in the above record, the first fact is $(Q_1, A_1)$ where $Q_1$ is the context string ``Pterostylis stricta was first described in '' and the answer $A_1$ is the string ``\texttt{<|start\_of\_fact|>1972<|end\_of\_fact|>}''. We use the same train, test and validation splits as \citep{zhao2025pre}, and show the number of records and facts in each split in \Cref{tab:fact_counts}, where on average each record contains around 10 facts.

\paragraph{Adapting Our Selection Algorithm to General Corpus} Our selection algorithm \Cref{alg:cramless} operates at the unit of fact, i.e., it implicitly assumes each record $x$ corresponds to one fact. In the general pretraining corpus, however, facts and non-fact natural language are mixed in each record (Wikipedia article), and thus requires modifying \Cref{alg:cramless} to correctly operate at the fact-level, while not affecting the learning of the remaining non-fact parts of the data record. To this end, we use a variant of \Cref{alg:cramless} for our Wikipedia experiments, presented in \Cref{alg:cramless_wiki}. To avoid adversely affecting the learning of non-fact content, we keep all tokens that are not fact answers, while increasing the weight of tokens in selected fact answers to keep the relative weight between fact and non-fact tokens unchanged before and after selection. 

\begin{algorithm}[t!]
	\caption{LossH-Wiki / LossHF-Wiki Selective Training (One Step)}
	\label{alg:cramless_wiki}
	\begin{algorithmic}
		\STATE {\bfseries Input:} data selection ratio $\alpha$, current model $\theta_t$ at iteration $t$, target batch size $b$
        \STATE {\bfseries Data Sampling:} sample a fresh batch $B_t$ of $b$ data records
        \STATE {\bfseries Initialize Selection Mask:} $M \gets( b \times \text{context length})$ all-zero-matrix
		\STATE {\bfseries Computing Percentile:}
        compute
        \[
        \tau = \text{lower-percentile}_\alpha
        \left(\left\{\ell(A;\theta_t, Q): (Q, A)\in \text{fact}(B)\right\}\right),
        \]
        where $\ell(A;\theta_t, Q)$ is the \textit{sum} of per-token cross-entropy loss of $\theta_t$ on answer $A$ given context $Q$.
        
        \STATE {\bfseries Selection:} for each fact $(Q,A) \in \text{fact}(B)$, set the masks in $M$ for tokens in answer $A$ to one with probability
        \STATE \hspace{1em} \textbf{LossH-Wiki (Head):} 1 if $\ell(A;\theta_t, Q)\leq \tau$ and 0 otherwise. 
        \STATE \hspace{1em} \textbf{LossHF-Wiki (Head-Flattened):} $\frac{\ell(A;\theta_t, Q)}{\tau}$ if $\ell(A;\theta_t, Q)\leq \tau$ and 0 otherwise. 
        
        \STATE {\bfseries Upscaling token masks for selected answers:} $M\leftarrow \frac{\text{all answer tokens in }B}{\text{selected answer tokens in }B} \cdot M$
        \STATE {\bfseries Including remaining tokens:} set the masks in $M$ for all tokens not in fact answers to one.
        \STATE {\bfseries Selective Training:} update $\theta_t$ via gradient of weighted sum of per-token-loss over $B_t$ weighted by mask $M$ to obtain $\theta_{t+1}$
	\end{algorithmic}
\end{algorithm}

\paragraph{Evaluation metrics} We evaluate three sets of performance metrics: (1) fact accuracy on the test split of Wikipedia corpus, measured under stochastic decoding following \Cref{def:fact_acc}, i.e., by the probability of generating $A_i$ given $Q_i$ as context (under stochastic decoding) averaged over all annotated facts $(Q_i, A_i)_{i=1,2,\cdots}$ in the test split. High test fact accuracy shows that the model not only memorizes the facts in the training dataset, but also successfully uses that knowledge to answer facts accurately in fresh test cases. As a sanity check, we also evaluate the train fact accuracy in \Cref{fig:lmlmwiki_detail}, where the trend is similar. (2) Knowledge-MMLU accuracy measured as the average accuracy over a subset of MMLU tasks that target world knowledge, including high-school-us-history, high-school-european-history, world-religions,  clinical-knowledge, global-facts, human-aging, medical-genetics, nutrition, virology, high-school-geography, and human-sexuality.   (3) General capability accuracy  measured by the average accuracy over a set of standard NLU tasks following \citet{zhao2025pre}, including CommonsenseQA, HellaSwag, PIQA, Social IQA, ARC-Easy.

\input{lmlmwiki_loss_plots}

\paragraph{Results} We pretrain small language models with 110m parameters from scratch on the annotated Wikipedia Corpus for around 8 epochs, using the same hyperparameters as \citet{zhao2025pre}. Our results for 110m model under full data selection ($\alpha=1$) matches the performance reported for 124m model in \citet[Table 12]{zhao2025pre}. See \Cref{app:wikipedia_additional_results} for the detailed setups. 

Our main observation is that our LossH-Wiki selection \Cref{alg:cramless_wiki} improves fact accuracy (\Cref{fig:test_fact_accuracy_alpha}) and downstream MMLU-Knowledge accuracy (\Cref{fig:MMLU_accuracy_alpha}) without harming general natural language understanding (\Cref{fig:NLU_accuracy_alpha}). Namely,   \Cref{fig:test_fact_accuracy_alpha} shows that our LossH-Wiki selection significantly improves the test fact accuracy compared to full dataset training ($\alpha=1.0$), with the optimal selection ratio being $\alpha=0.2$. In  \Cref{fig:MMLU_accuracy_alpha}, we observe similar improvement in  MMLU-Knowledge accuracy under dataset selection ratio $\alpha=0.5$ compared to full dataset training ($\alpha=1.0$), although the improvement is noisier and smaller in scale, potentially due to the limited number of evaluation questions in MMLU subtasks that target world knowledge. As a sanity check, we also evaluate on the full MMLU benchmark in \Cref{fig:lmlmwiki_detail}, where the signal becomes even more noisy, potentially because the remaining subtasks in MMLU require a mix of knowledge and reasoning capabilities, thus diluting the fact accuracy improvements coming from our selection algorithms. We comment that high variation of Q\&A accuracy is widely observed in the literature, and it is a long-standing challenge to evaluate knowledge capabilities of language model in an accurate and stable manner. Lastly, for natural language understanding tasks,  \Cref{fig:NLU_accuracy_alpha} validates that our data selection does not harm the general capabilities of trained model: NLU accuracy remains stable across different selection ratios, except for an exceedingly small selection ratio $\alpha=0.1$.

To understand the scale of improvement, we additionally pretrain two larger models with 335m and 1.3B parameters on the full annotated Wikipedia Corpus using the same hyperparameters, and show their performances in \Cref{fig:lmlmwiki_loss} (dashed lines). As expected, these larger models generally have stronger performances than the small 110m model that is also trained on the full Wikipedia Corpus (i.e., $\alpha=1.0$). However, this gap shrinks significantly under our data selection schemes: the test fact accuracy of 110m models trained with selection ratio $\alpha=0.2$ matches the test fact accuracy of a 10X larger 1.3B model trained on full dataset, and the MMLU-Knowledge accuracy of 110m model trained with selection ratio $\alpha=0.5$ is within standard deviation of that of a 3X larger 335m model trained on full dataset. This shows that the improvements coming from our data selection algorithms are significant.

\paragraph{Discussion on Additional Baselines} We now discuss how our selection \cref{alg:cramless_wiki} compares to standard baseline selection methods in the literature, such as random pruning and deduplication. We performed random fact-level pruning experiments in \cref{tab:random_vs_ours} confirming that naive random selection consistently underperforms our method: random selection can only match full dataset training at smaller selection ratios, but cannot improve fact accuracy. Another commonly considered baseline is training data deduplication. To this end, we note that we used an existing high-quality Wikipedia corpus~\cite{zhao2025pre,groeneveld2024olmo}, which is already deduplicated following standard procedures. Thus our gains are on top of those achieved by deduplication. 
\begin{table}[htbp]
\centering
\caption{LossH-Wiki \cref{alg:cramless_wiki} (Ours) vs.\ Random Baseline for pretraining 110m model on annotated Wikipedia corpus. The \textbf{Random baseline} selects a fixed $\alpha$ fraction of facts via content hash (same facts every epoch), keeping all non-fact tokens, with $1/\alpha$ rescaling of selected fact tokens. }
\label{tab:random_vs_ours}
\begin{tabular}{ccccc}
\toprule
$\alpha$ & Random Fact-Acc & Ours Fact-Acc & Random NLU & Ours NLU \\
\midrule
0.1 & 600.55 & \textbf{835.81} (5.78) & 0.3583 & 0.3606 \\
0.2 & 647.33 & \textbf{929.25} (3.49) & 0.3602 & 0.3604 \\
0.3 & 657.25 & \textbf{914.87} (12.18) & 0.3643 & 0.3635 \\
0.4 & 666.84 & \textbf{880.15} (7.26) & 0.3593 & 0.3598 \\
0.5 & 676.29 & \textbf{851.69} (8.41) & 0.3643 & 0.3607 \\
0.6 & 687.23 & \textbf{829.15} (7.54) & 0.3597 & 0.3629 \\
0.7 & 685.46 & \textbf{790.76} (7.62) & 0.3580 & 0.3646 \\
0.8 & 687.72 & \textbf{763.60} (6.67) & 0.3623 & 0.3613 \\
0.9 & 690.86 & \textbf{733.23} (4.89) & 0.3638 & 0.3633 \\
1.0 & 696.11 (10.28) & 696.11 (10.28) & 0.3636 (0.0036) & 0.3636 (0.0036) \\
\bottomrule
\end{tabular}
\end{table}

%% file: figures/fig_power_gap.tex
\begin{figure*}
\centering
\begin{subfigure}[b]{5.6cm}
\centering
\scalebox{0.45}{\begin{tikzpicture}
\begin{axis}[
    width=10cm,
    height=8cm,
    xlabel={\# Facts in Training Dataset},
    ylabel={Accurate Fact Count (millions)},
    xmin=0, xmax=10240000,
    xtick={0, 5120000, 10240000},
    xticklabels={0, 5.12m, 10.24m},
    ymin=0, ymax=3500000,
    ytick={0, 1280000, 2560000},
    yticklabels={0, 1.28m, 2.56m},
    scaled ticks=false,
    grid=major,
    legend pos=south east,
    legend style={font=\Large},
    label style={font=\Large},
    tick label style={font=\Large},
    mark size=2pt,
]

\addplot [red, mark=*, thick] table[col sep=comma,x expr=\thisrow{num sample},y expr=\thisrow{loss_truncate_head}*\thisrow{num sample}] {evals/figure_data/df_facts_mem_gap_uniform_filter.csv};
\addlegendentry{Our LossHF}

\addplot [blue, mark=square*, thick] table[col sep=comma,x=x,y=y_short] {evals/figure_data/df_facts_mem_gap_uniform.csv};
\addlegendentry{Full Dataset}

\addplot[gray, thick] table[x expr=\thisrow{x}*0.693/50.656, y expr=\thisrow{y}*0.693/50.656, col sep=comma]{evals/figure_data/df_bits_mem_gap_horizontal_line.csv};
\addlegendentry{Capacity limit}

\addplot[black, dashed, thick] table[x expr=\thisrow{x}*0.693/50.656, y expr=\thisrow{y}*0.693/50.656, col sep=comma] {evals/figure_data/df_bits_mem_gap_diagonal_line.csv};
\addlegendentry{$y=x$}

\end{axis}
\end{tikzpicture}}
\caption{Uniform}
\label{fig:fact_acc_gap_uniform}
\end{subfigure}%
\begin{subfigure}[b]{5.6cm}
\centering
\scalebox{0.45}{\begin{tikzpicture}
\begin{axis}[
    width=10cm,
    height=8cm,
    xlabel={Num. Facts in Training},
    xmin=0, xmax=10240000,
    xtick={0, 5120000, 10240000},
    xticklabels={0, 5.12m, 10.24m},
    ymin=0, ymax=3500000,
    ytick={0, 1280000, 2560000},
    yticklabels={},
    scaled ticks=false,
    grid=major,
    label style={font=\Large},
    tick label style={font=\Large},
    mark size=2pt,
]

\addplot [red, mark=*, thick] table[col sep=comma,x expr=\thisrow{num sample},y expr=\thisrow{loss_truncate_head}*\thisrow{num sample}] {evals/figure_data/df_facts_mem_gap_power_0_5_filter.csv};

\addplot [blue, mark=square*, thick] table[col sep=comma,x=x,y=y_short] {evals/figure_data/df_facts_mem_gap_L12D768H12_power_0_5.csv};

\addplot[gray, thick] table[x expr=\thisrow{x}*0.693/50.656, y expr=\thisrow{y}*0.693/50.656, col sep=comma]{evals/figure_data/df_bits_mem_gap_horizontal_line.csv};

\addplot[black, dashed, thick] table[x expr=\thisrow{x}*0.693/50.656, y expr=\thisrow{y}*0.693/50.656, col sep=comma] {evals/figure_data/df_bits_mem_gap_diagonal_line.csv};

\end{axis}
\end{tikzpicture}}
\caption{Power law (exponent 0.5)}
\label{fig:fact_acc_gap_power_0_5}
\end{subfigure}%
\begin{subfigure}[b]{5.6cm}
\centering
\scalebox{0.45}{\begin{tikzpicture}
\begin{axis}[
    width=10cm,
    height=8cm,
    xlabel={Num. Facts in Training},
    xmin=0, xmax=10240000,
    xtick={0, 5120000, 10240000},
    xticklabels={0, 5.12m, 10.24m},
    ymin=0, ymax=3500000,
    ytick={0, 1280000, 2560000},
    yticklabels={},
    scaled ticks=false,
    grid=major,
    label style={font=\Large},
    tick label style={font=\Large},
    mark size=2pt,
]

\addplot [red, mark=*, thick] table[col sep=comma,x expr=\thisrow{num sample},y expr=\thisrow{loss_truncate_head}*\thisrow{num sample}] {evals/figure_data/df_facts_mem_gap_power_1_0_filter.csv};

\addplot [blue, mark=square*, thick] table[col sep=comma,x=num_sample,y=y_short] {evals/figure_data/df_facts_mem_gap_L12D768H12_power_1_0.csv};

\addplot[gray, thick] table[x expr=\thisrow{x}*0.693/50.656, y expr=\thisrow{y}*0.693/50.656, col sep=comma]{evals/figure_data/df_bits_mem_gap_horizontal_line.csv};

\addplot[black, dashed, thick] table[x expr=\thisrow{x}*0.693/50.656, y expr=\thisrow{y}*0.693/50.656, col sep=comma] {evals/figure_data/df_bits_mem_gap_diagonal_line.csv};

\end{axis}
\end{tikzpicture}}
\caption{Power law (exponent 1.0)}
\label{fig:fact_acc_gap_power_1_0}
\end{subfigure}
\caption{Fact accuracy (in accurate fact count) of a 110m parameter model pretrained on power-law distributed phonebook facts. Our LossHF selection \Cref{alg:cramless} boosts fact accuracy to close to capacity limit, significantly outperforming training on the full dataset. See \Cref{tab:phonebook_selection_unwei} for detailed performance. Accurate fact count capacity limit is computed following \Cref{cor:fact_mem_upper} as follows: $(2\text{bits/param}) \times (110\text{M params}) / (22 \times \log_2(10) \text{ bits/fact}) = 3.01\text{m facts}$}
\label{fig:power_gap}
\end{figure*}
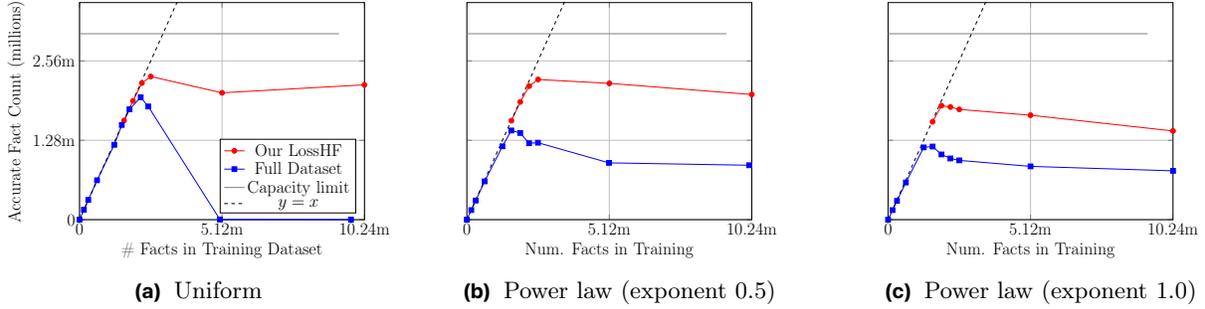

%% file: figures/fig_data_usage.tex
\begin{figure}[htbp]
\vspace{0.3cm}
\begin{subfigure}[b]{0.42\textwidth}
\centering
\includegraphics[width=\textwidth]{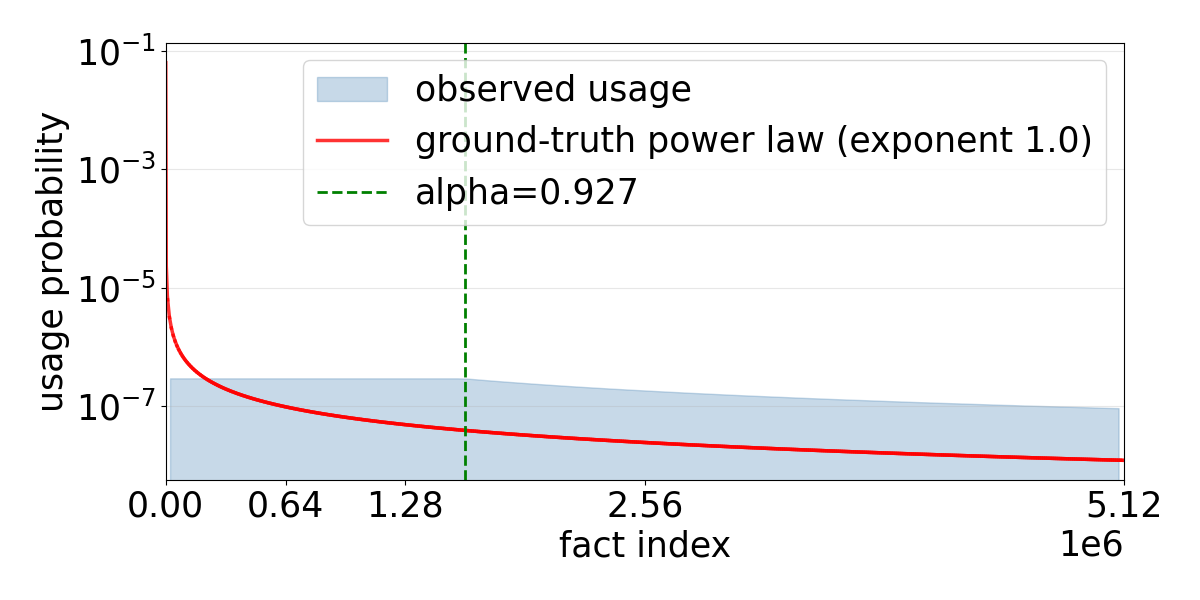}
\caption{Flattened}
\label{fig:hist_truncate}
\end{subfigure}
\hspace{1cm}
\begin{subfigure}[b]{0.42\textwidth}
\centering
\includegraphics[width=\textwidth]{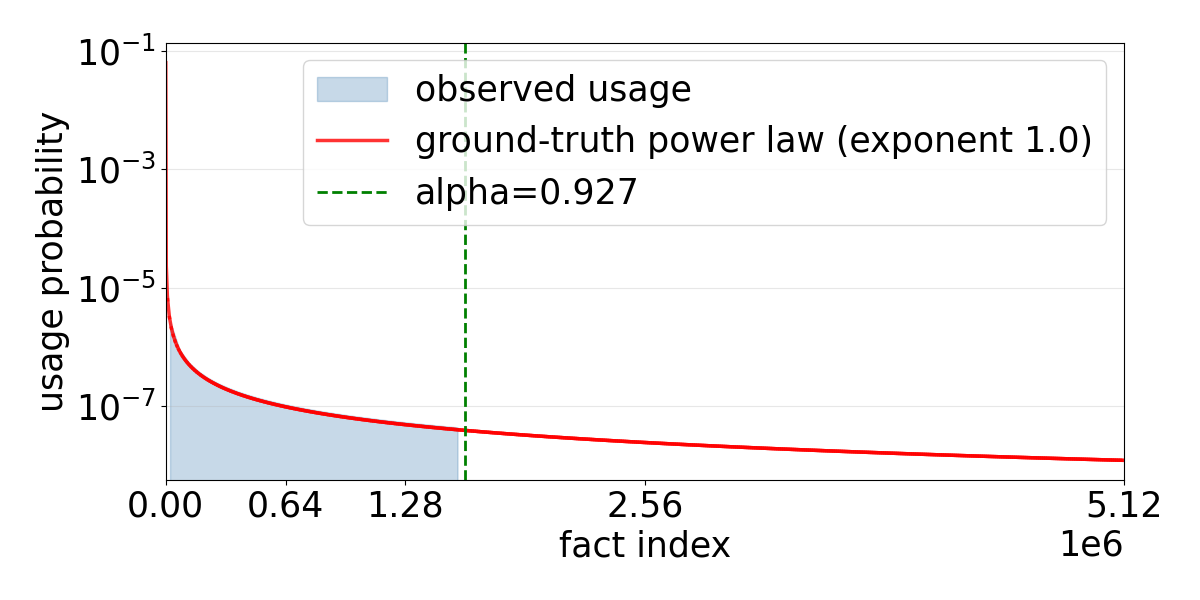}
\caption{Head}
\label{fig:hist_head}
\end{subfigure}
\hfill
\begin{subfigure}[b]{0.42\textwidth}
\centering
\includegraphics[width=\textwidth]{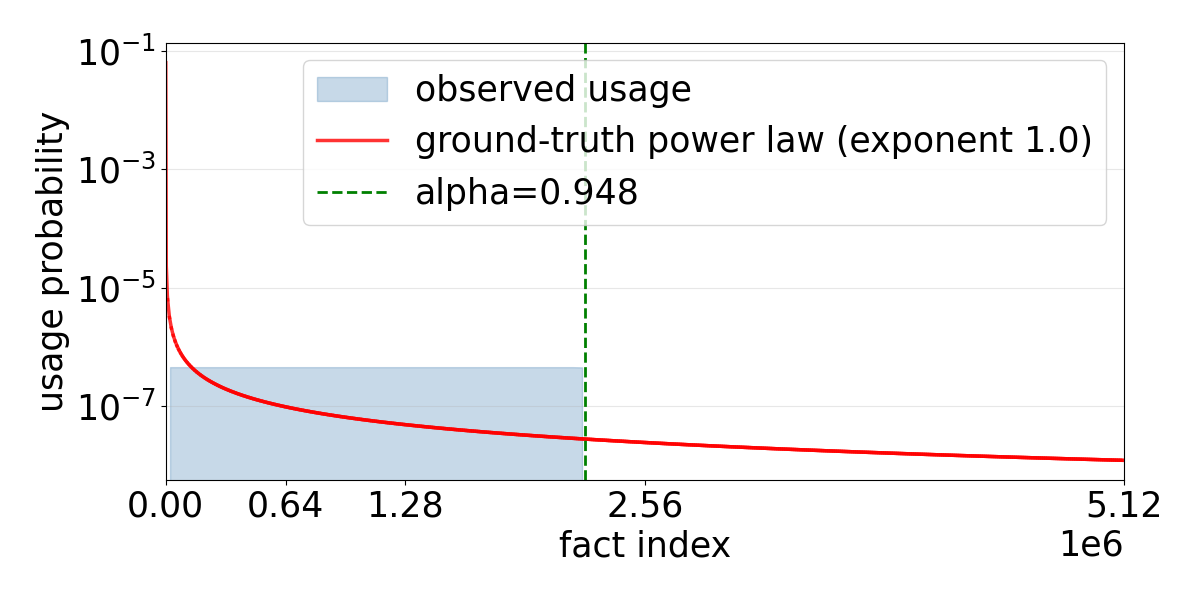}
\caption{Head-Flattened}
\label{fig:hist_truncate_head}
\end{subfigure}
\hspace{1cm}
\centering
\begin{subfigure}[b]{0.42\textwidth}
\centering
\includegraphics[width=\textwidth]{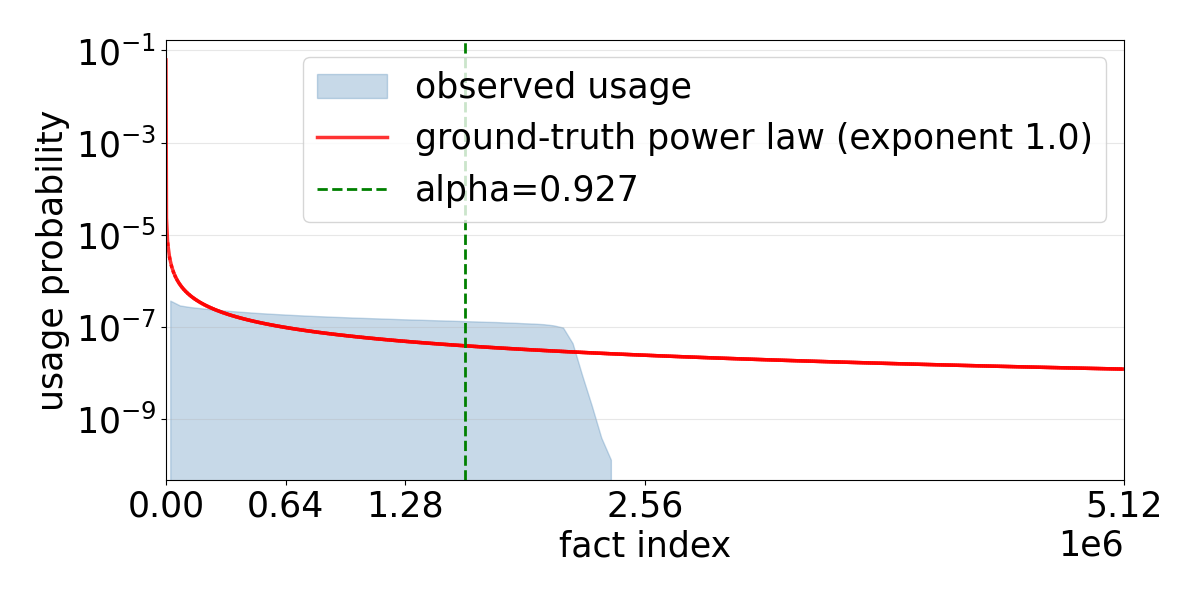}
\caption{LossHF (\Cref{alg:cramless})}
\label{fig:hist_cramless}
\end{subfigure}
\caption{Examples of normalized data usage histogram under different selection algorithms (including the oracle-aided methods described in \Cref{sec:ablation_oracle} and our loss-based selection \Cref{alg:cramless}) for training on power-law distributed synthetic phonebook datasets. All settings consider power law exponent $1.0$, number of facts in the training dataset $5120000$, and selection ratio $\alpha=0.927$ for Flattened,  Head, and LossHF \Cref{alg:cramless} and $\alpha=0.948$ for Head-Flattened.}
\label{fig:hist_data_usage}
\end{figure}

%% file: lmlmwiki_loss_plots.tex
\pgfplotstableread[col sep=comma]{lmlmwiki_loss_mean_std_110m.csv}\datatableA
\pgfplotstableread[col sep=comma]{lmlmwiki_loss_mean_std_335m.csv}\datatableB
\pgfplotstableread[col sep=comma]{lmlmwiki_loss_mean_std_1_3B.csv}\datatableC

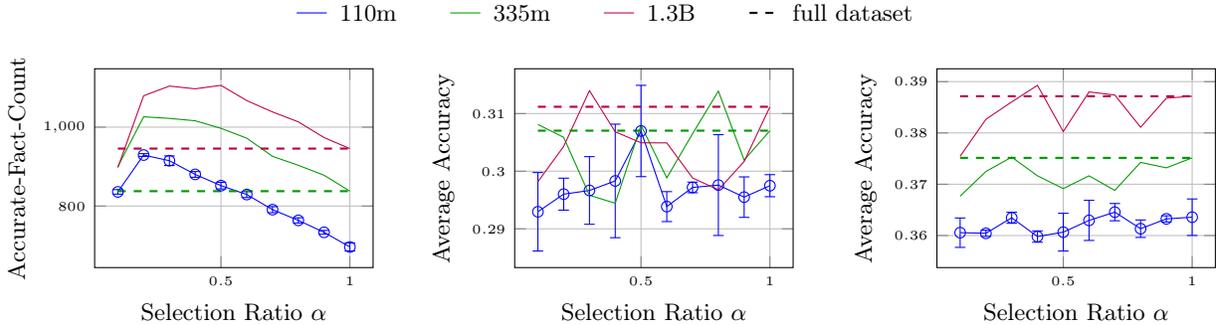
\begin{figure*}[htbp]
\centering

\begin{tikzpicture}[baseline=-0.5ex]
\draw[blue] (0,0) -- (0.4,0);
\node[right] at (0.45,0) {\small 110m};
\end{tikzpicture}
\hspace{1em}
\begin{tikzpicture}[baseline=-0.5ex]
\draw[green!60!black] (0,0) -- (0.4,0);
\node[right] at (0.45,0) {\small 335m};
\end{tikzpicture}
\hspace{1em}
\begin{tikzpicture}[baseline=-0.5ex]
\draw[purple] (0,0) -- (0.4,0);
\node[right] at (0.45,0) {\small 1.3B};
\end{tikzpicture}
\hspace{1em}
\begin{tikzpicture}[baseline=-0.5ex]
\draw[dashed, thick] (0,0) -- (0.4,0);
\node[right] at (0.45,0) {\small full dataset};
\end{tikzpicture}

\vspace{0.2cm}

\begin{subfigure}[t]{0.32\textwidth}
\centering
\begin{tikzpicture}
\begin{axis}[
   width=\textwidth,
   height=0.8\textwidth,
   xlabel={Selection Ratio $\alpha$},
   ylabel={Accurate-Fact-Count},
   grid=major,
   tick label style={font=\tiny},
   label style={font=\small},
]
\addplot[mark=o, blue, error bars/.cd, y dir=both, y explicit] table[x=fil_rate, y expr=\thisrow{test_perfect_fact_full_acc_mean}*2389+\thisrow{test_missed_corrected_fact_full_acc_mean}*4746, y error expr=\thisrow{test_perfect_fact_full_acc_std}*2389+\thisrow{test_missed_corrected_fact_full_acc_std}*4746] {\datatableA};
\addplot[no markers, green!60!black] table[x=fil_rate, y expr=\thisrow{test_perfect_fact_full_acc_mean}*2389+\thisrow{test_missed_corrected_fact_full_acc_mean}*4746] {\datatableB};
\addplot[no markers, purple] table[x=fil_rate, y expr=\thisrow{test_perfect_fact_full_acc_mean}*2389+\thisrow{test_missed_corrected_fact_full_acc_mean}*4746] {\datatableC};
\addplot[green!60!black, dashed, thick, no marks, domain=0.1:1.0, samples=2] {837.61};
\addplot[purple, dashed, thick, no marks, domain=0.1:1.0, samples=2] {945.62};
\end{axis}
\end{tikzpicture}
\caption{Fact-Accuracy (in Accurate-Fact-Count) on Test Split (containing 7135 facts)}\label{fig:test_fact_accuracy_alpha}
\end{subfigure}
\hfill
\begin{subfigure}[t]{0.32\textwidth}
\centering
\begin{tikzpicture}
\begin{axis}[
   width=\textwidth,
   height=0.8\textwidth,
   xlabel={Selection Ratio $\alpha$},
   ylabel={Average Accuracy},
   grid=major,
   tick label style={font=\tiny},
   label style={font=\small},
]
\addplot[mark=o, blue, error bars/.cd, y dir=both, y explicit] table[x=fil_rate, y=MMLU_knowledge_mean, y error=MMLU_knowledge_std] {\datatableA};
\addplot[no markers, green!60!black] table[x=fil_rate, y=MMLU_knowledge_mean] {\datatableB};
\addplot[no markers, purple] table[x=fil_rate, y=MMLU_knowledge_mean] {\datatableC};
\addplot[green!60!black, dashed, thick, no marks, domain=0.1:1.0, samples=2] {0.307030};
\addplot[purple, dashed, thick, no marks, domain=0.1:1.0, samples=2] {0.311177};
\end{axis}
\end{tikzpicture}
\caption{MMLU-Knowledge Accuracy}\label{fig:MMLU_accuracy_alpha}
\end{subfigure}
\hfill
\begin{subfigure}[t]{0.32\textwidth}
\centering
\begin{tikzpicture}
\begin{axis}[
   width=\textwidth,
   height=0.8\textwidth,
   xlabel={Selection Ratio $\alpha$},
   ylabel={Average Accuracy},
   grid=major,
   tick label style={font=\tiny},
   label style={font=\small},
]
\addplot[mark=o, blue, error bars/.cd, y dir=both, y explicit] table[x=fil_rate, y=general_mean, y error=general_std] {\datatableA};
\addplot[no markers, green!60!black] table[x=fil_rate, y=general_mean] {\datatableB};
\addplot[no markers, purple] table[x=fil_rate, y=general_mean] {\datatableC};
\addplot[green!60!black, dashed, thick, no marks, domain=0.1:1.0, samples=2] {0.375138};
\addplot[purple, dashed, thick, no marks, domain=0.1:1.0, samples=2] {0.387152};
\end{axis}
\end{tikzpicture}
\caption{NLU-Accuracy (CommonsenseQA, HellaSwag, PIQA, SIQA, ARC-Easy)}\label{fig:NLU_accuracy_alpha}
\end{subfigure}

\caption{Performance versus selection ratio $\alpha$ for using our LossH-Wiki data selection \cref{alg:cramless_wiki} for pretraining on annotated Wikipedia Corpus (3B tokens) with 66k steps and batch-size 320 (roughly 8 epochs). For the 110m model, we repeat 3 runs and report mean and standard deviation (shown in error bars). Dashed lines show full dataset training baselines for 335m and 1.3B for ease of comparison.}
\label{fig:lmlmwiki_loss}
\end{figure*}

%% file: conclusion.tex
\section{Conclusion}

In summary, our findings suggest that rather than indiscriminately scaling data, practitioners should curate training sets to match model capacity. This has implications for both efficiency (smaller models achieving competitive fact accuracy) and sustainability (reduced computational requirements for training models to memorize facts).

Our work suggests several directions for future research. In gradient-based training, memorization of new information often harms the memorization of old information, i.e., causes catastrophic forgetting. It is conceivable that by adapting our selection schemes to alleviate forgetting in training, e.g., via data replay~\citep{buzzega2020dark,verwimp2021rehearsal,li2025tic}, the speed of fact memorization could be further improved. We only consider standalone dense transformer models throughout the paper and an interesting direction is to boost fact memorization capacity and efficiency via adapting model architectures, e.g., via  mixture-of-experts (MoE)~\citep{jiang2024mixtral,jelassi2024mixture} or specialized memory architectures~\citep{cheng2026conditional,pouransari2025pretraining,weston2014memory}. Finally, our selection operates at the fact-level, and requires knowledge for the fact boundaries (such as annotation for fact tokens in the Wikipedia Corpus provided by \citet{zhao2025pre}),  while many other real-world datasets may not have clear fact formats or boundaries. It remains a challenge to design the right selection unit for boosting fact memorization on more general real-world datasets. That being said, data annotation can typically be done via a lightweight language model. In particular, it is easy to scale with the training data. Some additional recent work showing the scalability of annotation can be found in \citep{min2023factscore,maini2024rephrasing,su2025nemotron,rathi2026shaping}.

%% file: acks.tex
\section*{Acknowledgements}

The authors thank David Grangier and Skyler Seto for the help in setting up pretraining experiments, Szilvia Ujvary, Richard He Bai and Bowen Jin for insightful discussions on the literature,  Satyen Kale for valuable discussions and feedbacks on the optimization setup, and Skyler Seto and Hilal Asi for helpful suggestions on earlier drafts. Jiayuan Ye is supported by the Apple Scholars in AI/ML PhD fellowship.

%% file: app_main.tex
\input{app_additional_mem_def}
\input{app_what_does_not_work_for_est_mem}

\input{app_suboptimal_fact_mem}

\input{app_additional_selection_results}

\input{ablations}

%% file: app_additional_mem_def.tex
\section{Additional Discussions on Memorization Definitions}
\label{app:additional_mem_def_discussion}

\subsection{Deferred Proof for Fact Accuracy Capacity Limit}
\label{app:proof_acc_based_mem_lower}
We first show how to prove the following proposition that says memorization is upper bounded by the size of the algorithm's output space.
\begin{proposition}[Fact Memorization Capacity Limit under Fixed Model Capacity]
    For any facts $(Q_i, A_i)_{i=1}^N$, any meta prior  $\Psi$ and any dataset size $n$, the fact memorization of a learning algorithm $\mathcal{A}$ with discrete output model parameters space $\mathcal{W}$ satisfies the following upper bound.
    \begin{align}
        \underset{(Q_i, A_i)_{i=1}^N}{\text{Mem}}\left(\mathcal{A}; \Psi, n\right)\leq \ln|\mathcal{W}| \label{eqn:axiom_mem}
    \end{align}
    where $|\mathcal{W}|$ is the cardinality of discrete model parameters space $\mathcal{W}$. \label[proposition]{prop:axiom_mem}
\end{proposition}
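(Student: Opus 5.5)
The plan is to use the elementary chain of information-theoretic inequalities that bounds any mutual information by the entropy of one of its arguments. Write $W=\mathcal{A}(D)$ and $F=(A_1(\theta),\dots,A_N(\theta))$, where $\theta\sim\Psi$ and $D\sim\mathcal{P}_\theta^n$. By \Cref{def:fact_mem}, the quantity to bound is $I(F;W)$.

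\textbf{Step 1: reduce to the entropy of $W$.} The first step is the identity
\begin{align*}
I(F;W) = H[W] - H[W\mid F].
\end{align*}
Since $\mathcal{W}$ is discrete (\Cref{def:learning_alg}), $W$ is a discrete random variable. Its conditional entropy is therefore nonnegative, $H[W\mid F]\geq 0$, and so $I(F;W)\leq H[W]$. This holds whatever the joint law of $(F,W)$ is, including any extra randomness of $\mathcal{A}$. The randomness from $\Psi$, from sampling $D$, and from $\mathcal{A}$ is all absorbed into that joint law.

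\textbf{Step 2: bound $H[W]$ by $\ln|\mathcal{W}|$.} Any distribution on a finite set of size $|\mathcal{W}|$ has entropy at most $\ln|\mathcal{W}|$, with equality for the uniform distribution. To justify this, I will apply Jensen's inequality to the concave function $\ln$:
\begin{align*}
H[W]=\sum_w p(w)\ln\frac{1}{p(w)}\leq \ln\sum_{w:\,p(w)>0} 1\leq \ln|\mathcal{W}|.
\end{align*}
Chaining Steps 1 and 2 gives \eqref{eqn:axiom_mem}, with natural logarithms, consistent with the paper's use of $\ln$.

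\textbf{Main subtlety.} The only real care point is that $F$ may be continuous or have infinite support, since the answer ranges $Y_i$ are arbitrary. In that case $H[F\mid W]$ could be ill-defined. This is why I will use the decomposition through $H[W]-H[W\mid F]$ rather than through $H[F]-H[F\mid W]$: the former only needs $W$ to be discrete.

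To be rigorous in full generality, I will write mutual information as $\mathbb{E}_{F}\bigl[D_{\mathrm{KL}}(P_{W\mid F}\,\|\,P_W)\bigr]$. Then $I(F;W)=H[W]-\mathbb{E}_F\bigl[H[P_{W\mid F}]\bigr]$, and the conditional entropies $H[P_{W\mid F}]$ are entropies of discrete distributions on $\mathcal{W}$, hence nonnegative. If $\mathcal{W}$ is infinite, the bound is trivial. So the argument needs only finiteness of $\mathcal{W}$ and no assumptions on $\Psi$, on the facts, or on $n$.
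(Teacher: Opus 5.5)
Your proposal is correct and follows essentially the same route as the paper: write $I(F;W)=H[W]-H[W\mid F]$, drop the nonnegative conditional entropy, and bound $H[W]\le\ln|\mathcal{W}|$ because the uniform distribution maximizes entropy. Your additional remarks (justifying that bound via Jensen, and handling possibly non-discrete $F$ through the KL form of mutual information) add rigor but do not change the argument.
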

\begin{proof}
    By definition, 
    \begin{align}
        \underset{(Q_i, A_i)_{i=1}^N}{\text{Mem}}\left(\mathcal{A}; \Psi, n\right) = & I((A_1(\theta),\cdots,A_N(\theta)),\mathcal{A}(D))\nonumber\\
        = & H(\mathcal{A}(D)) - H(\mathcal{A}(D)|A_1(\theta),\cdots,A_N(\theta)) \nonumber\\
        \leq & H(\mathcal{A}(D)) \leq \ln|\mathcal{W}|\nonumber
    \end{align}
    where the last inequality is by the fact that the maximal entropy distribution over discrete space $\mathcal{W}$ is the uniform distribution with entropy $\ln|\mathcal{W}|$.
\end{proof}

We now prove \Cref{thm:acc_based_mem_lower} for relating fact memorization to per-fact accuracy.

\begin{theorem}[Fact Memorization Lower Bound by Per-fact accuracy]\label{appthm:acc_based_mem_lower}
    For any facts $(Q_i, A_i)_{i=1}^N$, any meta prior~$\Psi$ and any dataset size $n$, we have 
    \begin{align}
    \underset{(Q_i, A_i)_{i=1}^N}{\text{Mem}}\left(\mathcal{A}; \Psi, n\right)
    \geq
    {H}\left[(A_1(\theta), \cdots, A_N(\theta))\right]
    - \sum_{i=1}^N\left(
   \Pr\left[I_i=0\right] 
    \cdot
    {H}\left[A_i(\theta)\mid I_i=0\right] + {H}\left[I_i\right]\right)
    \end{align}
    where $I_i = \mathbf{1}_{f(\mathcal{A}(D);Q_i)= A_i(\theta)}$ is the accuracy indicator of the trained model on fact $i$, $f(\mathcal{A}(D);Q_i)$ denotes the prediction by the trained model  $\mathcal{A}(D)$ on question $Q_i$, and the entropy $H$ and probability $\Pr$ are over the randomness of $\mathcal{A}$, $\theta\sim\Psi$, and $D\sim\mathcal{P}_\theta^n$.
\end{theorem}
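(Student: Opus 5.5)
Write $\mathbf{A}=(A_1(\theta),\dots,A_N(\theta))$ and $W=\mathcal{A}(D)$. The plan is to expand the mutual information as $I(\mathbf{A};W)=H[\mathbf{A}]-H[\mathbf{A}\mid W]$. It then suffices to upper bound the conditional entropy $H[\mathbf{A}\mid W]$ by the sum on the right-hand side.

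First, apply the chain rule and drop conditioning:
\[
H[\mathbf{A}\mid W]=\sum_{i=1}^N H[A_i(\theta)\mid W, A_1(\theta),\dots,A_{i-1}(\theta)]\le \sum_{i=1}^N H[A_i(\theta)\mid W].
\]
This reduces everything to a per-fact Fano-type bound
\[
H[A_i(\theta)\mid W]\le H[I_i]+\Pr[I_i=0]\,H[A_i(\theta)\mid I_i=0].
\]

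To prove the per-fact bound, adjoin the indicator $I_i$. Since entropy can only increase when a variable is added,
\[
H[A_i\mid W]\le H[A_i,I_i\mid W]=H[I_i\mid W]+H[A_i\mid W,I_i]\le H[I_i]+H[A_i\mid W,I_i].
\]
Split the last term by the value of $I_i$:
\[
H[A_i\mid W,I_i]=\Pr[I_i=1]\,H[A_i\mid W,I_i=1]+\Pr[I_i=0]\,H[A_i\mid W,I_i=0].
\]
For the second piece, drop the conditioning on $W$ to get $H[A_i\mid W,I_i=0]\le H[A_i\mid I_i=0]$. That is exactly the required term.

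The main obstacle is the $I_i=1$ piece. If the prediction $f(W;Q_i)$ were a deterministic function of $W$, then on $\{I_i=1\}$ we would have $A_i=f(W;Q_i)$, so $H[A_i\mid W,I_i=1]=0$ and the bound would be finished. Under stochastic decoding, however, $f$ has its own randomness, which the statement counts as part of "the randomness of $\mathcal{A}$".

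I would handle this by defining the prediction $\hat A_i=f(W;Q_i)$ and running the whole argument with $(W,\hat A_1,\dots,\hat A_N)$ in place of $W$. Two facts make this work. First, $\hat A_i$ is a function of $W$ and the decoding randomness, and that randomness is independent of $\theta$ given $W$, so by data processing $I(\mathbf{A};W,\hat A_{1:N})=I(\mathbf{A};W)$. Second, on $\{I_i=1\}$ we have $A_i=\hat A_i$, so the conditional entropy of $A_i$ vanishes there.

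The decoding-randomness step needs care. The cleanest route is to treat the decoding randomness as part of the algorithm's internal randomness, so that the model output is taken to include the predictions, and to justify the Markov chain $\mathbf{A}\to W\to \hat A_{1:N}$. Once that is in place, combining the steps gives the stated inequality.
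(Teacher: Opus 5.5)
Your proposal is correct and follows the paper's proof. Both expand $I(\mathbf{A};W)=H[\mathbf{A}]-H[\mathbf{A}\mid W]$, bound $H[\mathbf{A}\mid W]$ by $\sum_i H[A_i(\theta)\mid W]$ via subadditivity, and then apply Fano's inequality per fact. The paper only cites Fano at that last step, so your explicit derivation of the refined bound fills in detail the paper omits. Your data-processing step does the same: augmenting $W$ by the predictions and using the Markov chain $\mathbf{A}\to W\to \hat A_{1:N}$ correctly handles stochastic decoding, which the paper leaves implicit.
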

\begin{proof} 
    By definition, we have
    \begin{align}
        \underset{(Q_i, A_i)_{i=1}^N}{\text{Mem}}\left(\mathcal{A}; \Psi, n\right) = &  I((A_1(\theta), \cdots, A_N(\theta)), \mathcal{A}(D))\nonumber\\
        = &   {H}\left[(A_1(\theta), \cdots, A_N(\theta))\right]  - {H}\left[A_1(\theta), \cdots, A_N(\theta)|\mathcal{A}(D)\right] \label{eqn:use_independence_fact}\\
        \geq & H\left[(A_1(\theta), \cdots, A_N(\theta))\right]   - \sum_{i=1}^N  {H}\left[A_i(\theta)|\mathcal{A}(D)\right] \label{eqn:use_entropy_upper_sum_fact}
    \end{align}
    where \eqref{eqn:use_independence_fact} is by the definition of mutual information, and \eqref{eqn:use_entropy_upper_sum_fact} is by the sub-additivity of entropy, i.e., $H\left[X_1, \cdots, X_n|Y\right]\leq \sum_{i=1}^nH\left[X_i|Y\right]$ for any random variables $X_1, \cdots, X_n, Y$. By further applying Fano's inequality to \eqref{eqn:use_entropy_upper_sum_fact}, we prove that 
    \begin{align}
         \underset{(Q_i, A_i)_{i=1}^N}{\text{Mem}}\left(\mathcal{A}; \Psi, n\right) \geq &   {H}\left[(A_1(\theta), \cdots, A_N(\theta))\right] - \sum_{i=1}^N\Bigg( \Pr\left[I_i=0\right] \cdot  {H}\left[A_i(\theta)|I_i=0\right] + {H}\left[I_i\right]  \Bigg) 
    \end{align}
    where $I_i = \mathbf{1}_{f(\mathcal{A}(D);Q_i)= A_i(\theta)}$ is the accuracy indicator of the trained model in predicting fact $i$. 
\end{proof}

We finally provide proof for \Cref{cor:fact_mem_upper}.

\begin{corollary}[Fact Accuracy Capacity Limit on Fixed-Entropy  Random Facts]  As a special case of \Cref{thm:acc_based_mem_lower}, if each answer follows uniform distribution over a discrete answer domain $\mathcal{M}_i$ with $\ln|\mathcal{M}_i|=b$, and if $A_1(\theta), A_2(\theta), \cdots, A_N(\theta)$ are independent over the meta prior $\theta\sim\Psi$,    then the accurate fact count of any learning algorithm $\mathcal{A}$ satisfies
    \begin{align}
        \underset{\theta\sim\Psi}{\mathbb{E}}\left[\underset{(Q_i, A_i)_{i=1}^N}{\text{Acc-Cnt}}\left(\mathcal{A}; \theta, n\right)\right] \leq \frac{\ln|\mathcal{W}| + N\ln 2}{b} \label{appeqn:fact_mem_upper}
    \end{align}
    \label[corollary]{appcor:fact_mem_upper}
\end{corollary}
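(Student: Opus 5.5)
We combine the capacity bound of \Cref{prop:axiom_mem} with the accuracy-based lower bound of \Cref{thm:acc_based_mem_lower}, then specialize the entropy terms to independent uniform answers. Sandwiching fact memorization between the two results gives
\begin{align*}
\ln|\mathcal{W}| \geq H\left[(A_1(\theta),\cdots,A_N(\theta))\right] - \sum_{i=1}^N\left(\Pr[I_i=0]\cdot H\left[A_i(\theta)\mid I_i=0\right] + H[I_i]\right).
\end{align*}
By independence and uniformity of the answers, the joint entropy equals $\sum_i H[A_i(\theta)] = Nb$.

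Next we bound each subtracted term from above.

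\textbf{Binary entropy.} Since $I_i$ is binary, $H[I_i]\leq \ln 2$. Summing over $i$ produces the $N\ln 2$ term.

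\textbf{Conditional entropy.} The conditional distribution of $A_i(\theta)$ given $I_i=0$ is supported on $\mathcal{M}_i$, so $H[A_i(\theta)\mid I_i=0]\leq \ln|\mathcal{M}_i| = b$. Writing $p_i=\Pr[I_i=1]$, we get $\Pr[I_i=0]\cdot H[A_i(\theta)\mid I_i=0]\leq (1-p_i)b$.

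Substituting both bounds gives
\begin{align*}
\ln|\mathcal{W}| \geq Nb - \sum_{i=1}^N (1-p_i)b - N\ln 2 = b\sum_{i=1}^N p_i - N\ln 2.
\end{align*}
Rearranging yields $\sum_i p_i \leq (\ln|\mathcal{W}|+N\ln 2)/b$.

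It remains to identify $\sum_i p_i$ with the expected accurate fact count. By \Cref{def:fact_acc}, $\text{Acc-Cnt}(\mathcal{A};\theta,n)=\sum_i \Pr_{D,\mathcal{A}}[f(\mathcal{A}(D);Q_i)=A_i(\theta)]$. Taking the expectation over $\theta\sim\Psi$ and applying the tower property gives $\sum_i \Pr[I_i=1]$, where the probability is over $\theta$, $D$, $\mathcal{A}$, and the decoding randomness of $f$. This is exactly the joint probability space used in \Cref{thm:acc_based_mem_lower}.

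The main point needing care is the conditional entropy step. We must use only the support of $A_i$ under the conditioning (at most $|\mathcal{M}_i|$ values), not any claim that the conditional distribution is uniform. We must also check that $f$'s stochastic decoding randomness is absorbed into $\mathcal{A}$'s randomness, so that Fano's inequality and the identification with Acc-Cnt refer to the same $I_i$. Everything else is direct substitution.
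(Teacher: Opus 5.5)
Your proposal is correct and follows the paper's proof essentially step for step. Like the paper, you sandwich fact memorization between \Cref{prop:axiom_mem} and \Cref{thm:acc_based_mem_lower}, use independence to get joint entropy $Nb$, bound $H[A_i(\theta)\mid I_i=0]\leq b$ via the support/maximum-entropy argument and $H[I_i]\leq \ln 2$, and identify $\sum_i \Pr[I_i=1]$ with the expected accurate fact count.
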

\begin{proof}
    As a special case of \cref{appthm:acc_based_mem_lower}, when $\ln|\mathcal{M}_i|=b$, the term $\underset{\theta\sim\Psi}{H}\left[A_i(\theta)|I_i=0\right]\leq b$ due to the fact that the maximum entropy distribution over a discrete space $\mathcal{M}_i$ is the uniform distribution with entropy $\ln|\mathcal{M}_i| = b$. Thus by plugging $\underset{\theta\sim\Psi}{H}\left[A_i(\theta)|I_i=0\right]\leq b$ and the independence condition that enabled $H\left[(A_1(\theta),\cdots,A_N(\theta)\right] = \sum_{i=1}^NH\left[A_i(\theta)\right] = Nb$ into \eqref{appthm:acc_based_mem_lower} we obtain that
    \begin{align}
        \underset{(Q_i, A_i)_{i=1}^N}{\text{Mem}}\left(\mathcal{A}; \Psi, n\right) \geq & \sum_{i=1}^N\Bigg( b \cdot  \Pr\left[I_i=1\right] - {H}\left[I_i\right]  \Bigg) \nonumber\\
        \geq &\sum_{i=1}^N\Bigg( b \cdot \Pr\left[I_i=1\right] - \ln 2  \Bigg) \nonumber\\
        = & b \cdot \underset{\theta\sim\Psi}{\mathbb{E}}\left[\underset{(Q_i, A_i)_{i=1}^N}{\text{Acc-Cnt}}\left(\mathcal{A}; \theta, n\right)\right] - \ln 2 \cdot N \label{eqn:proof_1_fact_mem_lower_by_per_fact_acc}
    \end{align}
    where the second-to-last inequality is by observing that ${H}\left[I_i\right]\leq \ln 2$  due to the fact that the maximum entropy distribution over discrete space $I_i\in\{0,1\}$ is the uniform distribution with entropy $\ln 2$, and the last equality is by observing that $\underset{\theta\sim\Psi}{\mathbb{E}}\left[\underset{(Q_i, A_i)_{i=1}^N}{\text{Acc-Cnt}}\left(\mathcal{A}; \theta, n\right)\right] = \sum_{i=1}^N\Pr\left[I_i=1\right]$ by  \Cref{def:fact_acc} of fact accuracy. Combining \Cref{eqn:proof_1_fact_mem_lower_by_per_fact_acc,eqn:axiom_mem} suffices to prove the bound in \Cref{appeqn:fact_mem_upper}
\end{proof}

\subsection{Prior Loss-based Lower Bounds for (Fact) Memorization}

\label{def:loss_mem_prior_works}

We now discuss prior memorization definitions and loss-based memorization lower bounds. 

\paragraph{Prior Loss-based Unintentional Memorization Lower Bound} A long line of prior works\citep{brown2021memorization,feldman2025trade,morris2025much} define memorization of a learning algorithm about its input dataset as follows.

\begin{definition}[Memorization] \label{def:mem}
     The memorization of a learning algorithm $\mathcal{A}$ (\Cref{def:learning_alg}) about its input dataset $D$ (\Cref{def:dataset}) is defined by the mutual information between the dataset $D$ and the learning algorithm's output $\mathcal{A}(D)$ as follows.
    \begin{align}
        \text{Mem}_{n, \mathcal{W}, \Psi}\left(\mathcal{A}\right) = & I(\mathcal{A}(D), D) \label{eqn:def_mem_app}
    \end{align}
    where the expectation is over random sampling of dataset $D\sim\mathcal{P}_\theta$ from training distribution parameterized by $\theta$, the sampling of data distribution parameters $\theta$ from the meta prior $\Psi$, and the randomness of the learning algorithm~$\mathcal{A}$.
\end{definition}
\vnote{This seems to be referring to the standard subadditivity of entropy similar to the one we just mentioned. Not sure what "knowing $x_i$ means. Perhaps easiest to state it as simple lower bound on total information. In general as you say later the discussion of excess memorization is not really relevant for us. So we do not need the details below and can discuss and compare in the related work. }
One simple lower bound for total memorization is the ``unintended memorization'' of a trained model $\hat{\theta}$ about its individual training data. This is the quantity estimated in many prior works~\citep{feldman2020neural,ye2023leave,morris2025much} up to translations, written as follows.

\begin{proposition}[{\citep[Proposition 1, Proposition 4, Section 2.3]{morris2025much}}]
    \begin{align}
        \text{Mem}_{n, \mathcal{W}, \Psi}\left( \mathcal{A}\right)\gtrapprox \sum_{x\in D} \left(\ell(\theta_r; x) - \ell(\hat{\theta}; x)\right)
    \end{align}
    where $\gtrapprox$ denotes approximately greater than or equal to, $\theta_r$ is a reference model trained on freshly sampled $n$ i.i.d. samples from a data distribution specified by the same underlying parameters $\theta$ as dataset $D$, and $\ell$ denotes the sum of per-token cross-entropy prediction loss.
\end{proposition}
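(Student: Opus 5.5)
The plan is to go through the Markov structure $\theta \to D \to \mathcal{A}(D)$ and reduce the total memorization to a conditional, per-example quantity that can be written with code lengths, that is, losses. First I would show $I(\mathcal{A}(D);D) \geq I(\mathcal{A}(D);D\mid\theta)$. Because $\mathcal{A}(D)$ is independent of $\theta$ given $D$, we have $I(\mathcal{A}(D);\theta\mid D)=0$. The chain rule then gives
\begin{align}
I(\mathcal{A}(D);D)=I(\mathcal{A}(D);D,\theta)=I(\mathcal{A}(D);\theta)+I(\mathcal{A}(D);D\mid\theta)\geq I(\mathcal{A}(D);D\mid\theta). \nonumber
\end{align}
This is the ``unintended memorization'' of \citet{brown2021memorization}. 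It already restricts attention to the dataset-specific information that the later loss-difference expression is meant to capture.

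Next I would expand $I(\mathcal{A}(D);D\mid\theta)=H(D\mid\theta)-H(D\mid \mathcal{A}(D),\theta)$ and handle the two terms separately.
\begin{itemize}
\item For the first term, $D$ consists of $n$ i.i.d. samples given $\theta$, so $H(D\mid\theta)=\sum_{x\in D}H(x\mid\theta)$ holds exactly.
\item For the second term, subadditivity of entropy gives $H(D\mid\mathcal{A}(D),\theta)\leq\sum_{x\in D}H(x\mid \mathcal{A}(D),\theta)$. Each summand is at most the expected code length of $x$ under any predictor measurable with respect to $(\mathcal{A}(D),\theta)$, by Gibbs' inequality (cross-entropy dominates entropy). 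Taking the trained model $\hat\theta=\mathcal{A}(D)$ as the predictor bounds this by $\mathbb{E}[\ell(\hat\theta;x)]$, where $\ell$ is the summed per-token loss in nats.
\end{itemize}
Combining the two, one obtains rigorously $I(\mathcal{A}(D);D)\geq \sum_{x\in D}\bigl(H(x\mid\theta)-\mathbb{E}\,\ell(\hat\theta;x)\bigr)$.

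The remaining step is to replace the unknown true conditional entropy $H(x\mid\theta)$ by the loss $\ell(\theta_r;x)$ of a reference model. This model is trained on a fresh sample from $\mathcal{P}_\theta$, so it carries the information about $\theta$ but nothing specific to $D$. I expect this to be the main obstacle, and it is exactly where the inequality becomes only approximate ($\gtrapprox$). Cross-entropy upper bounds entropy, so $\ell(\theta_r;x)\geq H(x\mid\theta)$ in expectation, which is the wrong direction for a lower bound. I would therefore justify the substitution by assuming that the reference model is a near-optimal estimator of $P_\theta$, so that the gap $\mathbb{E}\,\ell(\theta_r;x)-H(x\mid\theta)$, which is the KL divergence between $P_\theta$ and the reference predictor, is negligible relative to the memorization term. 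I would also note that $\hat\theta$ generalizes at least as well as $\theta_r$ on the distributional part, so that this gap is comparable for both models and cancels in the difference.

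Making this approximation explicit, as the statement ``up to a KL error term'', gives the proposition as stated. Dropping the expectation by reporting realized per-sample losses, as \citet{morris2025much} do empirically, completes the translation.
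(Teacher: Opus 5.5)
\paragraph{Comparison with the paper.} The paper does not prove this proposition. It imports it from \citet{morris2025much} (``up to translations'') only to argue that bounds on unintended memorization are too weak for fact learning.

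Your derivation is a correct reconstruction of the argument behind that citation:
\begin{itemize}
\item the Markov step $I(\mathcal{A}(D);D)\geq I(\mathcal{A}(D);D\mid\theta)$, which isolates the unintended memorization of \citet{brown2021memorization};
\item superadditivity over the conditionally i.i.d.\ records;
\item Gibbs' inequality, giving $H(x\mid\hat\theta,\theta)\leq\mathbb{E}\,\ell(\hat\theta;x)$;
\item a reference model in place of $H(x\mid\theta)$.
\end{itemize}
The subadditivity-plus-Gibbs steps are the same tools the paper uses for \Cref{thm:nll_mem_lower_bound}. You apply them to $D$ conditionally on $\theta$ rather than to the fact answers; there, the joint entropy is known exactly for synthetic data, so no reference model is needed.

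The real difference from the cited source is that \citet{morris2025much} argue pointwise, through Kolmogorov complexity approximated by model code lengths. That is what justifies stating the bound for the realized sum over $D$. Your Shannon version holds only in expectation, but in exchange it gives $\gtrapprox$ a precise meaning. Since $\theta_r$ is independent of $D$ given $\theta$, you obtain $\mathbb{E}\sum_{x\in D}\bigl(\ell(\theta_r;x)-\ell(\hat\theta;x)\bigr)\leq I(\mathcal{A}(D);D)+n\,\mathbb{E}\bigl[\mathrm{KL}(\mathcal{P}_\theta\,\|\,p_{\theta_r})\bigr]$.

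\paragraph{The cancellation remark.} Drop the claim that the reference model's gap \emph{cancels} because $\hat\theta$ generalizes at least as well as $\theta_r$. In your chain the $\hat\theta$ term is already bounded in the favourable direction. For each record,
\[
H(x\mid\theta)-H(x\mid\theta,\hat\theta)=\mathbb{E}\bigl[\ell(\theta_r;x)-\ell(\hat\theta;x)\bigr]+s-k,
\]
where
\begin{itemize}
\item $k=\mathbb{E}\,\ell(\theta_r;x)-H(x\mid\theta)\geq 0$ is the reference model's KL gap;
\item $s=\mathbb{E}\,\ell(\hat\theta;x)-H(x\mid\theta,\hat\theta)\geq 0$ is $\hat\theta$'s excess code length over the posterior that knows both $\theta$ and $\hat\theta$.
\end{itemize}
The loss difference is an honest lower bound exactly when $s\geq k$. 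A $\hat\theta$ that generalizes better tends to make $s$ smaller, so that heuristic points the wrong way. The explicit additive KL error you also state is the correct formulation.
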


However, this lower bound is very small when there is no unintentional memorization, i.e., all memorization are necessary for learning the true data distribution parameters (such as retrieval or in general memory-intensive Q\&A task), and is thus insufficient for our problem of fact memorization. For example, for the synthetic power law phonebook experiments in \Cref{sec:cap_speed_mem_standard_training}, the model trained on the input dataset $D$
would have similar loss as the reference model trained on fresh i.i.d. samples from the same fact distribution, as the phone numbers are fixed by the training data distribution and remain unchanged under data resampling.

\paragraph{Prior Loss-based Total Memorization Lower Bound} Another set of prior works~\citep{allen2024physics,gu2025data} propose loss-based lower bounds for \text{total memorization}, which more naturally translates to our fact-learning setting. For completeness, below we present the derivations along with translated statements.

\begin{theorem}[Memorization Lower Bound by Negative Log Likelihood (Similar to~\citep{allen2024physics,gu2025data})]      \label{thm:nll_mem_lower_bound}
    Let $(Q_i,A_i(\theta))_{i=1}^N$ be facts encoded by the data distribution parameterized by $\theta$,
    as defined by \Cref{def:facts}. For any fixed $n, \mathcal{W}, \Psi$,  we have the following loss-based memorization lower bound for any learning algorithm $\mathcal{A}$
    \begin{align}
        \underset{(Q_i, A_i)_{i=1}^N}{\text{Mem}}\left(\mathcal{A}; \Psi, n\right) \geq  &  \underset{\theta\sim\Psi}{H}\left[(A_1(\theta), \cdots, A_N(\theta))\right] +  \sum_{i=1}^N  \underset{\theta\sim\Psi, D\sim\mathcal{P}_\theta^n, \hat{\theta}\sim \mathcal{A}(D)}{\mathbb{E}}\left[\ln\left(\Pr_f[f(\hat{\theta}; Q_i)=A_i(\theta)]\right) \right] 
    \end{align}
    where $f(\hat{\theta}; Q_i)$ denotes the (possibly randomized) prediction of  the trained model $\hat{\theta}$ given prefix $Q_i$.
\end{theorem}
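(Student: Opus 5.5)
We follow the same route as the proof of \Cref{appthm:acc_based_mem_lower}, with the Fano step replaced by a cross-entropy bound. First, expand the mutual information in \Cref{def:fact_mem}:
\begin{align}
\underset{(Q_i, A_i)_{i=1}^N}{\text{Mem}}\left(\mathcal{A}; \Psi, n\right) = H\left[(A_1(\theta),\cdots,A_N(\theta))\right] - H\left[A_1(\theta),\cdots,A_N(\theta)\mid \mathcal{A}(D)\right]. \nonumber
\end{align}
Next, use subadditivity of conditional entropy to bound the second term by $\sum_{i=1}^N H\left[A_i(\theta)\mid \mathcal{A}(D)\right]$. It then suffices to show, for each $i$,
\begin{align}
H\left[A_i(\theta)\mid \mathcal{A}(D)\right] \leq -\,\mathbb{E}\left[\ln \Pr_f\left[f(\hat\theta;Q_i)=A_i(\theta)\right]\right], \nonumber
\end{align}
where $\hat\theta=\mathcal{A}(D)$.

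For this per-fact bound, fix the model $\hat\theta = w\in\mathcal{W}$. Define the distribution $q_w(a) = \Pr_f[f(w;Q_i)=a]$ over the answer domain $Y_i$; it depends only on $w$ and the fixed question $Q_i$, not on $\theta$. Let $p_w$ be the true conditional law of $A_i(\theta)$ given $\hat\theta=w$. Gibbs' inequality gives
\begin{align}
H(p_w) \leq -\sum_a p_w(a)\ln q_w(a) = -\mathbb{E}\left[\ln q_w(A_i(\theta)) \mid \hat\theta=w\right], \nonumber
\end{align}
since the gap between the two sides is $\mathrm{KL}(p_w\|q_w)\geq 0$.

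Averaging over $w$ with respect to the law of $\hat\theta$ yields $H[A_i(\theta)\mid\hat\theta]\leq -\mathbb{E}[\ln q_{\hat\theta}(A_i(\theta))]$. The expectation here is over $\theta\sim\Psi$, $D\sim\mathcal{P}_\theta^n$ and the randomness of $\mathcal{A}$, matching the statement. Summing over $i$ and substituting into the first display gives the claimed bound.

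The only subtle step is the cross-entropy bound. Two points need checking. First, $q_w$ must be a genuine distribution over $Y_i$ (summing to at most $1$ suffices, since a sub-probability only increases $-\ln q$) and must not depend on $\theta$ beyond $w$; this is exactly why the decoding randomness of $f$ is kept separate from that of $\mathcal{A}$. Second, if $q_w(a)=0$ for some $a$ with $p_w(a)>0$, the right-hand side is $+\infty$ and the inequality holds trivially. Discreteness of $\mathcal{W}$ makes the conditioning elementary.
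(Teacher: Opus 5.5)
Your proposal is correct and follows essentially the same route as the paper: expand the mutual information, apply subadditivity of conditional entropy, then apply Gibbs' inequality to each fact conditionally on the trained model $\hat\theta$. You also make explicit two steps the paper leaves implicit: that $\sum_a \Pr[A_i(\theta)=a\mid\hat\theta]\ln q_{\hat\theta}(a)$ averages to $\mathbb{E}[\ln q_{\hat\theta}(A_i(\theta))]$, and that $q_{\hat\theta}$ must not depend on $\theta$ except through $\hat\theta$.
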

\begin{proof}
    By definition, 
    \begin{align}
        \underset{(Q_i, A_i)_{i=1}^N}{\text{Mem}}\left(\mathcal{A}; \Psi, n\right) = &  I((A_1(\theta), \cdots, A_N(\theta)), \mathcal{A}(D))\\
        = & \underset{\theta\sim\Psi}{H}\left[(A_1(\theta), \cdots, A_N(\theta))\right] - \underset{\theta\sim\Psi, D\sim\mathcal{P}_\theta^n, \hat{\theta}\sim \mathcal{A}(D)}{H}\left[A_1(\theta), \cdots, A_N(\theta)|\hat{\theta}\right] \label{eqn:use_independence}\\
        \geq & \underset{\theta\sim\Psi}{H}\left[(A_1(\theta), \cdots, A_N(\theta))\right] - \sum_{i=1}^N \underset{\theta\sim\Psi, D\sim\mathcal{P}_\theta^n, \hat{\theta}\sim \mathcal{A}(D)}{H}\left[A_i(\theta)|\hat{\theta}\right]\label{eqn:use_entropy_upper_sum}\\
        = & \underset{\theta\sim\Psi}{H}\left[(A_1(\theta), \cdots, A_N(\theta))\right] + \sum_{i=1}^N\ \sum_{a}\underset{\theta,\hat{\theta}}{\mathbb{E}}\left[ \Pr[A_i(\theta)=a|\hat{\theta}] \cdot \ln\left(\Pr[A_i(\theta)=a|\hat{\theta}]\right) \right] \label{eqn:use_def_cond_entropy}\\
        \geq & \underset{\theta\sim\Psi}{H}\left[(A_1(\theta), \cdots, A_N(\theta))\right] + \sum_{i=1}^N \sum_{a}\underset{\theta,\hat{\theta}}{\mathbb{E}}\left[\Pr[A_i(\theta)=a|\hat{\theta}] \cdot \ln\left(\Pr_f[f(\hat{\theta}; Q_i)=a]\right) \right] \label{eqn:use_gibbs_inequality}
    \end{align}
    where \eqref{eqn:use_independence} is by the definition of mutual information; \eqref{eqn:use_entropy_upper_sum} is by $H(X_1, \cdots, X_n|Y)\leq \sum_{i=1}^nH(X_i|Y)$ for any random variables $X_1, \cdots, X_n, Y$; \eqref{eqn:use_def_cond_entropy} is by the definition of conditional entropy; and \eqref{eqn:use_gibbs_inequality} is by using the Gibbs inequality which ensures $\sum_{a}\Pr[X=a]\cdot \ln\Pr[X=a] \geq \sum_{a}\Pr[X=a]\cdot \ln(\Pr[Y=a])$ for any (discrete) random variables $X$ and $Y$. (This is intuitively saying for describing random variable $X$, the Huffman code optimized for $X$ has the shortest length.)
    
\end{proof}

This suggests that we can lower bound memorization via the difference between the  entropy of answers generated from random prior, versus the average  negative log probability of predicting the right answer for each question on top of observing the trained model. The prediction function $f$ is specified by the decoding and answer matching process when using the trained language model. In the special case of $f$ given by vanilla stochastic-decoding answering function, \eqref{eqn:def_mem_pred} simplifies to the following loss-based lower bound for total memorization.

\begin{corollary}[Memorization Lower Bound by Loss]\label[corollary]{cor:mem_lower_loss}
     Let $\ell(A_i;\hat{\theta}, Q_i)$ be the \textit{sum} of per-token cross-entropy loss for using trained model $\hat{\theta}$ to predict answer $A_i$ given prefix $Q_i$. If the same condition of \Cref{thm:nll_mem_lower_bound}, then under stochastic decoding, we have the following loss-based memorization lower bound
    \begin{align}
        \underset{(Q_i, A_i)_{i=1}^N}{\text{Mem}}\left(\mathcal{A}; \Psi, n\right)  \geq  & \underset{\theta\sim\Psi}{H}\left[(A_1(\theta), \cdots, A_N(\theta))\right] -  \sum_{i=1}^N  \underset{\theta\sim\Psi, D\sim\mathcal{P}_\theta^n, \hat{\theta}\sim \mathcal{A}(D)}{\mathbb{E}}\left[\ell\left(A_i; \hat{\theta}, Q_i\right) \right] \label{eqn:def_mem_loss}
    \end{align}
\end{corollary}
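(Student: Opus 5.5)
The plan is to specialize \Cref{thm:nll_mem_lower_bound} to the vanilla stochastic-decoding prediction function fixed after \Cref{def:fact_acc}. That theorem already gives the lower bound
\[
H\left[(A_1(\theta), \cdots, A_N(\theta))\right] + \sum_{i=1}^N \mathbb{E}\left[\ln \Pr_f[f(\hat{\theta}; Q_i)=A_i(\theta)]\right],
\]
with the expectation over $\theta\sim\Psi$, $D\sim\mathcal{P}_\theta^n$ and $\hat{\theta}\sim\mathcal{A}(D)$. So the only work is to rewrite the log-probability term as a negative loss.

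\textbf{Step 1.} Recall that under stochastic decoding, $\Pr_f[f(\hat{\theta}; Q_i)=a] = e^{-\ell(a;\hat{\theta},Q_i)}$ for every candidate answer $a$. Here $\ell$ is the sum of per-token cross-entropy losses, which equals $-\ln$ of the autoregressive probability of the full answer string given $Q_i$.

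\textbf{Step 2.} Substitute $a=A_i(\theta)$ pointwise for each realization of $(\theta,\hat{\theta})$. This gives
\[
\ln \Pr_f[f(\hat{\theta}; Q_i)=A_i(\theta)] = -\ell(A_i;\hat{\theta},Q_i).
\]

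\textbf{Step 3.} Take expectations, sum over $i$, and plug the result into the bound of \Cref{thm:nll_mem_lower_bound}. This yields \eqref{eqn:def_mem_loss} directly.

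The only point needing care is the justification of Step 1: the identity holds because decoding samples each token from the model's softmax until end-of-answer, so the probability of the answer factorizes as a product of per-token conditionals. I would also check that \Cref{thm:nll_mem_lower_bound} is stated in a form that matches, with the expectation of the log-probability and not the log of an expectation. From its final display, \eqref{eqn:use_gibbs_inequality}, it appears stated that way. The Gibbs step uses $\ln \Pr_f[f(\hat{\theta};Q_i)=a]$ weighted by the posterior over $a$ given $\hat{\theta}$, and averaging this over $\theta$ and $\hat{\theta}$ gives exactly $\mathbb{E}[\ln \Pr_f[f(\hat{\theta};Q_i)=A_i(\theta)]]$, so the two forms agree. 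I do not expect any genuine obstacle beyond this bookkeeping.
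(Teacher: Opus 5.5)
Your proposal is correct and follows the paper's argument: the paper likewise obtains the corollary by substituting the stochastic-decoding identity $\Pr_f[f(\hat{\theta};Q_i)=a]=e^{-\ell(a;\hat{\theta},Q_i)}$ into the log-probability term of \Cref{thm:nll_mem_lower_bound}. Your check that the Gibbs-step form $\sum_a \mathbb{E}\bigl[\Pr[A_i(\theta)=a\mid\hat{\theta}]\,\ln\Pr_f[f(\hat{\theta};Q_i)=a]\bigr]$ equals $\mathbb{E}\bigl[\ln\Pr_f[f(\hat{\theta};Q_i)=A_i(\theta)]\bigr]$ is bookkeeping the paper leaves implicit, and it is the right thing to verify.
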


\textbf{Estimating the first term $\underset{\theta\sim\Psi}{H}\left[(A_1(\theta), \cdots, A_N(\theta))\right] $, i.e., the joint entropy of the fact answers $A_1(\theta), \cdots, A_N(\theta)$ over $\theta\sim\Psi$, requires accurate approximations for the meta prior $\Psi$.} For a synthetically constructed dataset, we have control over the meta distribution $\psi$ and can compute entropy exactly. For real datasets however, we often do not have precise knowledge of fact entropy, nor about the (existence of) independently distributed facts.  In our experiments, we use the \textbf{median of model's training loss across the first epoch as approximations for the average of entropy over all facts $i=1,\cdots,N$ under meta prior}. (Besides this heuristic, we also tried other heuristic choices of meta distribution, such as the loss of the model at initialization, the loss of a reference model trained on disjoint facts. However, we found these choices severely overestimate memorization, as discussed in \Cref{app:wrong_meta_prior}.)

%% file: app_what_does_not_work_for_est_mem.tex
\subsection{More Discussions on Meta Prior Choices}

\label{app:wrong_meta_prior}

Below we discuss two other choices of meta prior, that we found to be overestimating memorization lower bound in our experiments. We identify them as overestimation because the estimated memorization lower bound grows indefinitely with regard to training dataset size despite constrained model size, thus contradicting \Cref{thm:nll_mem_lower_bound}.

\paragraph{Using Reference Model trained on Disjoint Facts as Meta Prior May Overestimate Memorization due to Overfitting} This is similar to the approach in \citep{morris2025much}, yet we found that it overestimates memorization in our experiments. This is because reference model trained to memorize a disjoint set of facts could overfit to those facts, resulting in increasing loss on the facts in the target model's training dataset, thus serving as a baseline that overestimates the fact-entropy and memorization about training facts in the target model.

\paragraph{Using Model Initialization as Meta Prior May Overestimate Memorization due to Correlated Format Knowledge} Another intuitive approximation in practice is to heuristically choose the meta distribution $\Psi$ as the pretrained model at random (re)initializations. However, this has the risk of breaking the independence assumption among answers $A_1(\theta), \cdots, A_N(\theta)$ (as one can imagine that the predictions given similar prefixes are correlated, even at initialization), and thus overestimating memorization lower bound. For example, for synthetically constructed phonebook dataset that consists of (name, phone number) tuples (with name being six randomly drawn alphabetical characters from $a$ to $z$, and with phone number being 22 randomly drawn digits from $0$ to $9$), the average per-token loss at initialization is as high as $8$ when the underlying true average per-token entropy of each (name, phone number) tuple is $(6\cdot \ln(26) + 22\cdot \ln(10))/(6+22) = 2.507$. This overestimation is intuitively because there exists shared knowledge among answers for different questions, e.g., about the format.

%% file: app_suboptimal_fact_mem.tex
\section{Deferred Details for Suboptimal fact accuracy Results}

\subsection{Experiment Setups for Pretraining on Synthetic Phonebook}
\label{ssec:sufficient_training_setup}

\paragraph{Pretrained Model Architecture} We consider variants of the standard GPT2-style decoder-only transformer models~\citep{vaswani2017attention} with context length 1024, sinusoidal positional embeddings, ReLU activation, and post-output layer norm. Each model has $L$ layers, $H$ heads, hidden dimension $D$, and MLP dimension $4D$ in bfloat16 precision.  To control model size, we follow Pythia~\citep{biderman2023pythia} and vary $(L, D, H)$ across (6, 512, 8), (12, 768, 12), (24, 1024, 16), (16, 2048, 8), (24, 2048, 16), (32, 2560, 32) to create a family of models with sizes ranging from 42m to 1.4B parameters.

\paragraph{Hyperparameter Tuning to Ensure Sufficiently Long Training} For all experiments, we train with auto-regressive next-token-prediction cross-entropy loss, and follow prior works~\citep{brown2020language,hoffmann2022training} to use AdamW optimizer with weight decay $0.1$ and cosine learning rate scheduler: the learning rate linearly increases from $0$ to the maximal learning rate in warm-up steps ($2.5\%$ of the training steps), and then decrease to $0.1$ times of the maximal learning rate in the remaining training steps following cosine decay.   We fix the number of training steps as 800000, and perform extensive grid search for the optimal batch-size over $\{32, 80, 160, 320, 640, 1280, 2560, 5120, 10240\}$, for the optimal learning rate over $\{1e{-5}, 5e{-5}, 1e{-4}, 5e{-4}\}$. We set the gradient clipping norm as $1.0$. To identify the best training run, for each setting, we first find runs that reach close-to-best performance (within $2\%$ multiplicative difference or $0.01$ additive gap to the smallest loss over all runs), and then identify one run among them with the fastest convergences (in terms of smallest number of training tokens seen).

\subsection{Additional Fact Memorization Analysis for Pretraining on Synthetic Phonebook}
\label{ssec:additional_fact_memorization_phonebook}
In \Cref{fig:capacity} (left), we first perform ablation experiments to validate that the language model trained on sufficiently many uniformly distributed facts consistently reach the 2 bits/parameter fact memorization capacity limit (\Cref{prop:axiom_mem}), confirming the observations in prior works~\citep{allen2024physics,morris2025much,gu2025data} and validating the effectiveness of our pretraining recipe.

\begin{figure}
    \centering
    \includegraphics[width=0.95\linewidth]{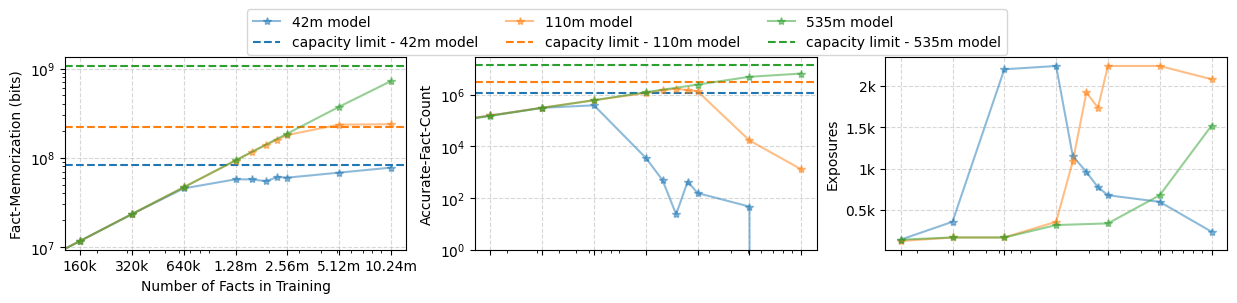}\\
    (a) varying number of facts in the training dataset\\
    \includegraphics[width=0.95\linewidth]{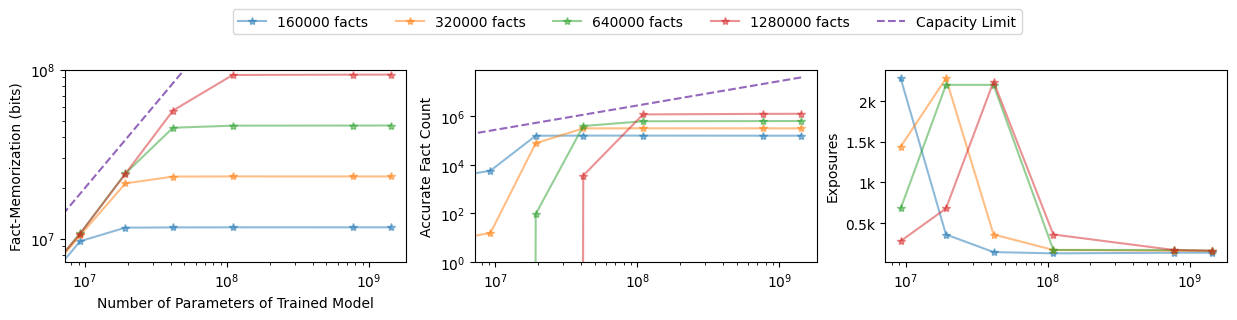}\\
    (b) varying number of parameters of the trained model
    \caption{Fact memorization (in bits), fact accuracy (in accurate fact count), and exposures (needed till convergence) for pretraining on synthetic phonebook dataset. Fact memorization is measured by loss-based lower bound (\Cref{cor:mem_lower_loss}). Fact accuracy is measured by accurate fact count defined in \Cref{def:fact_acc}, i.e., the sum of probability of generating the correct answer for each fact under stochastic decoding. Exposure is measured by the number of epochs needed till convergence for training under tuned hyperparameters (\Cref{ssec:sufficient_training_setup}). }
    \label{fig:capacity}
\end{figure}

\paragraph{Exposure Needed for Convergence Peaks at Capacity Threshold} Conditioned on the effectiveness of our pretraining strategy, we now investigate whether the fact memorization speed is a function of ratio between model size and data complexity. We perform two sets of experiments: in \Cref{fig:capacity} (a) we fix the model size and vary the training dataset size,  and in \Cref{fig:capacity} (b) we fix the training dataset and vary the model size. We observe in both cases, the exposures (i.e., the number of times that the training passes each fact) needed for convergence peaks at capacity threshold, i.e., when the model size matches the dataset size. On the one hand, this means that when model is too small to fit all training data, training longer would not help the model to converge to better optima. On the other hand, this means that larger models need fewer passes to memorize individual facts. This is intuitively because larger models are easier to optimize and has more capacity, thus reducing the interference during memorizing different facts and leading to faster convergence.

\input{figures/fig_detailed_power_gap}
\paragraph{Understanding the Exacerbated Suboptimality of Fact Accuracy under Power law Training Data} In \Cref{fig:detailed_gap_power}, we observe that the maximal fact memorization and fact accuracy drop significantly under an increasing power law exponent.
To understand the reason, we conduct two ablation experiments in \Cref{fig:detailed_gap_power}: \textbf{(1) we train a 10x bigger model} (1.4B parameters) for the same number of steps on exactly the same stream of training data; and \textbf{(2) we train a small model for 8x longer} (via increasing the number of training steps). We observe that training a 10x larger model significantly improves fact memorization and fact accuracy to near-perfect (\Cref{fig:detailed_gap_power} x-axis), while 8x longer training only yields negligible improvements (\Cref{fig:detailed_gap_power} shaded area). Thus larger models may need fewer  exposures to memorize each fact compared to small models, which is the key reason for the exacerbating fact memorization and fact accuracy gap between small and large models as the data becomes more non-uniform (i.e., as the power law exponent increases).  We remark that this is also supported by our experiments in \Cref{fig:capacity} right plots, and has also been observed for other definitions of memorization on real-world datasets in prior works~\citep{tirumala2022memorization}.

\subsection{Additional Results for Suboptimal Fact Accuracy in LoRA Finetuning}

\label{app:suboptimal_fact_mem_LoRA}

\paragraph{LoRA Finetuning has Similar Memorization Capacity to Pretraining From Scratch} We now turn to LoRA finetuning, and investigate its capacity limit for fact memorization and fact accuracy. We first repeat the experiments for synthetic phonebook dataset in the LoRA finetuning settings. As shown in \Cref{fig:phonebook_bits}, the fact memorization is also close to the 2 bits/parameter capacity limit, matching the capacity of pretraining from scratch.  This shows that the representation power of LoRA is strong enough to match full transformer model in terms of fact memorization capacity.
\input{figures/capacity_lora.tex}

\paragraph{Results for Real-World Author-Title Mapping Facts from arXiv Papers}
To capture more realistic real-world high-entropy facts, we perform LoRA finetuning of the Llama-3.2-1B pretrained model~\citep{dubey2024llama} on natural author-title mapping facts in the arXiv-papers~\citep{Saga} dataset (subselecting $171104$ articles published in 2025 after the pretrained models' cut-off dates). This is to simulate the setting of teaching a pretrained language model new knowledge that is \textit{not} in its pretraining dataset. We train on data of format "title: \_\_\_ | authors: \_\_\_" for learning the author-title mapping facts. We choose this type of fact as frontier language models tend to hallucinate author names or paper titles, which is a well-known issue in the scientific community. (E.g., GPTZero finds 100+ confirmed hallucinations in a subset of evaluated 300 NeurIPS 2025 accepted papers~\citep{gptzero_neurips2025}.)  We choose this close-to-Q\&A format because Q\&A is observed to be the most effective data format for knowledge injection during finetuning in prior works~\citep{zhao2025style}. Note that this dataset is not suitable for pretraining due to its limited size. All LoRA training runs consider a fixed context length of 64 and for hyperparameter tuning, we perform grid search for the number of training steps over $\{2000, 4000\}$, learning rate in $\{2e{-4}, 5e{-4}, 1e{-3}, 2e{-3}, 5e{-3}\}$, batch-size in $\{4000, 16000, 64000\}$.

We estimate memorized bits via loss-based memorization lower bound (\Cref{cor:mem_lower_loss}) and heuristic approximations for the average fact-entropy via median of training loss in the first epoch as discussed after \Cref{cor:mem_lower_loss}. We observe in \Cref{fig:arxiv_bits} that our estimates of memorized bits for arXiv-paper dataset is higher than that for the synthetic phonebook dataset.  This is potentially due to overestimation of memorized bits under heuristic loss-based approximations for fact-entropy, as discussed in \Cref{app:wrong_meta_prior}. We remark that accurately estimating memorized bits for real dataset (where the meta distribution is unknown or even non-existent) is a long-standing challenge in the literature~\citep{allen2024physics,morris2025much}. Nevertheless, our result still shows the promise of reaching the capacity limit of LoRA adapters for memorizing real-world high-entropy facts.

\paragraph{Fact Accuracy Drops to Close-to-Zero when Training Data Exceeds Model Capacity}
Our final observation is that similar to the results for pretraining (\Cref{fig:capacity}), fact accuracy of LoRA finetuning (on both the synthetic phonebook dataset and the more realistic arXiv-papers dataset) also drops significantly to as low as zero, as the number of facts in the training dataset increases to exceeding the model's fitting power. In \Cref{app:finetune_additional_selection_results}, we further investigate whether fact accuracy of LoRA finetuning can be similarly boosted by our data selection \Cref{alg:cramless}.

%% file: figures/fig_detailed_power_gap.tex
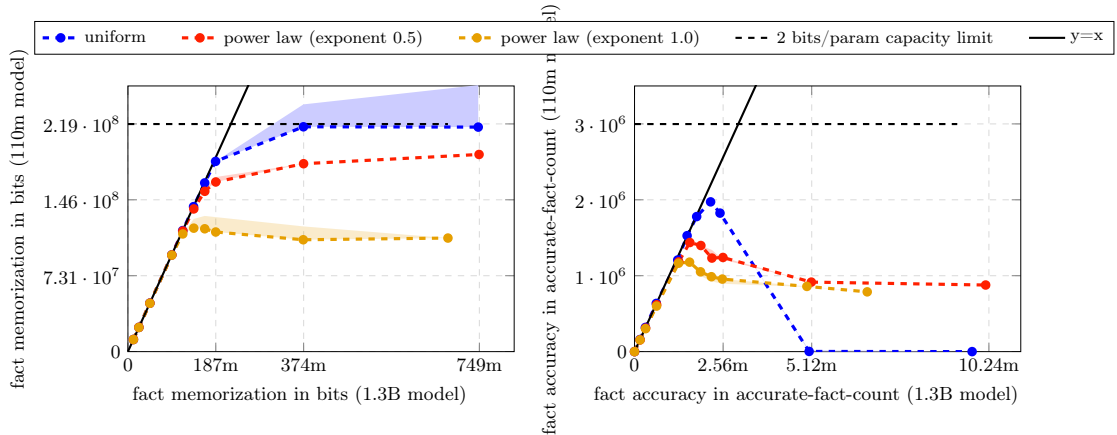
\begin{figure}
    \centering
    \scalebox{0.8}{\begin{tikzpicture}

        \pgfplotsset{
            every axis/.append style={
                width=8cm,
                height=6cm,
                grid=major,
                grid style={line width=.1pt, draw=gray!30, dashed},
                tick label style={font=\small},
                label style={font=\small},
                legend style={font=\footnotesize},
            }
        }

        \begin{axis}[
            name=plot1,
            xlabel={fact memorization in bits (1.3B model)},
            ylabel={fact memorization in bits (110m model)},
            xmin=0,
            ymin=0,
            ymax=3500000*50.656/0.693,
            xlabel style={xshift=-1em},
            xtick={0,   2560000*50.656/0.693, 5120000*50.656/0.693, 10240000*50.656/0.693},
            xticklabels={0, 187m, 374m, 749m},
            scaled ticks=false,
            ytick={0, 1000000*50.656/0.693, 2000000*50.656/0.693, 3000000*50.656/0.693},
        ]
            \addplot[name path=A, draw=none, forget plot]
                table[x=x, y=y_short, col sep=comma] {evals/figure_data/df_bits_mem_gap_uniform.csv};
            \addplot[name path=B, draw=none, forget plot]
                table[x=x, y=y_long, col sep=comma] {evals/figure_data/df_bits_mem_gap_uniform.csv};
            \addplot[fill=blue!20, draw=none, forget plot]
                fill between[of=A and B];
            \addplot[color=blue, dashed, mark=*, mark size=1.5, mark options={solid,fill=blue}, line width=1.5pt]
                table[x=x, y=y_short, col sep=comma] {evals/figure_data/df_bits_mem_gap_uniform.csv};

            \addplot[name path=C, draw=none, forget plot]
                table[x=x, y=y_short, col sep=comma] {evals/figure_data/df_bits_mem_gap_L12D768H12_power_0_5.csv};
            \addplot[name path=D, draw=none, forget plot]
                table[x=x, y=y_long, col sep=comma] {evals/figure_data/df_bits_mem_gap_L12D768H12_power_0_5.csv};
            \addplot[fill=red!75!orange!20, draw=none, forget plot]
                fill between[of=C and D];
            \addplot[color=red!75!orange, dashed, mark=*, mark size=1.5, mark options={solid,fill=red!75!orange}, line width=1.5pt]
                table[x=x, y=y_short, col sep=comma] {evals/figure_data/df_bits_mem_gap_L12D768H12_power_0_5.csv};

            \addplot[name path=E, draw=none, forget plot]
                table[x=x, y=y_short, col sep=comma] {evals/figure_data/df_bits_mem_gap_L12D768H12_power_1_0.csv};
            \addplot[name path=F, draw=none, forget plot]
                table[x=x, y=y_long, col sep=comma] {evals/figure_data/df_bits_mem_gap_L12D768H12_power_1_0.csv};
            \addplot[fill={rgb,255:red,230;green,159;blue,0}, fill opacity=0.2, draw=none, forget plot]
                fill between[of=E and F];
            \addplot[color={rgb,255:red,230;green,159;blue,0}, dashed, mark=*, mark size=1.5, mark options={solid,fill={rgb,255:red,230;green,159;blue,0}}, line width=1.5pt]
                table[x=x, y=y_short, col sep=comma] {evals/figure_data/df_bits_mem_gap_L12D768H12_power_1_0.csv};

            \addplot[color=black, line width=1pt, no marks]
                table[x=x, y=y, col sep=comma] {evals/figure_data/df_bits_mem_gap_diagonal_line.csv};

            \addplot[color=black, dashed, line width=1pt, no marks]
                table[x=x, y=y, col sep=comma] {evals/figure_data/df_bits_mem_gap_horizontal_line.csv};
        \end{axis}
        \begin{axis}[
            name=plot2,
            at={($(plot1.east)+(2cm,0)$)},
            anchor=west,
            xlabel={fact accuracy in accurate-fact-count (1.3B model)},
            ylabel={fact accuracy in accurate-fact-count (110m model)},
            xmin=0,
            ymin=0,
            ymax=3500000,
            xlabel style={xshift=-1em},
            xtick={0, 2560000, 5120000, 10240000},
            xticklabels={0, 2.56m, 5.12m, 10.24m},
            ytick={0, 1000000, 2000000, 3000000},
            scaled ticks=false,
            legend to name=combined_legend,
            legend columns=5,
            legend style={/tikz/every even column/.append style={column sep=0.5cm}},
        ]
            \addplot[name path=G, draw=none, forget plot]
                table[x=x, y=y_short, col sep=comma] {evals/figure_data/df_facts_mem_gap_uniform.csv};
            \addplot[name path=H, draw=none, forget plot]
                table[x=x, y=y_long, col sep=comma] {evals/figure_data/df_facts_mem_gap_uniform.csv};
            \addplot[fill=blue!20, draw=none, forget plot]
                fill between[of=G and H];
            \addplot[color=blue, dashed,  mark=*, mark size=1.5, mark options={solid,fill=blue}, line width=1.5pt]
                table[x=x, y=y_short, col sep=comma] {evals/figure_data/df_facts_mem_gap_uniform.csv};
            \addlegendentry{uniform}

            \addplot[name path=I, draw=none, forget plot]
                table[x=x, y=y_short, col sep=comma] {evals/figure_data/df_facts_mem_gap_L12D768H12_power_0_5.csv};
            \addplot[name path=J, draw=none, forget plot]
                table[x=x, y=y_long, col sep=comma] {evals/figure_data/df_facts_mem_gap_L12D768H12_power_0_5.csv};
            \addplot[fill=red!75!orange!20, draw=none, forget plot]
                fill between[of=I and J];
            \addplot[color=red!75!orange, dashed, mark=*, mark size=1.5, mark options={solid,fill=red!75!orange}, line width=1.5pt]
                table[x=x, y=y_short, col sep=comma] {evals/figure_data/df_facts_mem_gap_L12D768H12_power_0_5.csv};
            \addlegendentry{power law (exponent $0.5$)}

            \addplot[name path=K, draw=none, forget plot]
                table[x=x, y=y_short, col sep=comma] {evals/figure_data/df_facts_mem_gap_L12D768H12_power_1_0.csv};
            \addplot[name path=L, draw=none, forget plot]
                table[x=x, y=y_long, col sep=comma] {evals/figure_data/df_facts_mem_gap_L12D768H12_power_1_0.csv};
            \addplot[fill={rgb,255:red,230;green,159;blue,0}, fill opacity=0.2, draw=none, forget plot]
                fill between[of=K and L];
            \addplot[color={rgb,255:red,230;green,159;blue,0}, dashed, mark=*, mark size=1.5, mark options={solid,fill={rgb,255:red,230;green,159;blue,0}}, line width=1.5pt]
                table[x=x, y=y_short, col sep=comma] {evals/figure_data/df_facts_mem_gap_L12D768H12_power_1_0.csv};
            \addlegendentry{power law (exponent $1.0$)}

            \addplot[color=black, dashed, line width=1pt, no marks] table[x expr=\thisrow{x}*0.693/50.656, y expr=\thisrow{y}*0.693/50.656, col sep=comma]{evals/figure_data/df_bits_mem_gap_horizontal_line.csv};

            \addplot[color=black, line width=1pt, no marks]
                table[x expr=\thisrow{x}*0.693/50.656, y expr=\thisrow{y}*0.693/50.656, col sep=comma] {evals/figure_data/df_bits_mem_gap_diagonal_line.csv};

            \addlegendimage{color=black, line width=1pt, no marks}
            \addlegendentry{2 bits/param capacity limit}

            \addlegendimage{color=black, dashed, line width=1pt, no marks}
            \addlegendentry{y=x}
        \end{axis}

        \node at ($(plot1.north)!0.5!(plot2.north)+(0,0.8cm)$) {\pgfplotslegendfromname{combined_legend}};

    \end{tikzpicture}}
    \caption{Gap in fact memorization in bits (left) and fact accuracy in accurate fact count (right) between  sufficiently trained small model (110m parameters) and 10X larger model (1.3B parameters). Dashed lines show results for the small model trained for 800k steps, and shaded areas (extending above the dashed lines) show the improvement obtained by training the small model for 8$\times$ longer (6.4M steps).  Each point in the curve shows result for one training dataset, and curves are plotted over increasingly large training datasets containing $\{160000, 320000, 640000, 1280000, 1600000, 1920000, 2240000, 2560000\}$ facts, following different frequency distributions, including uniform, power law with exponent 0.5, and power law with exponent 1.0. All results are for the optimal run after hyperparameter tuning as discussed in \Cref{ssec:sufficient_training_setup}. }
    \label{fig:detailed_gap_power}
\end{figure}

%% file: figures/capacity_lora.tex
\begin{figure}[htbp]
\centering
\definecolor{lora2color}{RGB}{213,94,0}
\definecolor{lora4color}{RGB}{86,180,233}
\definecolor{lora8color}{RGB}{0,158,115}
\definecolor{lora16color}{RGB}{204,121,167}
\centering
\begin{tikzpicture}
\begin{axis}[
    hide axis,
    xmin=0, xmax=1,
    ymin=0, ymax=1,
    legend style={
        draw=none,
        fill=none,
        legend columns=5,
        /tikz/every even column/.append style={column sep=0.5cm},
        font=\footnotesize
    },
    legend to name=sharedlegend
]
\addlegendimage{color=lora4color, mark=square*, thick, solid}
\addlegendentry{LoRA-4}
\addlegendimage{color=lora8color, mark=square*, thick, solid}
\addlegendentry{LoRA-8}
\addlegendimage{color=lora16color, mark=square*, thick, solid}
\addlegendentry{LoRA-16}
\addlegendimage{color=black, thick, dashed}
\addlegendentry{2 bits/param capacity limit}
\end{axis}
\end{tikzpicture}
\pgfplotslegendfromname{sharedlegend}
\\
\centering
\begin{subfigure}[b]{0.42\textwidth}
\centering
\begin{tikzpicture}
\begin{axis}[
    width=\textwidth,
    height=4.2cm,
    xlabel={\# facts in the training dataset},
    ylabel={fact memorization (bits)},
    grid=major,
    ymin=0,
    ymax=20000000,
    xmax=280000,
    xlabel style={font=\footnotesize},
    ylabel style={font=\footnotesize},
    title style={font=\footnotesize},
    tick label style={font=\footnotesize},
    xtick={0, 100000, 200000},
    xticklabels={0, 100k, 200k},
    ytick={0, 5000000, 10000000, 15000000, 20000000},
    yticklabels={0, 5m, 10m, 15m, 20m},
    scaled ticks = false
]
\addplot[color=lora4color, mark=square*, thick, solid, opacity=0.7] table[
    col sep=comma,
    x=dataset_size,
    y=best_loss_diff_times_tokens
] {log_parallel_jobs_no_wait_by_taskname/tables_bits/phonebook_lora4_constant_minus_loss.csv};
\addplot[color=lora8color, mark=square*, thick, solid, opacity=0.7] table[
    col sep=comma,
    x=dataset_size,
    y=best_loss_diff_times_tokens
] {log_parallel_jobs_no_wait_by_taskname/tables_bits/phonebook_lora8_constant_minus_loss.csv};
\addplot[color=lora16color, mark=square*, thick, solid, opacity=0.7] table[
    col sep=comma,
    x=dataset_size,
    y=best_loss_diff_times_tokens
] {log_parallel_jobs_no_wait_by_taskname/tables_bits/phonebook_lora16_constant_minus_loss.csv};
\addplot[color=lora4color, dashed, thick, forget plot] coordinates {(0, 3906644.05177) (256000, 3906644.05177)};
\addplot[color=lora8color, dashed, thick, forget plot] coordinates {(0, 7813288.10353) (256000, 7813288.10353)};
\addplot[color=lora16color, dashed, thick, forget plot] coordinates {(0, 15626576.20706) (256000, 15626576.20706)};
\end{axis}
\end{tikzpicture}
\caption{\centering bits memorization \\(phonebook)}
\label{fig:phonebook_bits}
\end{subfigure}
\hspace{1cm}
\begin{subfigure}[b]{0.42\textwidth}
\centering
\begin{tikzpicture}
\begin{axis}[
    width=\textwidth,
    height=4.2cm,
    xlabel={\# facts in the training dataset},
    grid=major,
    ymin=0,
    ymax=20000000,
    xmax=190000,
    xlabel style={font=\footnotesize},
    ylabel style={font=\footnotesize},
    title style={font=\footnotesize},
    tick label style={font=\footnotesize},
    xtick={0, 80000, 160000, 240000},
    xticklabels={0, 80k, 160k, 240k},
    ytick={0, 5000000, 10000000, 15000000, 20000000},
    yticklabels={0, 5m, 10m, 15m, 20m},
    scaled ticks = false
]
\addplot[color=lora4color, mark=square*, thick, solid, opacity=0.7] table[
    col sep=comma,
    x=dataset_size,
    y=best_loss_diff_times_tokens
] {log_parallel_jobs_no_wait_by_taskname/tables_bits/arxivpapers_extensive_lora4_constant_minus_loss.csv};
\addplot[color=lora8color, mark=square*, thick, solid, opacity=0.7] table[
    col sep=comma,
    x=dataset_size,
    y=best_loss_diff_times_tokens
] {log_parallel_jobs_no_wait_by_taskname/tables_bits/arxivpapers_extensive_lora8_constant_minus_loss.csv};
\addplot[color=lora16color, mark=square*, thick, solid, opacity=0.7] table[
    col sep=comma,
    x=dataset_size,
    y=best_loss_diff_times_tokens
] {log_parallel_jobs_no_wait_by_taskname/tables_bits/arxivpapers_extensive_lora16_constant_minus_loss.csv};
\addplot[color=lora4color, dashed, thick, forget plot] coordinates {(0, 3906644.05177) (256000, 3906644.05177)};
\addplot[color=lora8color, dashed, thick, forget plot] coordinates {(0, 7813288.10353) (256000, 7813288.10353)};
\addplot[color=lora16color, dashed, thick, forget plot] coordinates {(0, 15626576.20706) (256000, 15626576.20706)};
\end{axis}
\end{tikzpicture}
\caption{\centering bits memorization \\(arXiv)}
\label{fig:arxiv_bits}
\end{subfigure}
\hfill
\begin{subfigure}[b]{0.42\textwidth}
\centering
\begin{tikzpicture}
\begin{axis}[
    width=\textwidth,
    height=4.2cm,
    xlabel={\# facts in the training dataset},
    ylabel={accurate fact count},
    grid=major,
    xmax=280000,
    xlabel style={font=\footnotesize},
    ylabel style={font=\footnotesize},
    title style={font=\footnotesize},
    tick label style={font=\footnotesize},
    xtick={0, 100000, 200000},
    xticklabels={0, 100k, 200k},
    ytick={0, 50000, 100000},
    yticklabels={0, 50k, 100k},
    scaled ticks = false
]
\addplot[color=lora4color, mark=square*, thick, solid, opacity=0.7] table[
    col sep=comma,
    x=dataset_size,
    y=best_perf_times_size
] {log_parallel_jobs_no_wait_by_taskname/tables/phonebook_lora4.csv};
\addplot[color=lora8color, mark=square*, thick, solid, opacity=0.7] table[
    col sep=comma,
    x=dataset_size,
    y=best_perf_times_size
] {log_parallel_jobs_no_wait_by_taskname/tables/phonebook_lora8.csv};
\addplot[color=lora16color, mark=square*, thick, solid, opacity=0.7] table[
    col sep=comma,
    x=dataset_size,
    y=best_perf_times_size
] {log_parallel_jobs_no_wait_by_taskname/tables/phonebook_lora16.csv};
\end{axis}
\end{tikzpicture}
\caption{\centering fact memorization \\(phonebook)}
\label{fig:phonebook_facts}
\end{subfigure}
\hspace{1cm}
\centering
\begin{subfigure}[b]{0.42\textwidth}
\centering
\begin{tikzpicture}
\begin{axis}[
    width=\textwidth,
    height=4.2cm,
    xlabel={\# facts in the training dataset},
    grid=major,
    xlabel style={font=\footnotesize},
    ylabel style={font=\footnotesize},
    title style={font=\footnotesize},
    tick label style={font=\footnotesize},
    xtick={0, 80000, 160000, 240000},
    xticklabels={0, 80k, 160k, 240k},
    ytick={0, 20000, 40000, 60000},
    yticklabels={0, 20k, 40k, 60k},
    scaled ticks = false
]
\addplot[color=lora4color, mark=square*, thick, solid, opacity=0.7] table[
    col sep=comma,
    x=dataset_size,
    y=best_perf_times_size
] {log_parallel_jobs_no_wait_by_taskname/tables/arxivpapers_extensive_lora4.csv};
\addplot[color=lora8color, mark=square*, thick, solid, opacity=0.7] table[
    col sep=comma,
    x=dataset_size,
    y=best_perf_times_size
] {log_parallel_jobs_no_wait_by_taskname/tables/arxivpapers_extensive_lora8.csv};
\addplot[color=lora16color, mark=square*, thick, solid, opacity=0.7] table[
    col sep=comma,
    x=dataset_size,
    y=best_perf_times_size
] {log_parallel_jobs_no_wait_by_taskname/tables/arxivpapers_extensive_lora16.csv};
\end{axis}
\end{tikzpicture}
\caption{\centering fact memorization \\(arXiv)}
\label{fig:arxiv_facts}
\end{subfigure}
\caption{Fact memorization and fact accuracy capacity of LoRA finetuning. See settings in \Cref{sec:selection_experiment}.}
\label{fig:memorization}
\end{figure}
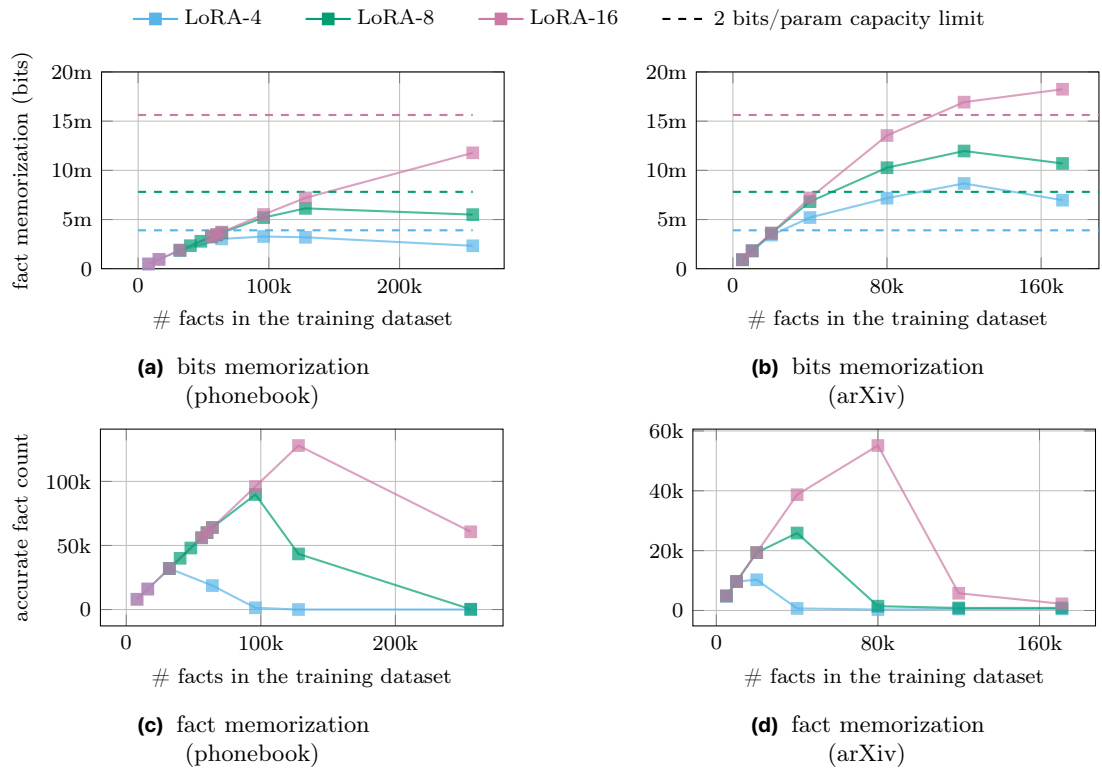

%% file: app_additional_selection_results.tex
\section{Additional Results for Our Selective Training}
\label{app:additional_selection_results}

\subsection{Additional Results for Pretraining}
\label{app:pretrain_additional_selection_results}

\input{tables/tab_phonebook_selection_unwei}
In \Cref{tab:phonebook_selection_unwei}, we show the detailed performance numbers for how our data selection boosts fact accuracy for pretraining on synthetic power law phonebook datasets. Observe that fact memorization can be seen as an unweighted average of prediction accuracy on individual facts, it is natural to ask, does data selection also boost weighted fact accuracy? Indeed in practice, certain facts are more ``important'' to memorize due to their frequent occurrences in the training dataset. In our synthetic phonebook experiments, such weights are naturally the probability of each fact in the underlying power law distributions. Thus in \Cref{tab:phonebook_selection_wei}, we present additional results for weighted fact accuracy performance, and show that it similarly improves under data selection.

\input{tables/tab_phonebook_selection_wei}

\subsection{Additional Results for LoRA-Finetuning: Boosting Fact Accuracy without Additional Forgetting}
\label{app:finetune_additional_selection_results}
We now perform data selection for LoRA finetuning on the semi-synthetic title-author mapping facts in the arXiv-papers dataset~\citep{Saga}.  Besides the finetuning setups mentioned in \Cref{app:suboptimal_fact_mem_LoRA}, we additionally tune the selection ratio $\alpha$ by doing a grid search over $\alpha\in\{0.1, 0.2, \cdots, 1.0\}$ for our selection algorithms. Our results are summarized in  \Cref{fig:forgetting}. We observe that compared to training on the full dataset, our LossHF selection (\Cref{alg:cramless}) significantly improves the fact accuracy (in accurate fact count)  on the arXiv-papers dataset, while achieving similar or better general capability performance (as measured by average accuracy across a set of standard Commonsense~\citep{zellers2019hellaswag,OpenBookQA2018,bisk2020piqa}, MMLU~\citep{hendrycks2020measuring}, and ARC~\citep{allenai:arc} Q\&A tasks following \citep{sanyal2025upweighting}). In \Cref{app:finetune_additional_selection_results}, we further show the detailed performance for each task besides their average. The trend remains roughly the same for individual task performance. This shows that our loss-based selection improves fact accuracy without worsening forgetting during finetuning.
This is consistent with the recent work~\citep{sanyal2025upweighting} that proposes to upweight low-loss samples in the training objective to reduce catastrophic forgetting during finetuning.  However, we remark that the forgetting is still severe in \Cref{fig:forgetting}, and even slightly worsens as LoRA rank increases. It remains an interesting open question to alleviate forgetting during the finetuning that is needed for memorizing new facts.
\input{figures/fig_forgetting}

In \Cref{tab:forgetting}, we further report the detailed performance for various general capabilities tasks. The trend remains roughly the same for each individual task performance, i.e., our loss-based selection improves fact accuracy without worsening forgetting, when compared to LoRA finetuning on the full dataset.

\input{tables/tab_forgetting}

\subsection{Additional Results for Wikipedia Pretraining}
\label{app:wikipedia_additional_results}

\input{fact_count}

\paragraph{Additional Dataset Details and Training Settings} We use the same train, test and validation splits as \citep{zhao2025pre}, and show the number of records and facts in each split in \Cref{tab:fact_counts}, where on average  each record contains around 10 facts.  We pretrain variants of the GPT2-Small (110m parameters), GPT2-medium (335m parameters) and GPT2-large (1.3B parameters) models with context length 1024, sinusoidal positional embeddings, ReLU activation, and post-output layer norm. We follow the hyperparameters in \citep{zhao2025pre} and train for $66000$ steps (around 8 epochs) with  batch-size $320$, using AdamW optimizer with weight decay $0.1$ and cosine learning rate scheduler with fixed learning rate 5e-4,  warmup steps 2000, and gradient clipping norm 5.0. For our selection \Cref{alg:cramless_wiki}, we tune the selection ratio to maximize train fact accuracy by grid search over $\alpha\in\{0.1, 0.2, \cdots, 1.0\}$. For the 110m parameter model,  all results are over three training runs for statistical significance.
\input{lmlmwiki_detailed_plots}

\paragraph{Detailed performance for Wikipedia Pretraining under our Data Selection} Besides the metrics presented in \Cref{ssec:wiki_pretraining_main}, we show in \Cref{fig:lmlmwiki_detail} the detailed performance trend for fact accuracy on training split, general MMLU task, as well as individual NLU tasks over different selection ratios. Observe that fact accuracy on the training split is significantly improved by data selection, similar to test fact accuracy in \Cref{fig:test_fact_accuracy_alpha}; and MMLU performance is similar in trend to Knowledge-MMLU performance in \Cref{fig:MMLU_accuracy_alpha}, despite being more noisy due to the inclusion of more reasoning or comprehension related tasks besides world knowledge; and similar to the trend of average NLU accuracy in \Cref{fig:NLU_accuracy_alpha}, the performances of individual NLU tasks (\Cref{fig:lmlmwiki_detail}(c)-(g)) remain roughly the same across different selection ratio, except for an exceedingly small selection ratio $\alpha=0.1$.

%% file: tables/tab_phonebook_selection_unwei.tex
\begin{table*}[t]
\centering
\scriptsize
\renewcommand{\arraystretch}{1.2}
\setlength{\tabcolsep}{5pt}
\begin{tabular}{l|cc|c c c|cc}
\toprule
\textbf{\# Facts in} & \multicolumn{2}{c|}{\textbf{Full Dataset}} &
\multicolumn{3}{c|}{\textbf{Oracle-Aided Selection}} &
\multicolumn{2}{c}{\textbf{Loss-based Selection}} \\
\textbf{Train Data}&  1x steps & 8x steps  & Flattened & Head & Head-Flattened & LossH (Ours) & LossHF (Ours) \\
\midrule
\multicolumn{8}{l}{\textbf{Power law Exponent $\beta = 0$}} \\
\midrule
1.60M & 1.53M & 1.53M & 1.50M & 1.55M & 1.55M & 1.55M & \textbf{1.60M} \\
1.92M & 1.78M & 1.78M & 1.80M & 1.81M & 1.76M & 1.81M & \textbf{1.91M} \\
2.24M & 1.90M (457k) & 1.90M (457k) & 1.94M & 1.90M & 1.93M (141k) & 1.88M & \textbf{2.05M} \\
2.56M & 0.99M (635k) & 1.85M (136k) & 1.10M (645k) & 1.83M (356k) & 1.93M (525k) & 1.88M & \textbf{2.04M (914k)} \\
5.12M & 0.01M (26k) & 0.01M (5k) & 0.09M & \textbf{2.06M} & 1.95M (471k) & 1.79M & 1.94M \\
10.24M & 0.00M & 0.00M & 0.00M & 1.97M & 1.87M & 1.87M & \textbf{2.11M} \\
\midrule
\multicolumn{8}{l}{\textbf{Power law Exponent $\beta = 0.5$}} \\
\midrule
1.60M & 1.44M & 1.44M & 1.55M & 1.47M & 1.53M & 1.47M & \textbf{1.60M} \\
1.92M & 1.39M & 1.39M & 1.80M & 1.47M & 1.79M & 1.52M & \textbf{1.90M} \\
2.24M & 1.22M (9k) & 1.36M (9k) & 1.98M & 1.52M & 1.76M (706k) & 1.57M & \textbf{2.15M} \\
2.56M & 1.23M (20k) & 1.20M (8k) & 1.44M (678k) & 1.52M (18k) & 1.90M (440k) & 1.54M (20k) & \textbf{2.24M (44k)} \\
5.12M & 0.93M (20k) & 0.93M (20k) & 1.02M & 1.53M & 0.32M (620k) & 1.46M & \textbf{2.16M} \\
10.24M & 0.82M & 0.82M & 0.96M & 1.51M & \textbf{1.64M} & 1.32M & 1.57M \\
\midrule
\multicolumn{8}{l}{\textbf{Power law Exponent $\beta = 1.0$}} \\
\midrule
1.60M & 1.18M & 1.17M & 1.54M & 1.19M & 1.52M & 1.13M & \textbf{1.58M} \\
1.92M & 1.05M & 0.99M & 1.80M & 1.18M & 1.47M & 1.10M & \textbf{1.82M} \\
2.24M & 0.99M (9k) & 0.93M (2k) & \textbf{1.95M} & 1.18M & 1.91M (520k) & 1.06M & 1.80M \\
2.56M & 0.95M (8k) & 0.90M (3k) & 1.40M (645k) & 1.18M (15k) & \textbf{1.83M (489k)} & 1.05M (5k) & 1.74M (23k) \\
5.12M & 0.86M (10k) & 0.85M & 0.10M & 1.19M & \textbf{1.87M (230k)} & 1.00M & 1.72M \\
10.24M & 0.78M & 0.78M & 0.70M & 1.19M & \textbf{1.87M} & 0.97M & 1.27M \\
\midrule
\multicolumn{8}{l}{\textbf{Model Size}: 110M parameters} \\
\multicolumn{8}{l}{\textbf{Accurate Fact Count Capacity Limit}: $(2\text{bits/param}) \times (110\text{M params}) / (22 \times \log_2(10) \text{ bits/fact}) = 3.01\text{M facts}$} \\
\bottomrule
\end{tabular}
\caption{Best {\bfseries fact accuracy} (in accurate fact count, i.e., the expected number of correctly answered facts) for pretraining from scratch on power-law distributed phonebook facts under  different data selection schemes. We bold the best result in each row, and show standard deviation across 10 runs in brackets for certain settings where the performances show more variances (in hindsight being the settings near the capacity threshold, i.e., where the number of model parameters roughly matches the number of facts in the training data). All settings consider sufficiently trained model with 110m parameters that trains for 800k steps, except for the 8x longer training runs which train for 6.4m steps. See \Cref{ssec:sufficient_training_setup} for hyperparameter tuning setups for training on full dataset, and see \Cref{sec:selection_experiment} for hyperparameter tuning setups for training with data selection.}\label{tab:phonebook_selection_unwei}
\end{table*}

%% file: tables/tab_phonebook_selection_wei.tex
\begin{table*}[t]
\centering
\scriptsize
\renewcommand{\arraystretch}{1.2}
\setlength{\tabcolsep}{5pt}
\begin{tabular}{l|cc|c c c|cc}
\toprule
\textbf{\# Facts in} & \multicolumn{2}{c|}{\textbf{Full Dataset}} &
\multicolumn{3}{c|}{\textbf{Oracle-Aided Selection}} &
\multicolumn{2}{c}{\textbf{Loss-based Selection}} \\
\textbf{Train Data}&  1x steps & 8x steps  & Flattened & Head & Head-Flattened & LossH (Ours) & LossHF (Ours) \\
\midrule
\multicolumn{8}{l}{\textbf{Power law Exponent $\beta = 0$}} \\
\midrule
1.60M & 0.942 & 0.942 & 0.936 & 0.962 & 0.966 (0.004) & 0.396 & \textbf{0.999} \\
1.92M & 0.929 & 0.929 & 0.935 & 0.918 & 0.917 (0.115) & 0.931 & \textbf{0.996 (0.002)} \\
2.24M & 0.872 & 0.872 & 0.864 & 0.849 & 0.834 (0.075) & 0.841 & \textbf{0.914 (0.366)} \\
2.56M & 0.677 & 0.726 & 0.739 & 0.735 & 0.739 (0.262) & 0.740 & \textbf{0.798 (0.340)} \\
5.12M & 0.003 & 0.002 & 0.017 & \textbf{0.402} & 0.380 (0.013) & 0.349 & 0.379 (0.117) \\
10.24M & 0.000 & 0.000 & 0.000 & 0.192 & 0.183 (0.015) & 0.183 & \textbf{0.204 (0.102)} \\
\midrule
\multicolumn{8}{l}{\textbf{Power law Exponent $\beta = 0.5$}} \\
\midrule
1.60M & 0.932 & 0.932 & 0.971 & 0.946 & 0.958 (0.034) & 0.944 & \textbf{0.998 (0.000)} \\
1.92M & 0.814 & 0.814 & 0.935 & 0.863 & 0.935 (0.005) & 0.855 & \textbf{0.992 (0.000)} \\
2.24M & 0.692 & 0.727 & 0.885 & 0.796 & 0.861 (0.141) & 0.797 & \textbf{0.971 (0.001)} \\
2.56M & 0.660 & 0.629 & 0.600 & 0.755 & 0.785 (0.247) & 0.724 & \textbf{0.905 (0.011)} \\
5.12M & 0.397 & 0.397 & 0.421 & 0.529 & \textbf{0.569 (0.021)} & 0.460 & 0.566 (0.018) \\
10.24M & 0.276 & 0.276 & 0.289 & 0.373 & \textbf{0.409 (0.007)} & 0.290 & 0.272 (0.092) \\
\midrule
\multicolumn{8}{l}{\textbf{Power law Exponent $\beta = 1.0$}} \\
\midrule
1.60M & 0.971 & 0.972 & 0.969 & 0.973 & 0.953 (0.015) & 0.970 & \textbf{0.998 (0.000)} \\
1.92M & 0.952 & 0.940 & 0.934 & 0.960 & 0.962 (0.007) & 0.955 & \textbf{0.995} \\
2.24M & 0.938 & 0.927 & 0.854 & 0.951 & 0.948 (0.008) & 0.942 & \textbf{0.980 (0.001)} \\
2.56M & 0.928 & 0.916 & 0.684 & 0.943 & 0.904 & 0.933 & \textbf{0.967 (0.001)} \\
5.12M & 0.883 & 0.876 & 0.034 & 0.902 & 0.899 (0.007) & 0.889 & \textbf{0.916} \\
10.24M & 0.840 & 0.840 & 0.333 & \textbf{0.865} & 0.853 (0.027) & 0.850 & 0.858 \\
\midrule
\multicolumn{8}{l}{\textbf{Model Size}: 110M parameters} \\
\bottomrule
\end{tabular}
\caption{Best {\bfseries weighted} fact accuracy (combined weight of accurately answered facts in the underlying data distribution) for pretraining from scratch on power-law distributed phonebook facts under  different data selection schemes. We bold the best result in each row, and show standard deviation across 10 runs in brackets for certain settings where the performances show more variances (in hindsight being the settings near the capacity threshold, i.e., where the number of model parameters roughly matches the number of facts in the training data). All settings consider sufficiently trained model with 110m parameters that trains for 800k steps, except for the 8x longer training runs which train for 6.4m steps. See \Cref{ssec:sufficient_training_setup} for more details on the setup for training on full dataset, and see \Cref{sec:selection_experiment} for more details on the setup for training with data selection.}\label{tab:phonebook_selection_wei}
\end{table*}

%% file: figures/fig_forgetting.tex
\begin{figure}[t]
\centering
\begin{subfigure}[b]{0.42\textwidth}
    \centering
    \begin{tikzpicture}
    \begin{axis}[
        width=\textwidth,
        height=0.75\textwidth,
        xlabel={LoRA rank $r$},
        xmode=log,
        ylabel={accurate fact count},
        legend pos=north west,
        legend style={font=\small},
        grid=major,
        grid style={dashed,gray!30},
        xtick={2,4,8,16,32},
        xticklabels={2,4,8,16,32},
        xmin=3.8, xmax=34,
        ymin=0,
        enlarge y limits=0.1,
        mark size=2.5pt,
        thick,
    ]

    \definecolor{fullcolor}{RGB}{0,114,178}
    \definecolor{cramlesscolor}{RGB}{230,159,0}

    \addplot[
        draw=none,
        forget plot,
        name path=cramless_lower,
    ] table [
        x=lora_r,
        y expr=\thisrow{memcramless}-\thisrow{memcramless_stderr},
        col sep=comma
    ] {tables/tab_forgetting.csv};

    \addplot[
        draw=none,
        forget plot,
        name path=cramless_upper,
    ] table [
        x=lora_r,
        y expr=\thisrow{memcramless}+\thisrow{memcramless_stderr},
        col sep=comma
    ] {tables/tab_forgetting.csv};

    \addplot[cramlesscolor, fill opacity=0.2, forget plot] fill between[of=cramless_lower and cramless_upper];

    \addplot+[
        cramlesscolor,
        mark=*,
        line width=1.2pt,
        mark options={fill=cramlesscolor},
    ] table [
        x=lora_r,
        y=memcramless,
        col sep=comma
    ] {tables/tab_forgetting.csv};
    \addlegendentry{LossHF (Ours)}

    \addplot[
        draw=none,
        forget plot,
        name path=full_lower,
    ] table [
        x=lora_r,
        y expr=\thisrow{memfull}-\thisrow{memfull_stderr},
        col sep=comma
    ] {tables/tab_forgetting.csv};

    \addplot[
        draw=none,
        forget plot,
        name path=full_upper,
    ] table [
        x=lora_r,
        y expr=\thisrow{memfull}+\thisrow{memfull_stderr},
        col sep=comma
    ] {tables/tab_forgetting.csv};

    \addplot[fullcolor, fill opacity=0.2, forget plot] fill between[of=full_lower and full_upper];

    \addplot+[
        fullcolor,
        mark=square*,
        line width=1.2pt,
        mark options={fill=fullcolor},
    ] table [
        x=lora_r,
        y=memfull,
        col sep=comma
    ] {tables/tab_forgetting.csv};
    \addlegendentry{Full-Dataset}

    \addplot[
        black,
        dashed,
        line width=1.2pt,
        domain=2:34,
    ] {0.01};
    \addlegendentry{Pretrained}

    \end{axis}
    \end{tikzpicture}
    \caption{Fact Accuracy}
    \label{fig:forgetting_memorization}
\end{subfigure}
\hspace{1cm}
\begin{subfigure}[b]{0.42\textwidth}
    \centering
    \begin{tikzpicture}
    \begin{axis}[
        width=\textwidth,
        height=0.75\textwidth,
        xlabel={LoRA rank $r$},
        ylabel={average task accuracy},
        xmode=log,
        grid=major,
        grid style={dashed,gray!30},
        xtick={2,4,8,16,32},
        xticklabels={2,4,8,16,32},
        xmin=3.8, xmax=34,
        ymin=0.41, ymax=0.52,
        mark size=2.5pt,
        thick,
    ]

    \definecolor{fullcolor}{RGB}{0,114,178}
    \definecolor{cramlesscolor}{RGB}{230,159,0}

    \addplot[
        draw=none,
        forget plot,
        name path=full_lower_fg,
    ] table [
        x=lora_r,
        y expr=\thisrow{full}-\thisrow{full_stderr},
        col sep=comma
    ] {tables/tab_forgetting.csv};

    \addplot[
        draw=none,
        forget plot,
        name path=full_upper_fg,
    ] table [
        x=lora_r,
        y expr=\thisrow{full}+\thisrow{full_stderr},
        col sep=comma
    ] {tables/tab_forgetting.csv};

    \addplot[fullcolor, fill opacity=0.2, forget plot] fill between[of=full_lower_fg and full_upper_fg];

    \addplot+[
        fullcolor,
        mark=square*,
        line width=1.2pt,
        mark options={fill=fullcolor},
    ] table [
        x=lora_r,
        y=full,
        col sep=comma
    ] {tables/tab_forgetting.csv};

    \addplot[
        draw=none,
        forget plot,
        name path=cramless_lower_fg,
    ] table [
        x=lora_r,
        y expr=\thisrow{cramless}-\thisrow{cramless_stderr},
        col sep=comma
    ] {tables/tab_forgetting.csv};

    \addplot[
        draw=none,
        forget plot,
        name path=cramless_upper_fg,
    ] table [
        x=lora_r,
        y expr=\thisrow{cramless}+\thisrow{cramless_stderr},
        col sep=comma
    ] {tables/tab_forgetting.csv};

    \addplot[cramlesscolor, fill opacity=0.2, forget plot] fill between[of=cramless_lower_fg and cramless_upper_fg];

    \addplot+[
        cramlesscolor,
        mark=*,
        line width=1.2pt,
        mark options={fill=cramlesscolor},
    ] table [
        x=lora_r,
        y=cramless,
        col sep=comma
    ] {tables/tab_forgetting.csv};

    \addplot[
        black,
        dashed,
        line width=1.2pt,
        domain=2:34,
    ] {0.5145};

    \end{axis}
    \end{tikzpicture}
    \caption{General Capabilities Performance}
    \label{fig:forgetting_general}
\end{subfigure}
\caption{Performance comparison between Full-Dataset Training and our LossHF Selection \Cref{alg:cramless} for LoRA finetuning across different ranks on title-authors mapping facts in the arXiv-papers dataset. All  performances are averaged across three runs.  See \Cref{sec:selection_experiment} for more details on the settings, and see \Cref{tab:forgetting} for detailed performances of each general capability task.}
\label{fig:forgetting}
\end{figure}
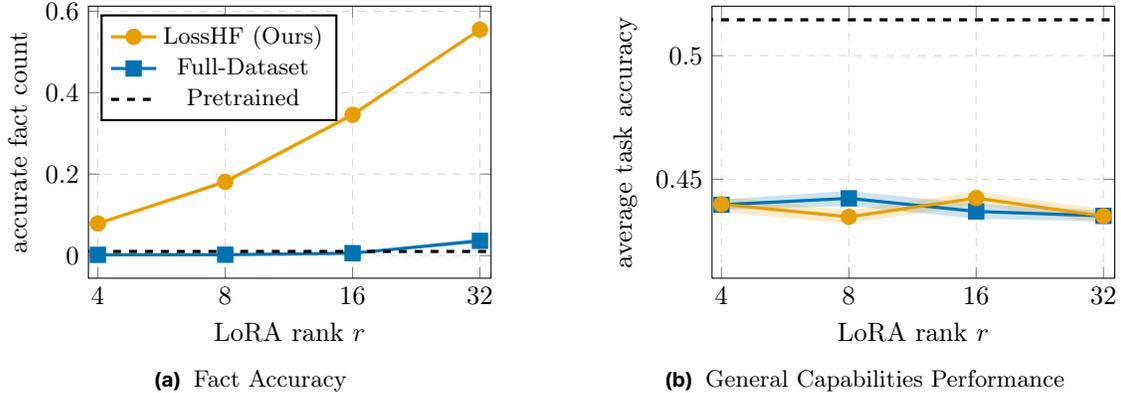

%% file: tables/tab_forgetting.tex
\begin{table*}[t]
\centering
\footnotesize
\renewcommand{\arraystretch}{1.2}
\setlength{\tabcolsep}{5pt}
\begin{tabular}{l|c c c|c |c c|c}
\toprule
\multirow{2}{*}{\textbf{Method}} &
\multicolumn{6}{c|}{\textbf{General Capability Accuracy}} &
\textbf{Target Accuracy} \\
& \multicolumn{3}{c|}{Commonsense}  & MMLU & \multicolumn{2}{c|}{ARC}  & Fact Accuracy (\%) \\
&  Hellaswag & OpenbookQA & PiQA & & Easy & Challenge & on arXiv Papers  \\
\midrule
Pre-trained & 63.1 (0.10) & 36.7 (0.14) & 74.8 (0.09) & 30.7 (0.24) & 65.6 (0.19) & 36.8 (0.10) & 0.1 \\
\midrule
\multicolumn{8}{l}{\textbf{LoRA $r=4$}} \\
\midrule
Full Data  & 54.2 (0.16) & 32.7 (0.79) & 67.8 (0.36) & 24.6 (0.23) & 53.6 (0.82) & 31.0 (0.37) & 0.2 (0.03)\\
\textbf{LossHF (Ours)}  & 55.2 (0.17) & 34.3 (1.35) & 67.7 (0.80) & 26.0 & 50.5 (0.81) & 30.3 (0.65) & 7.9 (0.41) \\
\midrule
\multicolumn{8}{l}{\textbf{LoRA $r=8$}} \\
\midrule
Full Data  & 55.3 (0.39) & 34.2 (1.17) & 65.9 (0.35) & 27.0 (0.78) & 51.6 (0.24) & 31.4 (0.90) & 0.2 (0.06)\\
\textbf{LossHF (Ours)}  & 56.2 (0.43) & 33.7 (0.74) & 67.2 (0.33) & 25.0 (0.86) & 48.3 (0.83) & 30.4 (0.56) & 18.1 (0.57) \\
\midrule
\multicolumn{8}{l}{\textbf{LoRA $r=16$}} \\
\midrule
Full Data  & 56.3 (0.20) & 34.4 (1.25) & 65.5 (0.73) & 25.8 (0.56) & 49.0 (0.81) & 31.2 (0.45) & 0.6 (0.03)\\
\textbf{LossHF (Ours)}  & 56.5 (0.03) & 33.7 (0.98) & 66.7 (0.95) & 26.4 (0.59) & 50.7 (0.17) & 31.5 (0.36) & 34.6 (0.72) \\
\midrule
\multicolumn{8}{l}{\textbf{LoRA $r=32$}} \\
\midrule
Full Data  & 55.9 (0.10) & 32.9 (0.71) & 64.4 (0.25) & 26.8 (0.25) & 49.5 (0.83) & 31.6 (0.67) & 3.6 (0.69)\\
\textbf{LossHF (Ours)}  & 55.9 (0.17) & 33.8 (1.22) & 65.1 (0.36) & 26.9 (0.58) & 48.5 (0.59) & 30.9 (0.75) & 55.5 (0.20) \\
\bottomrule
\end{tabular}
\caption{Our selective training improves fact accuracy without worsening forgetting during LoRA finetuning on title-authors mapping facts in the arXiv-papers dataset. We show mean and standard deviation (in brackets) across three runs.}\label{tab:forgetting}
\end{table*}

%% file: fact_count.tex
\begin{table}
\caption{Fact counts in processed Wikipedia corpus}
\label{tab:fact_counts}
\centering
\begin{tabular}{lrr}
\toprule
Split & Facts & Records \\
\midrule
train & 59670093 & 6245642 \\
validation & 733 & 75 \\
test & 7135 & 830 \\
Total & 59677961 & 6246547 \\
\bottomrule
\end{tabular}
\end{table}

%% file: lmlmwiki_detailed_plots.tex
\pgfplotstableread[col sep=comma]{lmlmwiki_loss_mean_std_110m.csv}\datatableA
\pgfplotstableread[col sep=comma]{lmlmwiki_loss_mean_std_335m.csv}\datatableB
\pgfplotstableread[col sep=comma]{lmlmwiki_loss_mean_std_1_3B.csv}\datatableC

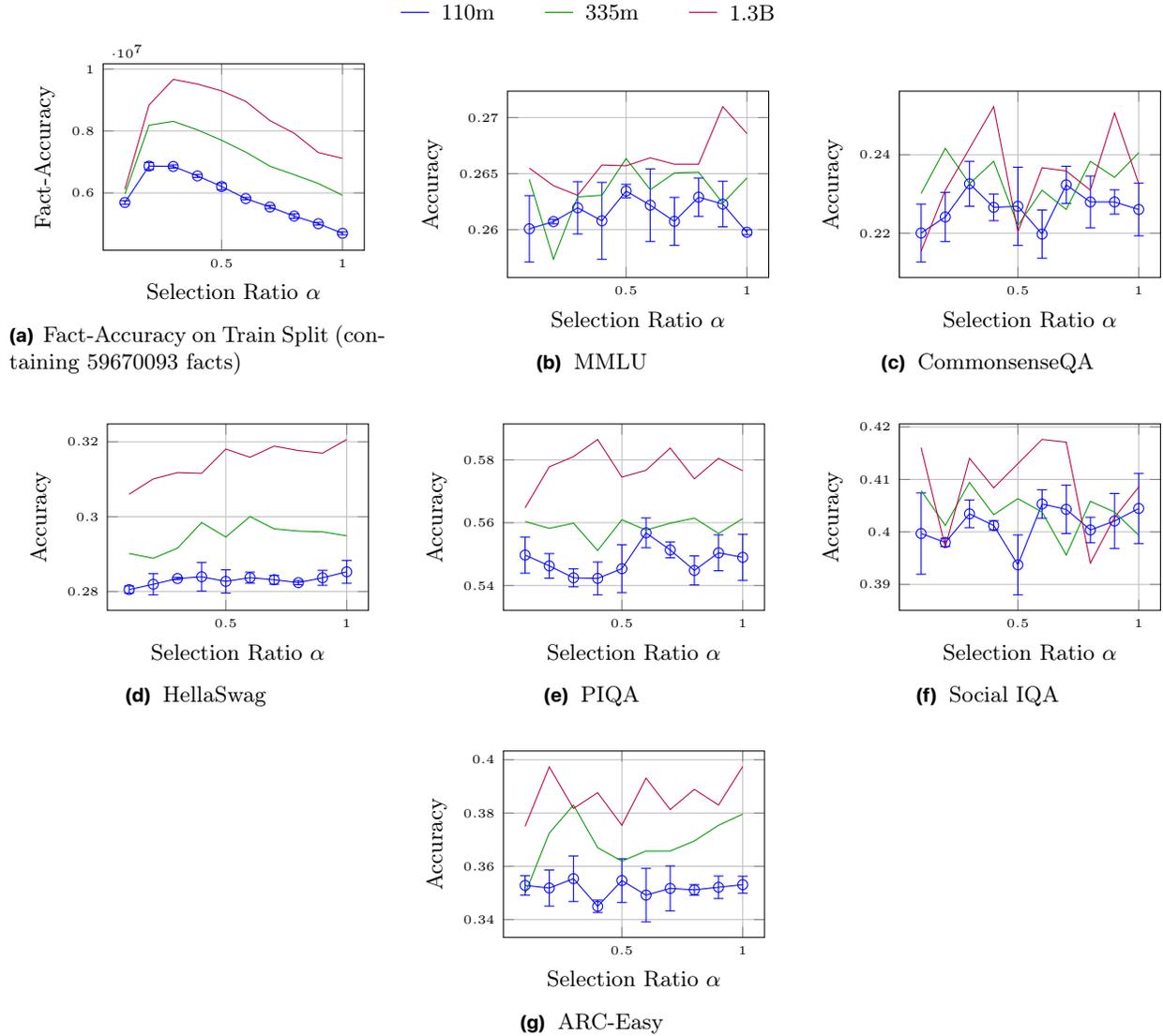
\begin{figure}[htbp]
\centering

\begin{tikzpicture}[baseline=-0.5ex]
\draw[blue] (0,0) -- (0.4,0);
\node[right] at (0.45,0) {\small 110m};
\end{tikzpicture}
\hspace{1em}
\begin{tikzpicture}[baseline=-0.5ex]
\draw[green!60!black] (0,0) -- (0.4,0);
\node[right] at (0.45,0) {\small 335m};
\end{tikzpicture}
\hspace{1em}
\begin{tikzpicture}[baseline=-0.5ex]
\draw[purple] (0,0) -- (0.4,0);
\node[right] at (0.45,0) {\small 1.3B};
\end{tikzpicture}

\vspace{0.2cm}

\begin{subfigure}[b]{0.32\textwidth}
\centering
\begin{tikzpicture}
\begin{axis}[
   width=\textwidth,
   height=0.8\textwidth,
   xlabel={Selection Ratio $\alpha$},
   ylabel={Fact-Accuracy},
   grid=major,
   tick label style={font=\tiny},
   label style={font=\small},
]
\addplot[mark=o, blue, error bars/.cd, y dir=both, y explicit] table[x=fil_rate, y expr=\thisrow{train_perfect_fact_full_acc_mean}*16879403+\thisrow{train_missed_corrected_fact_full_acc_mean}*42790690, y error expr=\thisrow{train_perfect_fact_full_acc_std}*16879403+\thisrow{train_missed_corrected_fact_full_acc_std}*42790690] {\datatableA};
\addplot[no markers, green!60!black] table[x=fil_rate, y expr=\thisrow{train_perfect_fact_full_acc_mean}*16879403+\thisrow{train_missed_corrected_fact_full_acc_mean}*42790690] {\datatableB};
\addplot[no markers, purple] table[x=fil_rate, y expr=\thisrow{train_perfect_fact_full_acc_mean}*16879403+\thisrow{train_missed_corrected_fact_full_acc_mean}*42790690] {\datatableC};
\end{axis}
\end{tikzpicture}
\caption{Fact-Accuracy on Train Split (containing 59670093 facts)}
\end{subfigure}
\hfill
\begin{subfigure}[b]{0.32\textwidth}
\centering
\begin{tikzpicture}
\begin{axis}[
   width=\textwidth,
   height=0.8\textwidth,
   xlabel={Selection Ratio $\alpha$},
   ylabel={Accuracy},
   grid=major,
   tick label style={font=\tiny, /pgf/number format/precision=3, /pgf/number format/fixed},
   label style={font=\small},
]
\addplot[mark=o, blue, error bars/.cd, y dir=both, y explicit] table[x=fil_rate, y=mmlc_0_mean, y error=mmlc_0_std] {\datatableA};
\addplot[no markers, green!60!black] table[x=fil_rate, y=mmlc_0_mean] {\datatableB};
\addplot[no markers, purple] table[x=fil_rate, y=mmlc_0_mean] {\datatableC};
\end{axis}
\end{tikzpicture}
\caption{MMLU}
\end{subfigure}
\hfill
\begin{subfigure}[b]{0.32\textwidth}
\centering
\begin{tikzpicture}
\begin{axis}[
   width=\textwidth,
   height=0.8\textwidth,
   xlabel={Selection Ratio $\alpha$},
   ylabel={Accuracy},
   grid=major,
   tick label style={font=\tiny, /pgf/number format/precision=3, /pgf/number format/fixed},
   label style={font=\small},
]
\addplot[mark=o, blue, error bars/.cd, y dir=both, y explicit] table[x=fil_rate, y=commonsense_qa_mean, y error=commonsense_qa_std] {\datatableA};
\addplot[no markers, green!60!black] table[x=fil_rate, y=commonsense_qa_mean] {\datatableB};
\addplot[no markers, purple] table[x=fil_rate, y=commonsense_qa_mean] {\datatableC};
\end{axis}
\end{tikzpicture}
\caption{CommonsenseQA}
\end{subfigure}

\vspace{0.5cm}

\begin{subfigure}[b]{0.32\textwidth}
\centering
\begin{tikzpicture}
\begin{axis}[
   width=\textwidth,
   height=0.8\textwidth,
   xlabel={Selection Ratio $\alpha$},
   ylabel={Accuracy},
   grid=major,
   tick label style={font=\tiny, /pgf/number format/precision=3, /pgf/number format/fixed},
   label style={font=\small},
]
\addplot[mark=o, blue, error bars/.cd, y dir=both, y explicit] table[x=fil_rate, y=hellaswag_mean, y error=hellaswag_std] {\datatableA};
\addplot[no markers, green!60!black] table[x=fil_rate, y=hellaswag_mean] {\datatableB};
\addplot[no markers, purple] table[x=fil_rate, y=hellaswag_mean] {\datatableC};
\end{axis}
\end{tikzpicture}
\caption{HellaSwag}
\end{subfigure}
\hfill
\begin{subfigure}[b]{0.32\textwidth}
\centering
\begin{tikzpicture}
\begin{axis}[
   width=\textwidth,
   height=0.8\textwidth,
   xlabel={Selection Ratio $\alpha$},
   ylabel={Accuracy},
   grid=major,
   tick label style={font=\tiny, /pgf/number format/precision=3, /pgf/number format/fixed},
   label style={font=\small},
]
\addplot[mark=o, blue, error bars/.cd, y dir=both, y explicit] table[x=fil_rate, y=piqa_mean, y error=piqa_std] {\datatableA};
\addplot[no markers, green!60!black] table[x=fil_rate, y=piqa_mean] {\datatableB};
\addplot[no markers, purple] table[x=fil_rate, y=piqa_mean] {\datatableC};
\end{axis}
\end{tikzpicture}
\caption{PIQA}
\end{subfigure}
\hfill
\begin{subfigure}[b]{0.32\textwidth}
\centering
\begin{tikzpicture}
\begin{axis}[
   width=\textwidth,
   height=0.8\textwidth,
   xlabel={Selection Ratio $\alpha$},
   ylabel={Accuracy},
   grid=major,
   tick label style={font=\tiny, /pgf/number format/precision=3, /pgf/number format/fixed},
   label style={font=\small},
]
\addplot[mark=o, blue, error bars/.cd, y dir=both, y explicit] table[x=fil_rate, y=social_iqa_revised_mean, y error=social_iqa_revised_std] {\datatableA};
\addplot[no markers, green!60!black] table[x=fil_rate, y=social_iqa_revised_mean] {\datatableB};
\addplot[no markers, purple] table[x=fil_rate, y=social_iqa_revised_mean] {\datatableC};
\end{axis}
\end{tikzpicture}
\caption{Social IQA}
\end{subfigure}

\vspace{0.5cm}

\begin{subfigure}[b]{0.32\textwidth}
\centering
\begin{tikzpicture}
\begin{axis}[
   width=\textwidth,
   height=0.8\textwidth,
   xlabel={Selection Ratio $\alpha$},
   ylabel={Accuracy},
   grid=major,
   tick label style={font=\tiny, /pgf/number format/precision=3, /pgf/number format/fixed},
   label style={font=\small},
]
\addplot[mark=o, blue, error bars/.cd, y dir=both, y explicit] table[x=fil_rate, y=arc_easy_mean, y error=arc_easy_std] {\datatableA};
\addplot[no markers, green!60!black] table[x=fil_rate, y=arc_easy_mean] {\datatableB};
\addplot[no markers, purple] table[x=fil_rate, y=arc_easy_mean] {\datatableC};
\end{axis}
\end{tikzpicture}
\caption{ARC-Easy}
\end{subfigure}

\caption{Performance details for Fact-Accuracy on Training split, full MMLU, and individual NLU tasks for using our data selection \cref{alg:cramless_wiki} in pretraining on annotated Wikipedia Corpus (3B tokens) with 66k steps and batch-size 320 (roughly 8 epochs). For the 110m model, we repeat 3 runs and report mean and standard deviation (shown in error bars).}
\label{fig:lmlmwiki_detail}
\end{figure}

%% file: ablations.tex
\section{Ablation Experiments and Discussions}
\label{sec:ablation_and_discussions}

In this section, we perform ablation experiments to understand the inner working of our selection algorithms, as well as discuss design choices and computation cost of our selection algorithms.

\subsection{Comparing to Oracle-Aided Head Selection and Flattening}
\label{sec:ablation_oracle}

To isolate the benefits of the head selection and the flattening step in our data selection schemes, as well as to understand how accurate is loss as a proxy for the underlying fact frequency, in this section, we compare our selection \Cref{alg:cramless} with the following oracle-aided baselines that have precise knowledge of what fact each training record corresponds to, and of the frequency of each fact in the training dataset distribution.

\begin{enumerate}
    \item \textbf{Head}: Select $O\left(\frac{\ln|\mathcal{W}|}{b}\right)$ facts in the training dataset with the highest frequency. This is similar to LossH selection in our \Cref{alg:cramless}, where the loss is replaced by the inverse of  ground-truth frequency of the fact for record $x$.
    \item \textbf{Head-Flattened}: In addition to selection, decrease the sampling probability for facts with high weights to equalize expected non-zero frequencies. This is similar to LossHF selection in \Cref{alg:cramless}, where loss is replaced with  the inverse of  ground-truth frequency of the fact for record $x$.
    \item \textbf{Flattened:} Only reduce the sampling probability for facts with high weights, but do not throw away any tail facts with low weights. This is an baseline algorithm to understand the effect of flattening without head selection, and is similar to LossHF selection in \Cref{alg:cramless}, where loss is replaced with the inverse of ground-truth frequency of the fact for record $x$, and the sampling probability for low-frequency facts is set to one instead of zero.
\end{enumerate}

Our results are summarized in \Cref{tab:phonebook_selection_unwei}, where for most entries, we show the optimal run after tuning training hyperparameters following \Cref{sec:cap_speed_mem_standard_training} and tuning the selection ratio via grid search over $\alpha\in\{0.1, 0.2, \cdots, 1.0\}$; for settings where the model and training dataset are at capacity of each other, we observe high variance of results across repeated runs with the same hyperparameters (intuitively due to an edge-of-stability phenomenon near capacity threshold), and thus report the median (standard deviation) across 10 repeated runs. Our first observation is that among the three oracle-aided methods, Head-Flattened consistently performs the best, significantly outperforming other oracle-aided baselines including Flattened and Head. The comparison between Flattened and Head shows some nuances: Flattened outperforms Head when the model is sufficiently large to fit all facts in the training dataset (facts$\leq$2560000); by contrast Head outperforms Flattened when the number of facts in the training dataset exceeds model fitting power (facts$\geq$2560000). This validates that both the flattening step (which down-samples facts that appear too frequently) and the head selection step (which removes rare facts that exceed the model's fitting power) are necessary for boosting fact memorization to the capacity limit.

Our second observation is that our LossHF and LossH selection (\Cref{alg:cramless}), despite solely using loss for selection (and not having any prior knowledge on the fact frequency or entropy), consistently perform on par with the oracle-aided Head-Flattened selection and Head selection respectively. LossHF (LossH) are only worse than Head-Flattened (Head) when the training dataset contains an extremely large number of facts that are distributed as a power law with high exponent ($\beta=1.0$). This validates the \textbf{\textit{effectiveness of using loss alone}} to approximately distinguish rare facts versus redundant facts, as also illustrated in the data usage histograms \Cref{fig:hist_data_usage} for various selection methods.  In \Cref{ssec:loss_selection_score}, we  discuss additional nuances in the design of loss-based selection score.

\subsection{On Design Choices of Loss-based Selection Score}
\label{ssec:loss_selection_score}
There are many  confounding factors that may affect the quality of loss-based approximation for fact weight, such as the training time, sequence length, and the hardness of each training data sequence. Below we discuss important design choices in our loss-based selection score to adapt to these factors. 

\paragraph{Why not select by token-level loss?} Many data selection methods for language model training operate at the token level, including the celebrated Rho-1~\citep{lin2024not} method. In this paper, we choose to perform selection based on per-record loss (\Cref{alg:cramless}) or per-fact loss (\Cref{alg:cramless_wiki}) rather than per-token loss, to preserve the boundary of each fact and ensure the memorization of the whole fact. Indeed, in validation experiments we observe that  under the same settings of \Cref{tab:forgetting}, a token-level selection variant of \Cref{alg:cramless} would ignore fact boundaries, and ultimately still results in zero fact memorization no matter what selection ratio is used, i.e., yielding suboptimal fact accuracy similar to  training on full-dataset. 

\input{figures/fig_cali_speed}
\paragraph{Why do we need an online threshold?} This is intuitively to adapt to the training dynamics. Indeed, in experiments (\Cref{fig:cali_speed}) we observe that the Spearman rank correlation between negative per-sequence loss and fact-weight generally improve as training proceeds, especially when the training data is above the capacity of the model (i.e., when the fact accuracy is low) which is precisely the regime that we are interested in improving in this paper.

\paragraph{How to Calibrate Loss to Sequence Difficulty?} To control difficulty levels of training data sequences, we further vary the prefix length and suffix length of our synthetic phonebook dataset. We construct a heterogeneous phonebook dataset that consists of $16$ equal-sized groups of different difficulty levels, specified by different prefix length (among $\{6, 9, 12, 15\}$) and suffix length (among $\{12, 18, 24, 30\}$) of phonebook records. The records in each group follows a power law distribution with exponent $1.0$. Interestingly, in this setting with manually introduced heterogeneous facts, we observe in \Cref{fig:cal_seq_len} that the \textit{sum of per-token loss over a sequence} incurs significantly stronger rank correlation to the fact weight-to-bits ratio, when compared to the \textit{average of per-token loss over a sequence}. This benefit of using sum rather than average of per-token loss is consistent with the form of sum-of-loss-based memorization lower bounds in our~\Cref{cor:mem_lower_loss} as well as in  prior works \citet[Theorem 3.2]{allen2024physics} and \citet[Section 2.3]{morris2025much}.

\input{figures/fig_seq_len}

\subsection{On the Computational Cost of Our Selection Algorithms}

\paragraph{Increased training cost of selection due to batch accumulation} In \Cref{alg:cramless}, the batch accumulation step keeps the number of backward passes in each iteration constant (proportional to $b$), but increase the number of forward passes per iteration (as one may need to compute loss on multiple batches to select $b$ records), thus increasing the required training FLOPs per iteration. Nevertheless, we strive to make a fair comparison between training with and without data selection, by focusing on the setting of sufficient training. In such regimes, the performance gain of increasing FLOPs via longer training (8x training steps) on the full dataset is negligible, as shown by \Cref{fig:detailed_gap_power} and \Cref{tab:phonebook_selection_unwei}. We leave it as an interesting open problem as to comparing training with and without data selection in the bounded training FLOPs regimes.

\paragraph{Cost of Tuning the Selection Ratio $\alpha$} Another potentially computationally expensive component of training with data selection, is to determine $\alpha$, i.e., the fraction of dataset to keep. In all experiments of the paper, we perform grid search for the optimal $\alpha$, as we focus in the sufficient training FLOPs setting. However, to understand the possibility of more \textit{efficiently} choosing $\alpha$ in practice, we further investigate the below two questions.

\begin{enumerate}
    \item Is it possible to \textbf{\textit{build data selection scaling laws}} to \textbf{\textit{predict optimal $\alpha$ for large models from tuned $\alpha$ for small models}}? Suppose that the underlying task rely on power-law distributed facts $\text{fact}_1, \cdots, \text{fact}_N$ with power law exponent $\beta$, i.e., $\Pr\left[\text{fact}_i\right] \propto \frac{1}{i^\beta}$ and suppose that each fact on average contains $b$ bits of information. Then the maximal fraction of training dataset that can be memorized on model in discrete space $\mathcal{W}$ (with proportional to $\ln|\mathcal{W}|$ parameters) is as follows. 
    \begin{align}
        \alpha(\mathcal{W})\propto\sum_{i=1}^{\ln |\mathcal{W}| /b}\frac{1}{i^\beta} \propto \begin{cases}
            1 -  \text{constant}\cdot \frac{1}{ \left(\ln|\mathcal{W}| \right) ^{\beta-1}}  & \beta>1\\
            \ln  \ln |\mathcal{W}| - \text{constant}  & \beta = 1\\
             \text{constant}\cdot \ln|\mathcal{W}| ^{1-\beta} - 1& 0<\beta<1\\
            \ln|\mathcal{W}| & \beta = 0
        \end{cases}
    \end{align}
    
    \input{figures/selection_lora}
    Empirically, we observe in \Cref{fig:alpha_model_size} that for LoRA finetuning with different ranks on the real-world arxiv-papers dataset, such a scaling law with zero power law exponent (i.e., uniform distribution) emerges, i.e., as model size increases, the optimal $\alpha$ increases proportionally. This suggests that arXiv papers' title-to-authors mapping is close to uniformly distributed.
    \item Can we  \textbf{\textit{predict optimal $\alpha$ for final trained model from tuned $\alpha$ on earlier training checkpoints}?} This potentially allows one to perform faster search of $\alpha$ by training for a fewer number of steps. Intuitively, one may expect this approach to be feasible due to the hypothetical monotonic relationship between fact memorization at earlier training steps versus the fact memorization of final trained model. Perhaps surprisingly, in \Cref{fig:alpha_training_time}, we partially refute this hypothesis by observing that the optimal alpha first increases and then stabilizes as training proceeds. This suggests that the optimal selection ratio changes between the infinite training FLOPs setting (more training steps) and the bounded training FLOPs setting (earlier checkpoints). We leave optimal data selection under bounded training FLOPs as interesting open problem.
\end{enumerate}

%% file: figures/fig_cali_speed.tex
\setlength{\intextsep}{1pt}
\begin{figure}
    \centering
    \resizebox{0.37\textwidth}{!}{
    \begin{tikzpicture}

    \begin{axis}[
        legend cell align={left},
        legend style={
          fill opacity=0.8,
          draw opacity=1,
          text opacity=1,
          at={(0.82,0.2)},
          anchor=south east,
          draw=white!80!black
        },
        tick align=outside,
        tick pos=left,
        x grid style={white!70!black},
        xlabel={\# tokens seen / total tokens in all facts},
        xmin=0, xmax=800,
        xtick={0, 100, 200, 300, 400, 500, 600, 700, 800},
        xticklabels={0, 100, 200, 300, 400, 500, 600, 700, 800},
        xtick style={color=black},
        scaled ticks=false,
        y grid style={white!70!black},
        ylabel={},
        ymin=0, ymax=1,
        ytick={0.25, 0.5, 0.75,1.0},
        yticklabels={0.25, 0.5, 0.75,1.0},
        ytick style={color=black},
        scaled y ticks=false,
        ylabel near ticks,
        ylabel shift={0pt}
    ]

    \addplot [green!50!black, mark=*, mark size=1.5, mark options={solid,fill=green!50!black}] table[col sep=comma,x=epoch,y expr=-\thisrow{autoloss_vs_weights}] {evals/mem_speed/L12D768H12_power0_5_sample640000_vqnwd56zfg/aaa_spearmanr.csv};
    \addlegendentry{640k facts (correlation)}

    \addplot [blue, mark=*, mark size=1.5, mark options={solid,fill=blue}] table[col sep=comma,x=epoch,y expr=-\thisrow{autoloss_vs_weights}] {evals/mem_speed/L12D768H12_power0_5_sample2560000_6fne4nfrmx/aaa_spearmanr.csv};
    \addlegendentry{2.56m facts (correlation)}

    \addplot [orange, mark=*, mark size=1.5, mark options={solid,fill=orange}] table[col sep=comma,x=epoch,y expr=-\thisrow{autoloss_vs_weights}] {evals/mem_speed/L12D768H12_power0_5_sample10240000_cau4bms2am/aaa_spearmanr.csv};
    \addlegendentry{10.24m facts (correlation)}

    \addplot [green!50!black, dashed, mark=-, mark size=1.5, mark options={solid,fill=green!50!black}] table[col sep=comma,x=epoch,y=valid_prefixed_full_acc] {evals/mem_speed/L12D768H12_power0_5_sample640000_vqnwd56zfg/aaa_spearmanr.csv};
    \addlegendentry{640k facts (accuracy)}

    \addplot [blue, dashed, mark=-, mark size=1.5, mark options={solid,fill=blue}] table[col sep=comma,x=epoch,y=valid_prefixed_full_acc] {evals/mem_speed/L12D768H12_power0_5_sample2560000_6fne4nfrmx/aaa_spearmanr.csv};
    \addlegendentry{2.56m facts (accuracy)}

    \addplot [orange, dashed, mark=-, mark size=1.5, mark options={solid,fill=orange}] table[col sep=comma,x=epoch,y=valid_prefixed_full_acc] {evals/mem_speed/L12D768H12_power0_5_sample10240000_cau4bms2am/aaa_spearmanr.csv};
    \addlegendentry{10.24m facts (accuracy)}

    \end{axis}

    \end{tikzpicture}
    }
    \caption{Dynamics of Spearman rank correlation (between negative per-sequence loss and fact weight) and fact accuracy in training a 110m model on synthetic power law   (with exponent $0.5$)  distributed phonebook datasets containing different number of facts. See detailed settings in \Cref{ssec:sufficient_training_setup}. }
    \label{fig:cali_speed}
\end{figure}
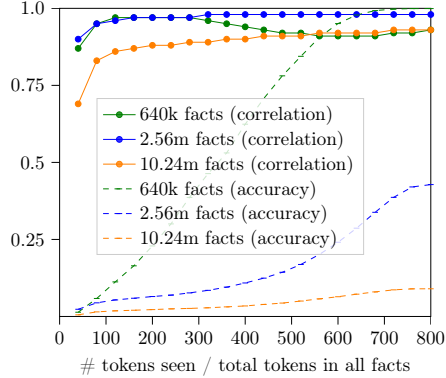

%% file: figures/fig_seq_len.tex
\begin{figure*}
    \centering
    \begin{tabular}{cc}
       \includegraphics[width=0.4\textwidth]{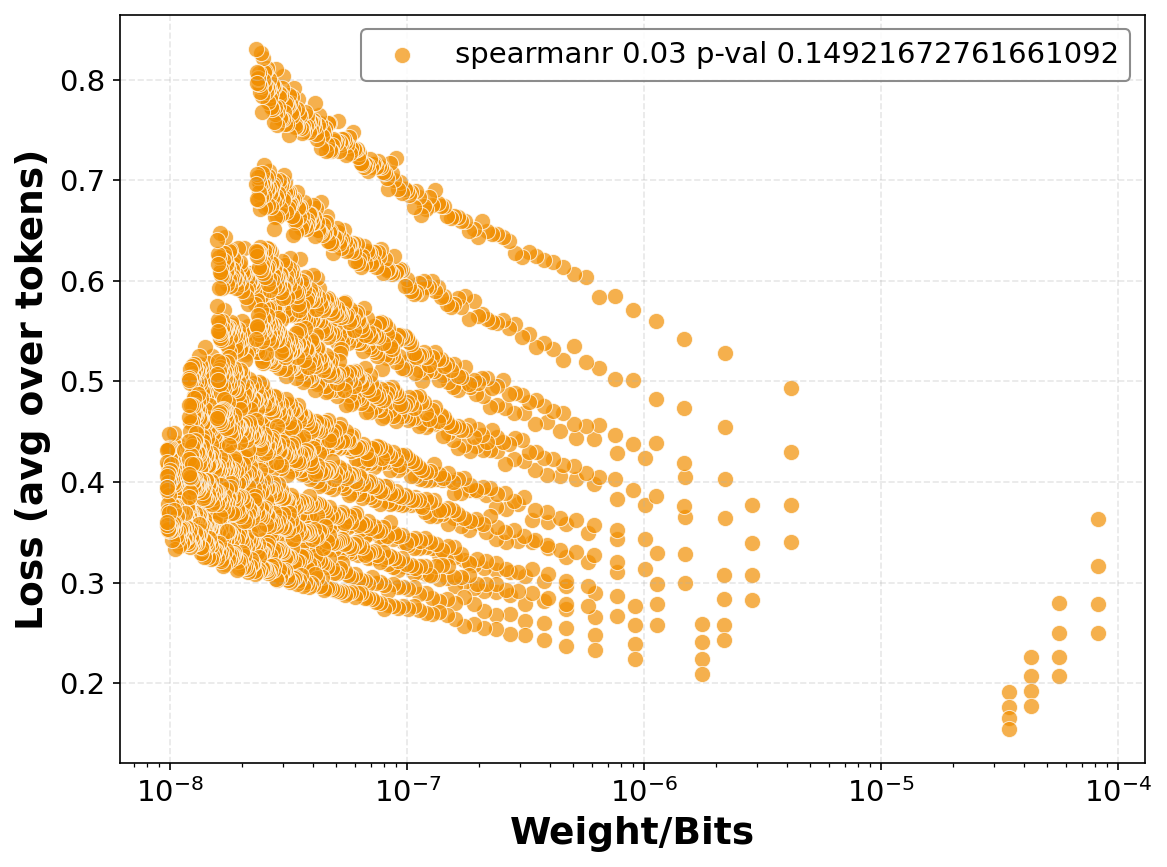}  & \includegraphics[width=0.4\textwidth]{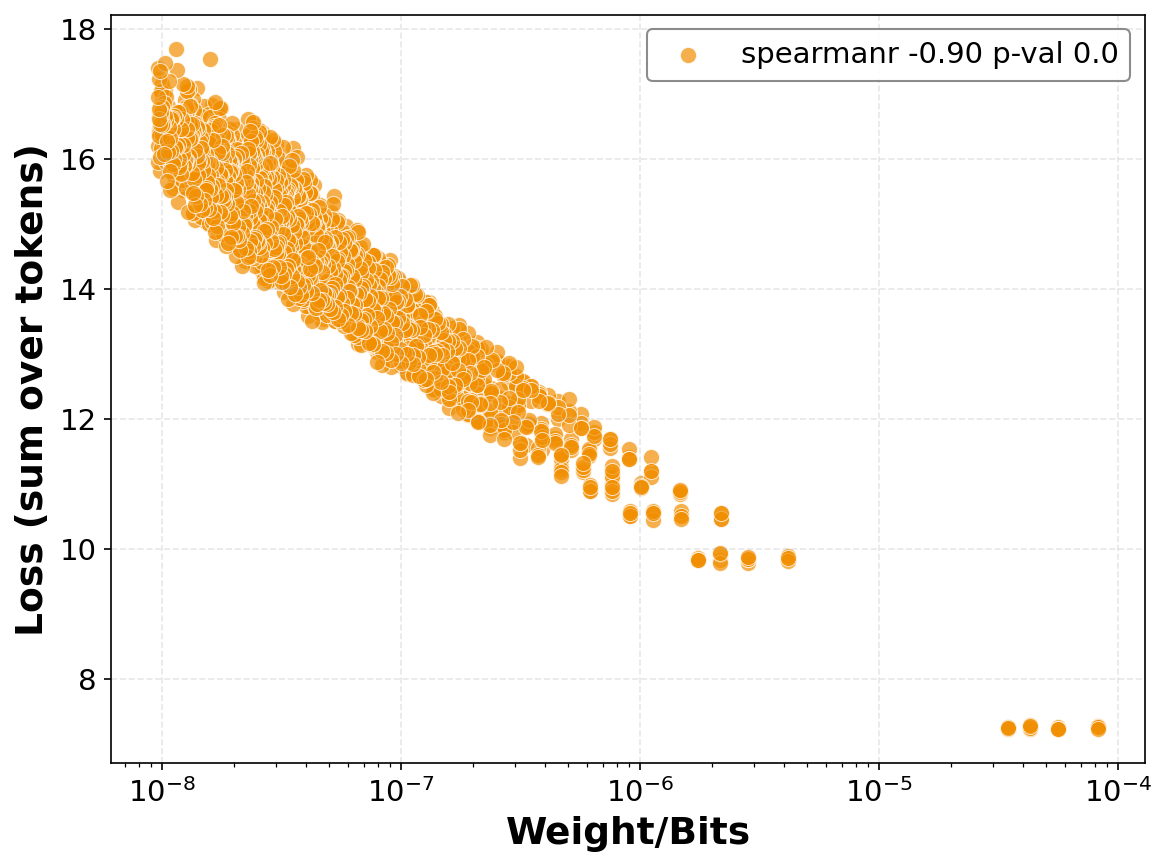}  
    \end{tabular}
    \caption{Sum of per-token loss over a sequence (right plot) shows stronger rank correlation to the fact-weight-to-bits, when compared to average of per-token loss over a sequence (left plot). We consider the setting of training on mixture of power law phonebook datasets with heterogeneous prefix lengths and suffix lengths. See detailed settings in \Cref{ssec:loss_selection_score}.}
    \label{fig:cal_seq_len}
\end{figure*}

%% file: figures/selection_lora.tex
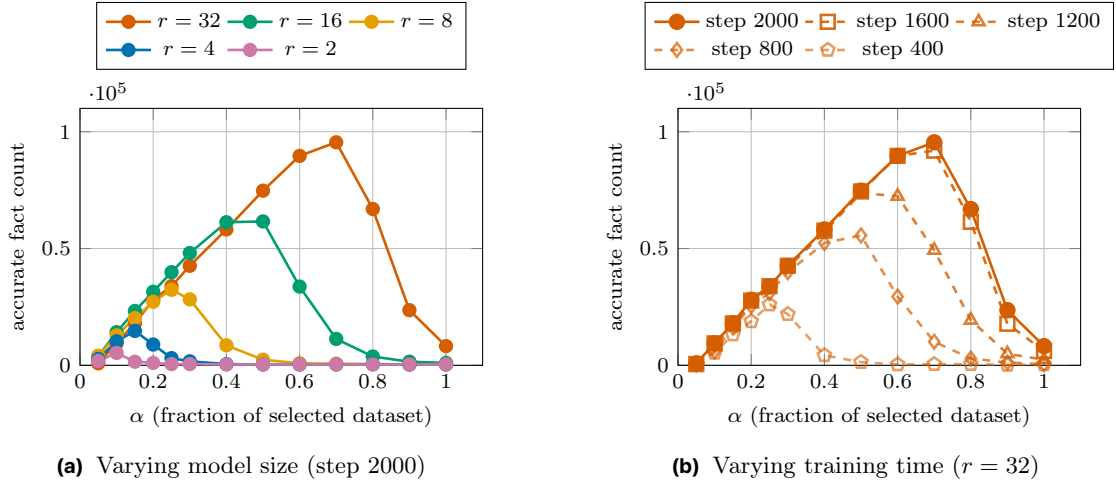
\begin{figure}[htbp]
  \centering

  \begin{subfigure}[b]{0.42\textwidth}
    \centering
    \begin{tikzpicture}
      \begin{axis}[
        width=\textwidth,
        height=5cm,
        xlabel={$\alpha$ (fraction of selected dataset)},
        ylabel={accurate fact count},
        legend style={at={(0.5,1.15)}, anchor=south, legend columns=3,
        font=\footnotesize},
        grid=major,
        xmin=0, xmax=1.1,
        ymin=0, ymax=110000,
        xlabel style={font=\footnotesize},
        ylabel style={font=\footnotesize},
        title style={font=\footnotesize},
        tick label style={font=\footnotesize},
        mark size=2.2pt,
        mark options={solid},
        shader=flat,
      ] 
        
        \addplot[
          color={rgb,255:red,213;green,94;blue,0},
          mark=*,
          mark options={solid, fill={rgb,255:red,213;green,94;blue,0}},
          line width=1pt,
        ] table[col sep=comma, x=alpha, y=best_perf_times_size]
          {log_parallel_jobs_no_wait_by_taskname/tables_alpha/arxivpapers_extensive_seqrho1_lora32_171104.csv};

        \addplot[
          color={rgb,255:red,0;green,158;blue,115},
          mark=*,
          mark options={solid, fill={rgb,255:red,0;green,158;blue,115}},
          line width=1pt,
        ] table[col sep=comma, x=alpha, y=best_perf_times_size]
          {log_parallel_jobs_no_wait_by_taskname/tables_alpha/arxivpapers_extensive_seqrho1_lora16_171104.csv};

        \addplot[
          color={rgb,255:red,230;green,159;blue,0},
          mark=*,
          mark options={solid, fill={rgb,255:red,230;green,159;blue,0}},
          line width=1pt,
        ] table[col sep=comma, x=alpha, y=best_perf_times_size]
          {log_parallel_jobs_no_wait_by_taskname/tables_alpha/arxivpapers_extensive_seqrho1_lora8_171104.csv};

        \addplot[
          color={rgb,255:red,0;green,114;blue,178},
          mark=*,
          mark options={solid, fill={rgb,255:red,0;green,114;blue,178}},
          line width=1pt,
        ] table[col sep=comma, x=alpha, y=best_perf_times_size]
          {log_parallel_jobs_no_wait_by_taskname/tables_alpha/arxivpapers_extensive_seqrho1_lora4_171104.csv};

        \addplot[
          color={rgb,255:red,204;green,121;blue,167},
          mark=*,
          mark options={solid, fill={rgb,255:red,204;green,121;blue,167}},
          line width=1pt,
        ] table[col sep=comma, x=alpha, y=best_perf_times_size]
          {log_parallel_jobs_no_wait_by_taskname/tables_alpha/arxivpapers_extensive_seqrho1_lora2_171104.csv};

        \legend{$r=32$, $r=16$, $r=8$, $r=4$, $r=2$}
      \end{axis}
    \end{tikzpicture}
    \subcaption{Varying model size (step 2000)}\label{fig:alpha_model_size}
  \end{subfigure}
  \hspace{1cm}
  \begin{subfigure}[b]{0.42\textwidth}
    \centering
    \begin{tikzpicture}
      \begin{axis}[
        width=\textwidth,
        height=5cm,
        xlabel={$\alpha$ (fraction of selected dataset)},
        ylabel={accurate fact count},
        legend style={at={(0.5,1.15)}, anchor=south, legend columns=3,
        font=\footnotesize},
        grid=major,
        xmin=0, xmax=1.1,
        ymin=0, ymax=110000,
        xlabel style={font=\footnotesize},
        ylabel style={font=\footnotesize},
        title style={font=\footnotesize},
        tick label style={font=\footnotesize},
        mark size=2.6pt,
        mark options={solid},
        shader=flat,
      ]

        \addplot[
          color={rgb,255:red,213;green,94;blue,0},
          mark=*,
          mark options={solid, fill={rgb,255:red,213;green,94;blue,0}},
          line width=1pt,
          opacity=1,
        ] table[col sep=comma, x=alpha, y expr=\thisrow{performance} * 171104]
          {log_parallel_jobs_arxiv_extensive_1k_early_ckpts/qjwfzzbcuh_ckpt2000.csv};

        \addplot[
          color={rgb,255:red,213;green,94;blue,0},
          mark=square,
          mark options={solid, fill={rgb,255:red,213;green,94;blue,0}},
          line width=1pt,
          dashed,
          opacity=0.9,
        ] table[col sep=comma, x=alpha, y expr=\thisrow{performance} * 171104]
          {log_parallel_jobs_arxiv_extensive_1k_early_ckpts/qjwfzzbcuh_ckpt1600.csv};

        \addplot[
          color={rgb,255:red,213;green,94;blue,0},
          mark=triangle,
          mark options={solid, fill={rgb,255:red,213;green,94;blue,0}},
          line width=1pt,
          dashed,
          opacity=0.8,
        ] table[col sep=comma, x=alpha, y expr=\thisrow{performance} * 171104]
          {log_parallel_jobs_arxiv_extensive_1k_early_ckpts/qjwfzzbcuh_ckpt1200.csv};

        \addplot[
          color={rgb,255:red,213;green,94;blue,0},
          mark=diamond,
          mark options={solid, fill={rgb,255:red,213;green,94;blue,0}},
          line width=1pt,
          dashed,
          opacity=0.7,
        ] table[col sep=comma, x=alpha, y expr=\thisrow{performance} * 171104]
          {log_parallel_jobs_arxiv_extensive_1k_early_ckpts/qjwfzzbcuh_ckpt800.csv};

        \addplot[
          color={rgb,255:red,213;green,94;blue,0},
          mark=pentagon,
          mark options={solid, fill={rgb,255:red,213;green,94;blue,0}},
          line width=1pt,
          dashed,
          opacity=0.6,
        ] table[col sep=comma, x=alpha, y expr=\thisrow{performance} * 171104]
          {log_parallel_jobs_arxiv_extensive_1k_early_ckpts/qjwfzzbcuh_ckpt400.csv};

        \legend{step 2000, step 1600, step 1200, step 800, step 400}
      \end{axis}
    \end{tikzpicture}
    \subcaption{Varying training time ($r=32$)}\label{fig:alpha_training_time}
  \end{subfigure}

  \caption{Fact accuracy (in accurate fact count) versus dataset selection ratio $\alpha$ for LoRA
    finetuned model on arXiv-papers dataset. We vary the LoRA rank $r$ (left plot) to control model size, and vary the number of training steps (right plot).}
  \label{fig:memorization_alpha}
  \vspace{0.3cm}
\end{figure}